\documentclass[10pt,twocolumn,letterpaper]{article}

\usepackage[pagenumbers]{cvpr} %

\definecolor{cvprblue}{rgb}{0.21,0.49,0.74}
\definecolor{pink}{RGB}{156,128,166}
\definecolor{yellow}{RGB}{242,194,119}
\usepackage[pagebackref,breaklinks,colorlinks,allcolors=cvprblue]{hyperref}     
\usepackage{amsfonts}
\usepackage{overpic}
\usepackage{iac_pkg}
\usepackage{multirow} 
\usepackage{bbm}
\usepackage{amsthm} %
\usepackage{algorithm,algorithmic}
\usepackage{enumitem}

\usepackage{graphicx}
\usepackage{booktabs}
\usepackage{mwe} %

\usepackage{multirow}

\usepackage[x11names,table,dvipsnames]{xcolor} %
\colorlet{lightgray}{Azure3!35!white}
\newcolumntype{C}{>{\centering\arraybackslash}p{0.65cm}}
\usepackage{lipsum}
\usepackage{nicefrac}
\usepackage{makecell}

\usepackage{arydshln}

\DeclareMathOperator*{\argmin}{argmin}

\newcommand{\reals}{{\mathbb R}}
\renewenvironment{proof}[1][Proof: ]{\noindent \textbf{#1}}{\qed\medskip}
\newtheorem{theorem}{Theorem}[section]
\newtheorem{remark}{Remark}[section]

\newcommand{\softmax}{\text{SoftMax}}
\newcommand{\bg}{\mathbf{g}}
\newcommand{\bv}{\mathbf{v}}

\newcommand{\btheta}{{\boldsymbol{\theta}}}
\newcommand{\of}{\overline{f}}
\newcommand{\onefunc}{\mathbbm{1}}
\newcommand{\inner}[1]{\langle #1 \rangle}

\definecolor{cvprblue}{rgb}{0.21,0.49,0.74}
\definecolor{pink}{RGB}{156,128,166}
\definecolor{yellow}{RGB}{242,194,119}
\usepackage[pagebackref,breaklinks,colorlinks,allcolors=cvprblue]{hyperref}     
\usepackage{amsfonts}
\usepackage{overpic}
\usepackage{iac_pkg}
\usepackage{multirow} 
\usepackage{bbm}
\usepackage{amsthm} %
\usepackage{algorithm,algorithmic}
\usepackage{enumitem}

\usepackage{graphicx}
\usepackage{booktabs}
\usepackage{mwe} %

\usepackage{graphicx}
\usepackage{multirow}
\usepackage[table]{xcolor}
\usepackage{colortbl}
\usepackage{tabularx}

\usepackage[utf8]{inputenc}

\usepackage[accsupp]{axessibility}

\usepackage{multirow}

\renewenvironment{proof}[1][Proof: ]{\noindent \textbf{#1}}{\qed\medskip}

\colorlet{cianoChiaro}{LightCyan3!20!white}
\colorlet{cianoVeryChiaro}{LightCyan4!80!white}

\newcommand{\smallred}[1]{\textcolor{IndianRed4}{(-#1)}}
\newcommand{\smallblue}[1]{\textcolor{Green4}{(+#1)}}

\newcolumntype{G}{>{\columncolor{lightgray}}c}

\def\paperID{2531} %
\def\confName{CVPR}
\def\confYear{2026}

\title{A Provable Energy-Guided Test-Time Defense\\Boosting Adversarial Robustness of Large Vision-Language Models}
\author{
    Mujtaba Hussain Mirza$^{1}$ \quad 
    Antonio D’Orazio$^{1}$ \quad 
    Odelia Melamed$^{2}$ \quad 
    Iacopo Masi$^{1}$ \\[0.5em]
    $^{1}$OmnAI Lab, Computer Science Department, Sapienza University of Rome, Italy \\
    $^{2}$Weizmann Institute of Science, Israel \\
    {\tt\small \{mirza, dorazio, masi\}@di.uniroma1.it}, \quad {\tt\small odelia.melamed@weizmann.ac.il}
}

\begin{document}
\maketitle
\begin{abstract}
Despite the rapid progress in multimodal models and Large Visual-Language Models (LVLM), they remain highly susceptible to adversarial perturbations, raising serious concerns about their reliability in real-world use. While adversarial training has become the leading paradigm for building models that are robust to adversarial attacks, Test-Time Transformations (TTT) have emerged as a promising strategy to boost robustness at inference.
In light of this, we propose \textbf{Energy-Guided Test-Time Transformation (\etthree)}, a lightweight, training-free defense that enhances the robustness by minimizing the energy of the input samples.
Our method is grounded in a theory that proves our transformation succeeds in classification under reasonable assumptions. We present extensive experiments demonstrating that \etthree provides a strong defense for classifiers, zero-shot classification with CLIP, and also for boosting the robustness of LVLMs in tasks such as Image Captioning and Visual Question Answering. Code is available on \href{https://github.com/OmnAI-Lab/Energy-Guided-Test-Time-Defense}{GitHub}.

\end{abstract}

\section{Introduction} \label{sec:introduction} 

Large vision–language models (LVLMs) have achieved remarkable progress, demonstrating strong multimodal reasoning and zero-shot generalization across tasks such as image captioning, visual question answering, and open-world recognition. Recent LVLMs such as LLaVA \cite{liu2023visual}, OpenFlamingo \cite{awadalla2023openflamingo}, and Qwen‑VL \cite{Qwen-VL} present excellent performance; however, despite their broad generalization, they are still vulnerable to attacks to the visual modality~\cite{qi2024visual,carlini2023aligned}, potentially compromising downstream LVLM tasks. 

Crucially, many LVLMs rely on the visual encoder from CLIP~\cite{radford2021learning}, which provides exceptional flexibility and generalization across diverse visual inputs, but also forms the primary source of vulnerability to image-based adversarial attacks: unnoticeable, carefully crafted perturbations to input images can induce incorrect predictions~\cite{szegedy2014, goodfellow2014explaining}.

\begin{figure}[t]
\centering
\includegraphics[width=\columnwidth]{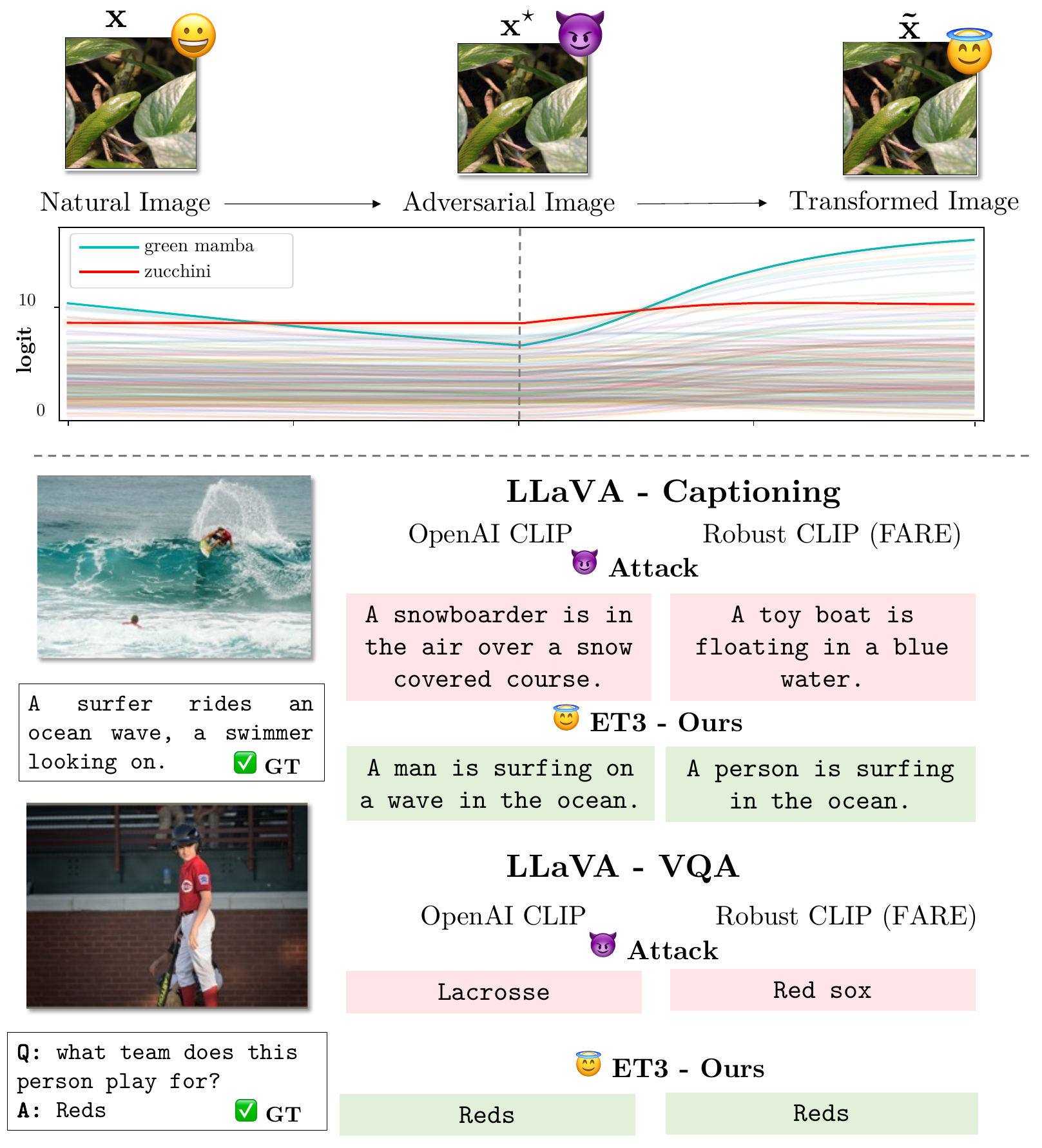}
\caption{\emph{(top)} Presenting a natural image green mamba $\bx$, and its adversarial image $\bxa$ mistakenly classified as a zucchini by a robust classifier $f_\theta$. Given only $\bxa$ and $f_\theta$, our
\etthree test-time defense 
produces the correctly classified $\bxp$, thereby boosting adversarial robustness. The plot illustrates the logit's change: even though the ground-truth class is \emph{not} the second best, \etthree still recovers it. \emph{(bottom)} \etthree boosts robust accuracy of Large VLMs like LLaVa~\cite{liu2023visual} using standard or even Robust CLIP~\cite{schlarmann2024robust} on both image captioning and Visual QA. {\footnotesize \textit{{Note in VQA example, the Reds team is a different team from the Red Sox, and refers to the Cincinnati Reds team}}}. 
}
\label{fig:teaser}
\end{figure}
\noindent During the past decade, a plethora of defenses have been proposed to improve adversarial robustness, with adversarial training (AT) \cite{madry2017towards} emerging as a leading efficient yet time-consuming paradigm. Despite this progress, AT-trained models still exhibit a substantial gap between their performance on clean and perturbed inputs, particularly under strong or previously unseen attacks. In contrast, another line of work focuses on test-time defenses that achieve robustness at inference. Within this class, adversarial purification methods~\cite{song2018pixeldefend} attempt to denoise and remove adversarial perturbations using auxiliary generative models, while Randomized Smoothing (RS)~\cite{cohen2019certified} certifies robustness by averaging predictions over noise-perturbed inputs. 

More recently, test-time transformations~(TTT)~\cite{sheng2025r, Xing_2025_CVPR} have been proposed to improve classification without explicitly removing perturbations, instead adaptively transforming inputs dynamically at test time. Compared to adversarial training, these approaches defend against previously unseen threats in a plug-and-play manner without requiring model retraining. However, they often incur significant inference overhead, may rely on additional models, and sometimes struggle to scale to stronger attacks. They can also be computationally expensive, as they typically require optimizing additional tokens or parameters or generating augmented views for each sample. In this work, we present our novel and efficient test-time defense \textbf{\emph{\etthree}}, which demonstrates superior robustness achieved without such costly adaptation. The $\emph{\etthree}$ defense transforms the image at inference to boost robustness of CLIP while remaining lightweight and computationally efficient. Furthermore, our approach can be easily incorporated with existing methods to further improve their robustness significantly, incurring only a slight higher inference cost.

$\etthree$ is inspired by energy-based models (EBMs) that were previously linked to robustness \cite{grathwohl2019your} and aims to lower the energy of a given input. Requiring extremely short optimization through the vision encoder, the defense is fast and lightweight, having a very minor overhead over the Visual LVM inference time. \cref{fig:teaser} presents a few qualitative examples of the \etthree transformation improving the robustness of classification, image captioning, and VQA.
The $\etthree$ boosts robustness for a broad range of models and tasks. In our experiments, we demonstrate increased robustness for zero-shot classification using CLIP-based vision language models and for robust image classifiers on the ImageNet dataset. We also show a significant boost in robustness in downstream tasks for LVLMs, even with a single optimization step through the CLIP encoder alone, improving robustness without noticeably impacting inference speed. Finally, we show our approach is robust against adaptive attacks.

To further support the $\emph{\etthree}$ defense, we theoretically prove its effectiveness on binary classifiers. For a binary classifiers satisfying few assumptions,
we theoretically prove that an input transformed by $\etthree$ will be correctly classified. We later provide experimental evidence of a robust model, along with examples of classifiers that have been theoretically proven to be robust in several different settings \cite{vardi2022gradient,frei2023double,pmlr-v267-min25c,min2024can}, satisfying these assumptions.

\section{Related Work} \label{sec:relatedwork}

\minisection{Adversarial Attacks} Adversarial examples are slightly perturbed inputs to induce erroneous predictions or behaviors in deep neural networks. As various methods exist to create such perturbations~\cite{goodfellow2014explaining,madry2017towards,moosavi2016deepfool,andriushchenko2020square}, within the domain of LVLMs, a growing body of work has started examining their susceptibility to adversarial attacks~\cite{qi2024visual,carlini2023aligned,schlarmann2023adversarial}.

\minisection{Adversarial Defense} Among many defense strategies, adversarial training (AT)—which incorporates adversarial examples during training—remains the most effective empirical approach. AT has also been extended to vision language models (VLM)~\cite{mao2023understanding,schlarmann2024robust,wang2024pre, pulfer2024robustness, rocamora2025robustnessdomainsclipneeds} and diffusion models~\cite{rosaria2025adversarial}. In VLM, AT addresses vulnerabilities arising from multi-modal inputs by improving robustness to perturbations in either the visual or textual components. 
Beyond training-based defenses, offline prompt tuning methods learn a set of input tokens using a training dataset, optimizing them offline to improve accuracy or robustness without updating the backbone \cite{mao2023understanding, li2024apt, zhang2024adversarial}. Test-time defenses constitute another direction in which the model itself is not trained to be robust. Instead, robustness is achieved at inference time through additional models or mechanisms. Within this broader class, Randomized Smoothing (RS) comprises a family of methods that certify robustness by averaging predictions over noise-perturbed inputs \cite{lecuyer2019certified,cohen2019certified}, with various extensions proposed in recent works \cite{alfarra2022data, lyu2024adaptive}. Another approach is adversarial purification, which uses an auxiliary generative model to denoise the test image, removing adversarial perturbations while remaining close to the original image~\cite{ song2018pixeldefend, hillstochastic, yoon2021adversarial, nie2022diffusion}. In another line of work, the input is adaptively transformed at the test time~\cite{Perez_2021_ICCV, 10943954, alfarra2022combating,wu2021attacking,Xing_2025_CVPR}, aiming for correct classification without explicitly removing adversarial perturbations or constraining a similarity to the original image, as in~\cite{10943954,Xing_2025_CVPR}.

\minisection{Test-Time Transformations for VL tasks} Test-time adaptation methods~\cite{sun2020test, wang2021tent, gandelsman2022test, sun2025learning, durasov2025it} aims to adapt models to the test data at inference time to further improve their performance. For Vision-Language tasks,
these methods usually aim to improve the robustness or generalization
by altering the prompt, adjusting the embedding, or leveraging test-time augmentations. \cite{shu2022test} introduces Test-Time Prompt Tuning (TPT), which optimizes a text prompt
by minimizing prediction entropy across augmented views.
While training-free and effective under distribution shift,
its multiple augmentations cause significant inference overhead and reduced clean accuracy.
C-TPT~\cite{yoon2024ctpt} mitigates this issue by introducing a calibration term that maximizes text-feature dispersion.
R-TPT \cite{sheng2025r}
targets adversarial robustness, optimizing the prompt against adversarial perturbations.
\cite{zanella2024test} proposes MTA, which aggregates the embeddings of multiple augmentations using a robust mean-shift procedure.
However, all of these methods introduce substantial inference overhead.
\cite{Xing_2025_CVPR} introduces TTC, which improves zero-shot adversarial robustness by applying a gradient-based perturbation that maximizes the distance between the adversarial input and its clean embedding in CLIP's feature space, yet offers limited robustness gains.

\minisection{Adversarially Robust Models and EBM} 
Energy-based models (EBMs)~\cite{lecun2006tutorial} are generative models that learn an energy function $E(\bx)$ which assigns low energy values to inputs $\bx$ in the data distribution and high energy values to other inputs. Intuitively, lower energy corresponds to samples that align well with the learned data distribution (on-manifold), while higher energy indicates atypical or out-of-distribution inputs (off-manifold). This conceptual perspective is particularly relevant for adversarial examples,  which are known to deviate from the natural data manifold and typically go off the manifold of natural data~\cite{stutz2019disentangling,shamir2021dimpled,mirza2025understanding}. 

Bridging this generative perspective with discriminative tasks, several studies ~\cite{4270060, xie2016theory, du2019implicit} explore the link between generative models such as EBMs and discriminative models. This connection is formalized in the Joint Energy-based Model (JEM)~\cite{grathwohl2019your}, which reformulates the softmax classifier within an energy-based framework. This formulation allows the classifier's output as energy can enable OOD detection~\cite{liu2020energy}. The connection between learning an energy objective and robustness has been demonstrated in both ways, showing that the addition of EBM objectives to training increases robustness~\cite{grathwohl2019your}, and performing adversarial training has been shown to implicitly learn an EBM~\cite{yin2022learning,zhu2021towards}. Incorporating explicit energy-based objectives has also been shown to further enhance both robustness and generative performance~\cite{mirza2024shedding,10485467,jiang2025your,yin2025joint}. The generative aspects of robust classifiers have also been explored in other studies~\cite{wang2022aunified,yang2023towards,yang2023mebm,rouhsedaghat2022magic}.

\minisection{Theoretically Proven Robustness} There has been a great effort in the theoretical research to characterize robustness guarantees and measure the robustness of a trained classifier. Starting in \cite{shamir2019simple}, many have tried to theoretically show adversarial vulnerability. It had been shown for random networks with different natural architectures in \cite{daniely2020most, bartlett2021adversarial,bubeck2019adversarial,bubeck2021single,montanari2022adversarial}. Later, in \cite{vardi2022gradient,frei2023double} it had been shown that gradient-based training converge to a non-robust networks while robust network do exist, while even offering a concrete robust classifier as an example. In \cite{pmlr-v267-min25c} and \cite{min2024can} the author discuss a different activation function, polynomial ReLU, and prove that training such model converges to a robust classifier.

\section{Method}\label{sec:method}

We introduce an \emph{energy-based test-time transformation}, \etthree, grounded in the energy-based modeling (EBM) perspective of discriminative classifiers. Unlike other test-time defense approaches that train an explicit generative EBM~\cite{hillstochastic} or auxiliary diffusion/score models~\cite{song2018pixeldefend}, \etthree requires no additional model training. It operates directly on a pre-trained classifier or visual encoder $f_\theta$ that we aim to defend. This is possible because a standard softmax classifier can itself be viewed through the EBM lens, by interpreting its logits as energies~\cite{grathwohl2019your}. At its core, \etthree applies a lightweight transformation that simply decreases this logit-derived energy for a given input. Our focus on robust models stems from the tight connection between energy-based and robustness objectives. For instance, adding an explicit EBM term during training can enhance adversarial robustness~\cite{grathwohl2019your}. Conversely, adversarial training has been shown to implicitly induce an EBM with smoother local energy landscapes~\cite{yin2022learning,zhu2021towards,mirza2024shedding}. For clean inputs, \etthree either preserves the original prediction or can even increase the model's confidence. For adversarial inputs, it guides the sample back toward the correct classification, effectively enhancing the model's robustness.

\subsection{Energy Definition}
\noindent Consider a set of $d$-dimensional labeled images $X = \{ (\bx,y) \sim \mathcal{D} |~\bx \in \reals^{d}$ and $ y \in \{1,..,K\} \}$, let $f_\theta(\bx):\reals^{d} \rightarrow \reals^{K}$ be a $K$-class classifier or an encoder parameterized by $\theta$ trained on a dataset $X$. For a sample $\bx \in \reals^d$, we denote by $f_\theta(\bx)_k$ the $k$-th coordinate of the output logits vector $f_\theta(\bx) \in \reals^K$.
We define the energy at $\bx$ as:
\begin{equation}
E(\bx) = -\log \Bigl(\sum_{k=1}^{K} \exp\bigl(f_\theta(\bx)_k\bigr)\Bigr),
\label{eq:energy_def}
\end{equation}
which is the negative \texttt{LogSumExp} of the output logits.

\begin{figure}[tb]
    \centering
    \begin{overpic}[width=\linewidth]{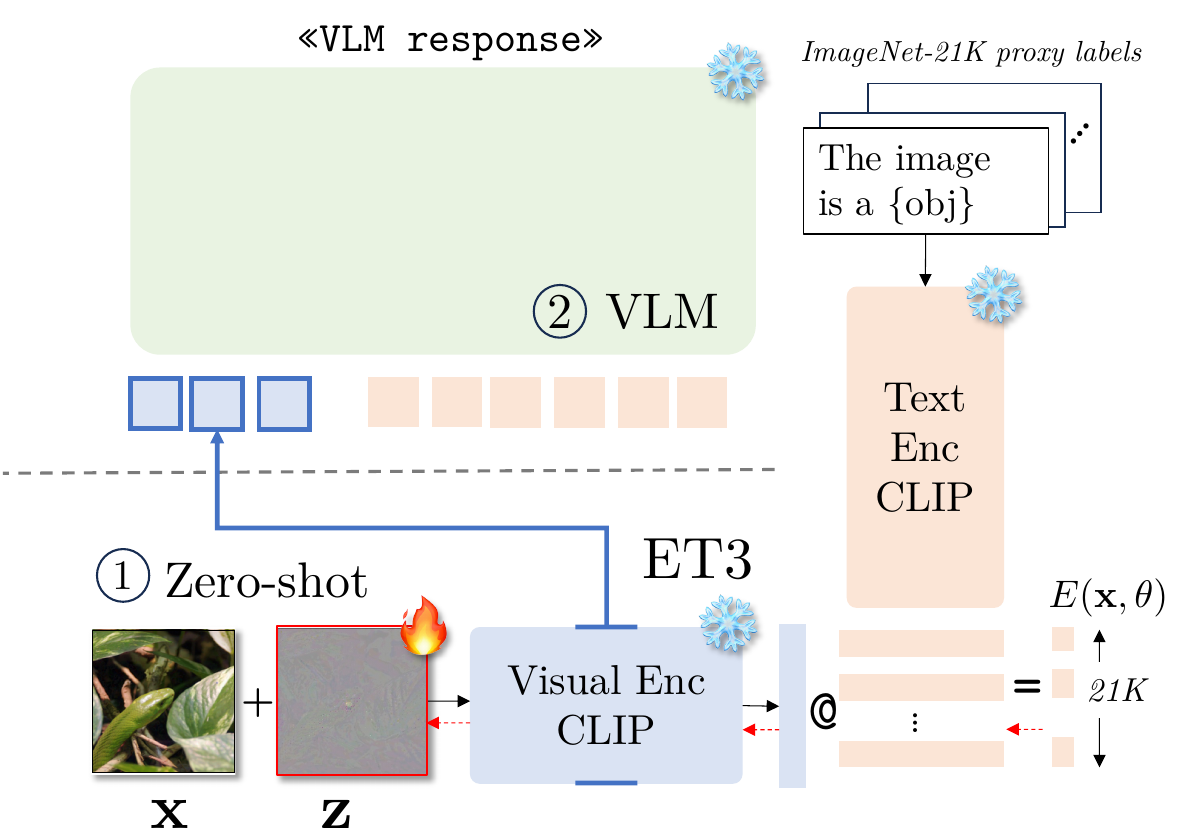}
    \end{overpic}
    \caption{\pointone~ET3 transforms the natural image $\bx$ adding a small perturbation $\bz$ optimized to lower the energy wrt to ImageNet-$21k$ proxy classes and concepts. This allows robust zero-shot classification; \pointtwo~the transformed image transfers and protects Large VLM, thereby increasing their robustness. The VLM is \emph{not} used in the optimization, and the optimized image simply transfers to VLM by using the internal representation of the visual encoder.}
    \label{fig:fig3}
\end{figure}

\subsection{The \etthree Defense}
Inspired by the connection between the energy and the perception of the data distribution by $f_
\theta$, we present a test-time defense based on energy minimization.
Given a test image $\bx$ and a defense radius $\epsilon$, our goal is to obtain a modified image $\bxp = \bx + \bz$ by iteratively minimizing $E(\cdot,\theta)$. Concretely, we solve the constraint minimization
$\bxp = \argmin_{\bxa \in \mathcal{B}_\epsilon(\bx)} E(\bxa)$,
where $\mathcal{B}_\epsilon(\bx) = \{\bxa : \norm{\bxa - \bx} \leq \epsilon\}$, using a gradient-based multi-step optimization for $T$ steps. Formally, in each step $t \in [1,...,T]$ we calculate:
\begin{equation}
\bx^{(t)} = \Pi_{\mathcal{B}_\epsilon(\bx)}\Bigl(\bx^{(t-1)} - \alpha \nabla_{\bx} E\bigl(\bx^{(t-1)}\bigr)\Bigr),
\label{eq:purification_update}
\end{equation}
starting from $\bx^{(0)} = \bx$. The step size $\alpha$ and number of iterations $T$ are hyperparameters, yet we found our method is fast and works for fewer steps.
The projection $\Pi_{\mathcal{B}_\epsilon(\bx)}(\cdot)$ enforces the defense perturbation in a $\ell_2$ ball.

\subsection{Extending \etthree to Vision-Language Tasks}
We extend \etthree to zero-shot classification and vision-language tasks. The core principle remains the same: minimize the energy function defined in \cref{eq:energy_def} by refining the input image at test time. \cref{fig:fig3} depicts how we apply \etthree to both zero-shot classification and VLMs.

\minisection{Zero-Shot Classification with CLIP} For zero-shot classification, we apply \etthree to CLIP models, where classification is performed by computing similarity scores between image and text embeddings, which serve as logits for \etthree. To compute energy, we consider two distinct label configurations: a refined subset of labels or a vast set of labels, such as the full ImageNet‑21K label set~\cite{ridnik2021imagenet21k}, yielding slightly improved robustness. We primarily adopt the latter, 
with further details on label-set selection provided in App.~\ref{app:label-set_guide}.

\minisection{Large Vision-Language Models (LVLMs)}
For multimodal tasks such as image captioning and visual question answering, we apply \etthree to the LLaVA model. Similar to the zero-shot setting, \etthree optimizes the visual input at test time by minimizing its energy through the CLIP vision encoder. After refinement, the visual embeddings from the vision encoder are passed to LLaVA's projection layer, and the subsequent language generation process remains unchanged. \etthree enhances robustness specifically in the vision modality, even without modifying or fine-tuning the vision encoder to improve its robustness. 
To further support \etthree, next we prove that the $\etthree$ defense method boosts robustness for a binary classifier, where it provably transforms a clean or adversarial sample into a correctly classified input.

\begin{figure*}[t]
    \begin{overpic}[width=\textwidth]{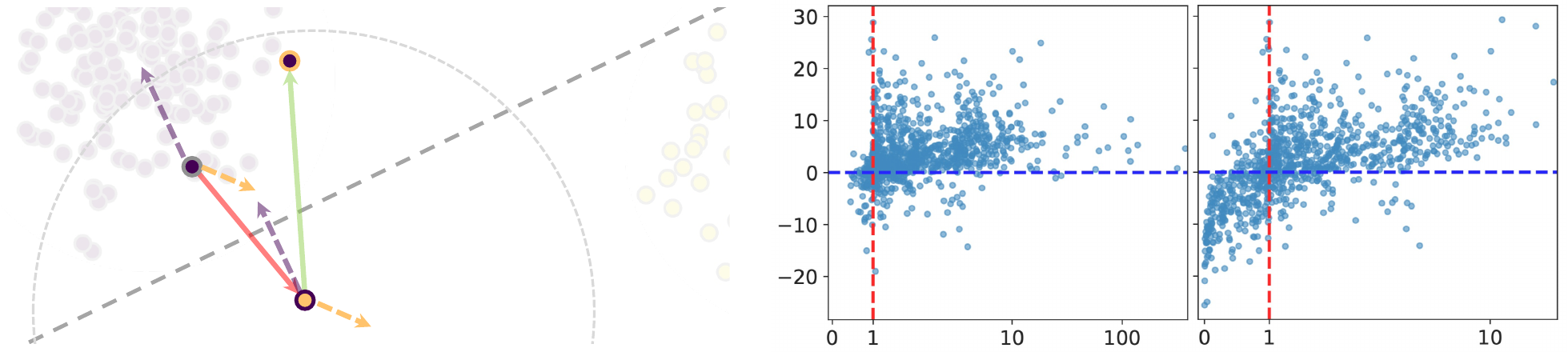}
    \put(3,2.5){\rotatebox{30}{{\small Class ${y_t}$}}}
    \put(4.25,-.5){\rotatebox{30}{{\small Class $\hat{y_t}$}}}
    \put(11,9.5){$\bx$}
    \put(18,1){$\bxa$}
    \put(20,19){$\tilde{\bx}$}
    \put(11,17){\resizebox{20pt}{!}{\textcolor{pink}{$e_1 \bg_1$}}}
    \put(13,12){\resizebox{20pt}{!}{\textcolor{yellow}{$e_0 \bg_0$}}}
    \put(58,23){{\small Natural Images}}
    \put(80,23){{\small Adversarial Images}}
    \put(58,-1.25){{\small The gradient ratio $C$}}
    \put(80,-1.25){{\small The gradient ratio $C$}}
    \put(47.5,2){\rotatebox{90}{{\small Logit margin after our \etthree}}}
    \end{overpic}
    \caption{\emph{(left)} 
    The \etthree defense transformation for adversarial examples. Assuming local linearity of the model in the defense neighborhood $\mathcal{B}_\epsilon(\bx)$, and a large enough ratio $C$ between the norms of the gradients of the energy through each class logit, $e_0 \bg_0$ and $e_1 \bg_1$. The adversarial attack, determined to reduce the ground truth logit, follows the negative direction of the larger gradient ($-\bg_1$), while our transformation follows its positive direction ($\bg_1$), increasing the ground truth logit and pulling the adversarial point back to its ground truth region. Both might also increase the other logit, corresponding to the smaller gradient $\bg_0$, that may introduce some smaller deviation.
    \emph{(right)} Scatter plot of the ratio between the gradients norms $C$ and the logit margin at the transform image $\bxp$ on ImageNet robust classifier for 1000 randomly sampled images from ImageNet. For most samples for which $C>1$, we can see that the purified image is correctly classified (logit margin $> 0$). One can see a correlation between the norms ratio and the logits difference of the transformed image.
    }
    \label{fig:figure2}
\end{figure*}

\section{\etthree defense provably boosts Robust Nets}\label{sec:provable_defense_robust}
In this section, we theoretically prove that for a binary classifier $f_\theta(\bx): \reals^d \rightarrow \reals^2$, under some assumptions, minimizing the energy with an $\epsilon$ budget will transform a given sample to be classified as its original ground truth class for both natural and adversarial samples.
First, for simplicity, we assume local linearity in an $\epsilon$-ball around the sample. We note that approximate local linearity has been proved and demonstrated before in the context of adversarial attacks and robust classifier~\cite{melamed2024malt}. Second, we assume that the gradient of the energy derived from the ground truth label is larger than the one derived from the opposite label. This assumption is particularly intriguing and arises from the vicinity of the adversarial sample and its corresponding source sample. Details about these assumptions in Remark~\ref{rem:proof_assump}.

\minisection{Data and Model} We consider $d$ dimensional data inputs $\bx \in \reals^d$, with unknown binary ground truth labels $y_t \in \{-1,1\}$. The incorrect label is $\hat{y_t} = - y_t$. Our model is a binary classifier with two output logits denoted by $f_\theta(\bx): \reals^d \rightarrow \reals^2$. We use a two-logit output to adapt to the multiclass classifier regime, for which each possible class relates to a logit. Similarly, for a given input $\bx \in \reals^d$ and its ground truth label $y_t \in \{-1,1\}$, 
if $f_\theta(\bx)_{y_t} > f_\theta(\bx)_{\hat{y_t}}$, then $\bx$ is correctly classified.

\begin{theorem}\label{thm:defense}
    Let $\bx \in \reals^d$ be a data sample, and $y_t$ be its ground truth label. Let $f_\theta : \reals^d \rightarrow \reals^2$ be a binary classifier such that it's locally linear in $\mathcal{B}_\epsilon(\bx)$. Denote $r_x = f_\theta(\bx)_{y_t} - f_\theta(\bx)_{\hat{y_t}}$, $\bg_i = \nabla_{\bx} f_\theta(\bx)_i$ and $e_i = \softmax\left(f\left(\bx\right)\right)_{i}$.
    Let $\epsilon>0$ a defense budget such that $\epsilon > \frac{-2 r_x}{\norm{\bg_{y_t}}}$. 
    Then, if 
  \[
C\norm{e_{\hat{y_t}} \bg_{\hat{y_t}}} < \norm{ e_{y_t} \bg_{y_t}}
\]  
for 
\[
C > \max \left\{\frac{ \exp(|r_x|) \epsilon \norm{\bg_{y_t}}}{\epsilon \norm{\bg_{y_t}} + 2 r_x}, 1 \right\}~,
\]
the $\etthree$ defense transformation $\bz$, parametrized by $T=1$ and $\alpha = \frac{\epsilon}{e_{y_t}  \left(1+ \frac{1}{C}\right) \norm{\bg_{y_t}}}$, will satisfy $\norm{\bz} \leq \epsilon$ and \[f_\theta(\bx+\bz)_{y_t} > f_\theta(\bx+\bz)_{\hat{y_t}}~.\]
\end{theorem}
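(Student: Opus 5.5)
The plan is a direct one-step computation that relies on the local linearity of $f_\theta$ in $\mathcal{B}_\epsilon(\bx)$. First, compute the energy gradient. Since $E(\bx) = -\log\sum_k \exp(f_\theta(\bx)_k)$, we get $\nabla_\bx E(\bx) = -\bigl(e_{y_t}\bg_{y_t} + e_{\hat{y_t}}\bg_{\hat{y_t}}\bigr)$. With $T=1$ and $\bx^{(0)}=\bx$, the unprojected step is therefore $\bz = \alpha\bigl(e_{y_t}\bg_{y_t} + e_{\hat{y_t}}\bg_{\hat{y_t}}\bigr)$. Write $\mathbf{a} = e_{y_t}\bg_{y_t}$ and $\mathbf{b} = e_{\hat{y_t}}\bg_{\hat{y_t}}$, so the hypothesis reads $\norm{\mathbf{b}} < \norm{\mathbf{a}}/C$. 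The triangle inequality gives $\norm{\bz} \le \alpha(\norm{\mathbf{a}} + \norm{\mathbf{b}}) < \alpha\, e_{y_t}\norm{\bg_{y_t}}(1+1/C) = \epsilon$, and this is exactly why $\alpha$ is chosen as stated. Hence the projection $\Pi_{\mathcal{B}_\epsilon(\bx)}$ is inactive, and the first claim $\norm{\bz}\le\epsilon$ holds.

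Next, use linearity on $\mathcal{B}_\epsilon(\bx)$ to write $f_\theta(\bx+\bz)_i = f_\theta(\bx)_i + \inner{\bg_i,\bz}$. The new margin is then
\[
r_{x+z} = r_x + \alpha\inner{\bg_{y_t}-\bg_{\hat{y_t}},\, \mathbf{a}+\mathbf{b}} = r_x + \alpha\Bigl(\tfrac{\norm{\mathbf{a}}^2}{e_{y_t}} + \inner{\mathbf{a},\mathbf{b}}\bigl(\tfrac{1}{e_{y_t}}-\tfrac{1}{e_{\hat{y_t}}}\bigr) - \tfrac{\norm{\mathbf{b}}^2}{e_{\hat{y_t}}}\Bigr).
\]
I would bound the cross term and the last term from below using Cauchy--Schwarz, $\norm{\mathbf{b}} < \norm{\mathbf{a}}/C$, and the softmax identity $e_{y_t}/e_{\hat{y_t}} = \exp(r_x) \le \exp(|r_x|)$. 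This identity lets every $1/e_{\hat{y_t}}$ be replaced by $\exp(|r_x|)/e_{y_t}$. After factoring out $\alpha\norm{\mathbf{a}}^2/e_{y_t} = \epsilon\norm{\bg_{y_t}}/(1+1/C)$, the increment becomes $\epsilon\norm{\bg_{y_t}}$ times an explicit function of $C$ and $\exp(|r_x|)$.

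The target intermediate inequality is
\[
r_{x+z} \;\ge\; r_x + \tfrac{1}{2}\epsilon\norm{\bg_{y_t}}\Bigl(1 - \tfrac{\exp(|r_x|)}{C}\Bigr).
\]
Given this bound, the conclusion is pure algebra. The right-hand side is positive iff $C(\epsilon\norm{\bg_{y_t}} + 2r_x) > \exp(|r_x|)\epsilon\norm{\bg_{y_t}}$. The budget assumption $\epsilon > -2r_x/\norm{\bg_{y_t}}$ makes $\epsilon\norm{\bg_{y_t}}+2r_x>0$, so dividing by it is legitimate, and the resulting inequality is exactly the stated lower bound on $C$. The requirement $C>1$ keeps the factor $(1+1/C)^{-1}$ at least $1/2$ and keeps the coefficient of $\epsilon\norm{\bg_{y_t}}$ positive whenever needed.

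The main obstacle is establishing the displayed lower bound with precisely the constants $\tfrac12$ and $\exp(|r_x|)/C$, in a way that works uniformly in the sign of $r_x$, i.e. for both natural and adversarial inputs. The cross term $\inner{\mathbf{a},\mathbf{b}}(1/e_{y_t} - 1/e_{\hat{y_t}})$ changes sign with $r_x$. I would therefore first split into the cases $r_x\ge0$ and $r_x<0$: in one case the cross-term coefficient has a favorable sign and can be bounded away, and in the other it is controlled by $\exp(|r_x|)\norm{\mathbf{a}}\norm{\mathbf{b}}/e_{y_t}$. In each case I would then use $C>1$ to absorb the $1/C^2$ term into the $1/C$ term and $(1+1/C)^{-1}\ge\tfrac12$ to reach the common form. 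If this crude bookkeeping loses a constant, the fallback is to carry the sharper expression $\epsilon\norm{\bg_{y_t}}\bigl[\tfrac{C-1}{C+1} - \tfrac{\exp(|r_x|)}{C}\bigr]$ and compare it directly with the stated threshold on $C$.
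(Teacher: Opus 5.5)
You correctly compute the energy gradient, bound $\norm{\bz}<\epsilon$ by the triangle inequality, and derive the exact margin identity, and your overall route is the paper's. The gap is the displayed target $r_{x+z}\ge r_x+\tfrac12\epsilon\norm{\bg_{y_t}}\bigl(1-e^{|r_x|}/C\bigr)$: for $r_x>0$ it is false, so no case split can produce it.

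Your split assumes the cross-term coefficient has a ``favorable sign'' in one case. But $\inner{\mathbf{a},\mathbf{b}}$ has no sign constraint, so in both cases Cauchy--Schwarz only gives $-\bigl\lvert 1/e_{y_t}-1/e_{\hat{y_t}}\bigr\rvert\,\norm{\mathbf{a}}\norm{\mathbf{b}}$. For $r_x\ge 0$ that bound is attained by $\mathbf{b}=\rho\,\mathbf{a}$ with $\rho>0$. There your identity becomes exactly $r_{x+z}=r_x+\epsilon\norm{\bg_{y_t}}\frac{(1+\rho)(1-\rho e^{r_x})}{1+1/C}$, and letting $\rho\uparrow 1/C$ gives
\[
r_{x+z}\;\to\; r_x+\epsilon\norm{\bg_{y_t}}\Bigl(1-\frac{e^{r_x}}{C}\Bigr),
\]
which is below your target whenever $C<e^{r_x}$. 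The step ``$(1+1/C)^{-1}\ge\tfrac12$'' only lower-bounds a product whose other factor is nonnegative, and here that factor is negative.

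Because the stated threshold has $+2r_x$ in its denominator, such $C$ are admissible, and the theorem itself fails. Take $f_\theta$ affine with $\norm{\bg_{y_t}}=1$, $\bg_{\hat{y_t}}=3.6\,\bg_{y_t}$, $r_x=2$, $\epsilon=1$, $C=2$. Then $\max\{e^2/5,1\}\approx1.48<C$ and $C\norm{e_{\hat{y_t}}\bg_{\hat{y_t}}}=7.2\,e_{\hat{y_t}}<e^{2}e_{\hat{y_t}}=\norm{e_{y_t}\bg_{y_t}}$, so all hypotheses hold. The step is $\bz=\tfrac23\bigl(1+3.6e^{-2}\bigr)\bg_{y_t}\approx0.9915\,\bg_{y_t}$, so $\norm{\bz}\le\epsilon$. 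Yet the new margin is $2+(1-3.6)(0.9915)\approx-0.58<0$, whereas your target predicts about $+0.65$. Your fallback $\epsilon\norm{\bg_{y_t}}\bigl[\frac{C-1}{C+1}-\frac{e^{|r_x|}}{C}\bigr]$ does not rescue the argument either. At $r_x=0$ it is nonpositive for $1<C\le1+\sqrt2$, a range where the statement holds, so it cannot yield the stated threshold.

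What your plan does prove is the case $r_x\le0$, i.e.\ the adversarial case. There the hypothesis forces $C>e^{|r_x|}$, because $\epsilon\norm{\bg_{y_t}}+2r_x\le\epsilon\norm{\bg_{y_t}}$. The cross-term coefficient has absolute value $(1-e^{r_x})/e_{y_t}$, so the bracket is at least $1-\frac{1-e^{r_x}}{C}-\frac{e^{r_x}}{C^2}\ge1-\frac1C$. Hence $r_{x+z}\ge r_x+\epsilon\norm{\bg_{y_t}}\frac{C-1}{C+1}\ge r_x+\tfrac12\epsilon\norm{\bg_{y_t}}\bigl(1-e^{|r_x|}/C\bigr)>0$. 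For $r_x>0$, the sharp sufficient condition your computation supports is $C>\frac{e^{r_x}\epsilon\norm{\bg_{y_t}}}{\epsilon\norm{\bg_{y_t}}+r_x}$, with $r_x$ rather than $2r_x$ in the denominator.

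Do not expect the paper's proof to supply the missing step. It reaches the same target by asserting $(e_0-e_1)\inner{\bg_0,\bg_1}\ge-(e_0-e_1)\norm{\bg_1}^2/C$ (with $y_t=1$), which holds only when $e_0\ge e_1$, i.e.\ $r_x\le0$. It then replaces $(1+1/C)^{-1}$ by $\tfrac12$ in front of a bracket that is negative in exactly the regime of the counterexample. The statement has to be restricted to $r_x\le0$ or its threshold on $C$ strengthened.
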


\minisection{Proof Idea} Formally, given a data point $\bx \in \reals^d$, we denote $\bg_i$, the gradient of the network's $i$-th logit w.r.t. the input and $e_i$ the gradient of the energy w.r.t. the $i$-th logit.
Then, the gradient of \texttt{LogSumExp} w.r.t. $\bx$ is
\begin{align*}
\nabla_{\bx} E(\bx) =& - \softmax\left(f_\theta(\bx)\right)^\top \nabla_{\bx} f_\theta(\bx) \\
=& - e_{-1} \bg_{-1} - e_1 \bg_1.
\end{align*}

Thus, the purification step is of the form $\bz = \alpha  \left( e_{-1} \bg_{-1} + e_1 \bg_1 \right)$. To see the effect of the transformation on each logit, we use the local linearity and look at the classifier as two locally linear functions $f_{-1},f_1:\reals^d\rightarrow\reals$, one for each logit. We observe that for $i \in \{-1,1\}$
\[f_i(\bx + \bz) = f_i(\bx) + \bz^\top \bg_i.\]
We complete the proof by using the ratio of the gradients' norms, $C$, to show that multiplying by the larger norm gradient, $\bz^\top \bg_{y_t}$, will create a larger effect, implying correct classification. Formally, we note that the energy function allows a relaxed assumption on the ratio of the gradients as:
\[
 \exp\big(f_\theta(\bx)_{\hat{y_t}} - f_\theta(\bx)_{y_t}\big) C \norm{\bg_{\hat{y_t}}} < \norm{\bg_{y_t}}~,
\]
that is adaptive to the logits of $f_\theta(\bx)$. When $\bx$ is classified correctly, we will have $f_\theta(\bx)_{\hat{y_t}} - f_\theta(\bx)_{y_t} \leq 0$ and thus $\exp\left(f_\theta(\bx)_{\hat{y_t}} - f_\theta(\bx)_{y_t}\right) \leq 1$ allowing a smaller ratio between the gradients to preserve correct classification. Otherwise, the assumed ratio will be weighted by the difference between the logits. The full proof provided in App.~\ref{app:proof}.

\begin{remark}[The Local Linearity and Gradient Norm Ratio Assumptions]\label{rem:proof_assump}

\minisection{In Practice} \emph{ In the right part of \cref{fig:figure2} we present on the $x$-axis, for each natural image of ImageNet and its respective adversarial example, the norm ratio of the gradients $e_{y_t} \bg_{y_t}$ and $e_{y_{\text{adv}}} \bg_{y_{\text{adv}}}$, for which we used the APGD attack \cite{croce2020reliable} and a robust classifier \cite{engstrom2019adversarial}. All inputs are then transformed using our method $\etthree$, and the difference between the resulting output logits is presented in the $y$-axis. One can see that many natural and adversarial example has high ratio values, i.e., $C \gg 1$. Moreover, one can see that transforming samples with a higher ratio $C$ results in a larger logit difference for the transformed image, indicating successful transformation. For the locally linearity assumption, previous works showed that robust networks are experimentally approximately locally linear \cite{melamed2024malt}.}

\minisection{In Theory} \emph{When proving robustness of a model, a helpful property is have an approximate locally linear model, allowing a more immediate upper bound on the effectiveness of a worst-case perturbation. To this end, and to also allow correct classification, one idea for a robust network construction will be to assign higher inner products between the data points and the local gradients corresponding to the ground truth label, and smaller inner product with the others, or assigning a sufficiently large bias. In previous works, \cite{vardi2022gradient, frei2023double} construct a robust network enforcing both local linearity with a large radius and a larger gradient of the ground truth label in the settings of almost orthogonal data. In \cite{pmlr-v267-min25c} and \cite{min2024can}, the network trained using polynomial ReLU is approximately locally linear and with a similar larger ground truth gradient. ~\cref{sec:theoryapp} gives more details on the robust construction from \cite{frei2023double} for which both assumptions hold, adapting the classic binary classifier to our two-logit regime.}
\end{remark}
We note that as there are no assumptions of $\bx$, the \etthree defense will be successful for both clean sampled and adversarial data input, yet the assumption are milder on a clean data sample. First, since $r_x > 0$ for a clean sample, the assumption on $\epsilon$ will hold for any $\epsilon>0$. Second, $\exp\left(f_\theta(\bx)_{\hat{y_t}} - f_\theta(\bx)_{y_t}\right)$ will be smaller than $1$, inducing a relaxed lower bound for the ratio $C$. 

This theorem extends to the case of multi-step transformation within the same budget. Since the classifier is locally linear, each step to minimize the energy, will still increase the ground truth logit and the gap between it and the other logit. In other words, the theorem shows that 
\[
f_\theta(\bx+\bz)_{y_t} - f_\theta(\bx+\bz)_{\hat{y_t}} >  f_\theta(\bx)_{y_t} - f_\theta(\bx)_{\hat{y_t}}~
\]
for a transformation $\bz$ of any size. Therefore, the new input $\bx+\bz$ will preserve the assumptions on $C$ and $\epsilon$, allowing a multi-step attack with the same budget $\epsilon$ to take multiple small gradient steps till accumulated to a sufficient size.

\subsection{Theoretical Application}\label{sec:theoryapp}
We look at the construction for a robust two-layer ReLU network, stated in \cite{frei2023double}. 

\minisection{Data} We use the following data distribution $\mathcal{D}_{\text{clusters}}$ on $\reals^d \times \{-1,1\}$ in which we have $k$ clusters with means $\mu^{(1)},...,\mu^{(k)} \in \reals^d$ and covariance $\sigma^2 I_d$, where examples in the $j$-th cluster are labeled $y^{(j)} \in \{-1,1\}$.

\minisection{Model} We adapt the two-layer binary classification network into our multi-class network. For a given binary classifier $f_\btheta(\bx) = \sum_{j\in J} v_j \sigma(\bw_j^T \bx + b_j)$,
we define a two-logits output classifier as following. We define $J_+ = \{j : v_j \geq 0 \}$ and $J_- = \{j : v_j < 0 \}$, and denote for $l \in \{-1,1\}$
\[
\of_\btheta(\bx)_{l} = l \sum_{j\in J_i} v_j \sigma(\bw_j^T \bx + b_j)~,
\]
We note that for any $\bx$, $ f_\btheta(\bx) \equiv \of_\btheta(\bx)_{1} - \of_\btheta(\bx)_{-1}$. 

In \cite{frei2023double}, the authors present a robust network, proving its robustness to a large $O(\sqrt{d})$ adversarial perturbation.

For the robust network of width $k$, they take for any $j \in [k]$, $v_j = y^{(j)}$ and $\bw_j = \frac{4\mu^{(j)}}{d}$ and $b_j = -2$ (for wider networks, i.e. $J>k$, we set the extra weights to be zero).

Let $(\bx,y) \in \mathcal{D}_{\text{clusters}}$, meaning that $\bx = \mu^{(q)} + \xi$ for some $\xi \sim \mathcal{N}(0, \sigma^2 I_d)$ and $y = y^{(q)}$. For the gradient ratio and local linearity assumptions, we look at the analysis for $\bw_j^T \bx + b_j $ for different $j \in [k]$, and get that for $j=q$ $\bw_q^T \bx + b_q > 1$, while for $j \neq q$ , $\bw_j^T \bx + b_q < -1 $. Thus, denoting $\onefunc_{\{A\}}$ as the indicator function for an event $A$, we have with high probability (w.h.p.)
\[ 
\bg_{y_t} = \frac{\partial \of_\theta(\bx)_{y_t}}{\partial \bx} = \sum_{j\in J_+}  \bw_j \onefunc_{\{\bw_j^T \bx + b_j \geq 0\}} = y\bw_q~,
\]
and $\bg_{\hat{y_t}} =0$, satisfying the ratio assumption. In Theorem 4.1 the authors show that w.h.p. over $\bx$, the network is locally linear in $\mathcal{B}_\epsilon(\bx)$ for the large $\epsilon \leq \frac{\sqrt{d}}{8}$.
Applying the $\etthree$ defense, we can see that the transformation will be $\bz = \alpha \bw_q$ for some $\alpha>0$, which will ensure correct classification for the transformed input.

\section{Experiments}

\colorlet{cianoChiaro}{LightCyan3!20!white}
\colorlet{cianoVeryChiaro}{LightCyan4!80!white}

\newcolumntype{J}{>{\columncolor{cianoChiaro}}c}
\begin{table*}[t!]
\caption{\textbf{Zero-shot robustness of \etthree across 14 benchmark datasets 
in the defense-unaware setting.} Comparison of clean and robust accuracy for 
baseline models versus same models augmented with \etthree. Robustness is evaluated 
against Auto-Attack (AA) at $\epsilon_a = 4/255$.}

    \centering
    \scriptsize
    \setlength{\tabcolsep}{1.8pt}
    \resizebox{\textwidth}{!}{
    \begin{tabular}{cc|>{\columncolor{lightgray}}Cc
    >{\columncolor{lightgray}}Cc
    >{\columncolor{lightgray}}Cc
    >{\columncolor{lightgray}}Cc
    >{\columncolor{lightgray}}Cc
    >{\columncolor{lightgray}}Cc
    >{\columncolor{lightgray}}Cc|cc}
    \toprule
    Model & Method & 
    \rotatebox[origin=c]{60}{ImageNet} & 
    \rotatebox[origin=c]{60}{CalTech} & 
    \rotatebox[origin=c]{60}{Cars} & 
    \rotatebox[origin=c]{60}{CIFAR10} & 
    \rotatebox[origin=c]{60}{CIFAR100} & 
    \rotatebox[origin=c]{60}{DTD} & 
    \rotatebox[origin=c]{60}{EuroSAT} & 
    \rotatebox[origin=c]{60}{FGVC} & 
    \rotatebox[origin=c]{60}{Flowers} & 
    \rotatebox[origin=c]{60}{ImageNet-R} & 
    \rotatebox[origin=c]{60}{ImageNet-S} & 
    \rotatebox[origin=c]{60}{PCAM} & 
    \rotatebox[origin=c]{60}{OxfordPets} &  
    \rotatebox[origin=c]{60}{STL-10} & 
    \rotatebox[origin=c]{60}{Avg.} & 
    \rotatebox[origin=c]{60}{\textbf{Improv.}}\\
    \midrule
    \multirow{4}{*}{\shortstack{ViT-L/14 \\(TeCoA)\\$\epsilon_t=4/255$}}
    & Base (Clean)& 74.91 & 78.36 & 37.83 & 79.61 & 50.26 & 38.03 & 22.48 & 11.76 & 38.41 & 74.35 & 54.22 & 49.95 & 76.07 & 93.44 & 55.69  & 
    \\
    & \textbf{+ ET3} (Clean) & 75.20 & 78.16 & 37.21 & 81.27 & 49.97 & 38.09 & 23.31 & 11.61 & 39.31 & 76.66 & 55.22 & 50.03 & 76.18 & 93.08 & 56.09 & \smallblue{0.4}\\
    \arrayrulecolor{cianoVeryChiaro}\cmidrule(lr){2-18}
    
    & Base (Robust) & 44.50 & 60.90 & 8.50 & 37.10 & 21.50 & 16.50 & 6.40 & 2.20 & 12.60 & 41.90 & 32.80 & 45.70 & 55.00 & 74.30 & 32.85  
    \\
    & \textbf{+ ET3} (Robust) & 53.00 & 65.10 & 13.50 & 56.40 & 34.50 & 22.70 & 15.80 & 4.40 & 22.50 & 51.90 & 40.00 & 51.50 & 60.90 & 80.80 & 40.93 & {\smallblue{8.08}}\\
    \arrayrulecolor{LightCyan4}\midrule
     \multirow{4}{*}{\shortstack{ViT-L/14 \\(FARE)\\$\epsilon_t=4/255$}}
    & Base (Clean) & 70.78 & 84.70 & 63.84 & 77.67 & 56.53 & 43.83 & 18.28 & 21.96 & 58.07 & 80.24 & 56.74 & 50.02 & 87.14 & 96.04 & 61.85    
    \\
    & \textbf{+ ET3} (Clean)& 70.97 & 84.73 & 62.96 & 80.58 & 55.91 & 43.94 & 18.93 & 21.96 & 58.86 & 82.49 & 57.25 & 50.02 & 86.59 & 96.17 & 62.24 & {\smallblue{0.39}} \\
    \arrayrulecolor{cianoVeryChiaro}\cmidrule(lr){2-18}
    
    & Base (Robust)& 34.80 & 64.20 & 12.70 & 34.80 & 20.20 & 17.50 & 11.10 & 3.00 & 12.20 & 40.50 & 30.60 & 52.30 & 50.60 & 74.30 & 32.77   
    \\
    & \textbf{+ ET3} (Robust) & 41.20 & 69.40 & 18.40 & 53.50 & 33.10 & 26.80 & 14.70 & 8.30 & 24.20 & 50.80 & 37.40 & 52.30 & 58.50 & 79.80 & 40.60 & {\smallblue{7.83}} \\
    \arrayrulecolor{LightCyan4}\midrule
    \multirow{4}{*}{\shortstack{ViT-L/14 \\(TeCoA)\\$\epsilon_t=2/255$}}
    & Base (Clean) & 80.11 & 80.67 & 50.08 & 87.53 & 60.69 & 44.36 & 26.06 & 14.04 & 51.80 & 80.12 & 58.43 & 49.89 & 80.02 & 96.08 & 61.42  
    \\
    & \textbf{+ ET3} (Clean) & 79.82 & 79.62 & 46.33 & 86.34 & 59.90 & 44.15 & 31.52 & 13.35 & 50.02 & 82.05 & 58.99 & 49.99 & 80.43 & 94.97 & 61.25 & {\smallred{-0.17}} \\
    \arrayrulecolor{cianoVeryChiaro}\cmidrule(lr){2-18}
    
    & Base (Robust) & 37.00 & 57.40 & 6.40 & 31.00 & 17.90 & 14.70 & 7.80 & 1.00 & 9.60 & 36.60 & 30.90 & 17.40 & 50.40 & 69.10 & 27.66  
    \\
    & \textbf{+ ET3} (Robust) & 44.40 & 63.10 & 14.10 & 49.20 & 32.20 & 23.40 & 24.00 & 4.70 & 20.50 & 46.00 & 38.40 & 47.10 & 58.40 & 75.40 & 38.64 & {\smallblue{10.98}}\\
    \midrule
     \multirow{4}{*}{\shortstack{ViT-L/14 \\(FARE)\\$\epsilon_t=2/255$}}
    & Base (Clean) & 74.48 & 84.77 & 70.53 & 89.52 & 69.13 & 50.05 & 25.39 & 26.70 & 70.60 & 85.52 & 59.72 & 50.01 & 91.06 & 98.47 & 67.57    
    \\
    & \textbf{+ ET3} (Clean) & 74.07 & 84.60 & 68.31 & 89.64 & 66.64 & 48.03 & 32.98 & 24.90 & 69.47 & 86.94 & 59.86 & 50.03 & 90.38 & 98.00 & 67.42 & {\smallred{-0.15} }\\
    \arrayrulecolor{cianoVeryChiaro}\cmidrule(lr){2-18}
    
      & Base (Robust) & 17.80 & 46.40 & 5.00 & 25.70 & 14.20 & 11.60 & 0.40 & 0.90 & 7.10 & 25.60 & 22.10 & 19.10 & 28.10 & 61.50 & 20.39    
      \\
    & \textbf{+ ET3} (Robust) & 25.20 & 56.40 & 12.20 & 43.90 & 28.70 & 21.90 & 25.30 & 6.50 & 16.00 & 35.10 & 29.40 & 35.90 & 38.90 & 69.10 & 31.75  & {\smallblue{11.36}}\\
    \arrayrulecolor{black}\bottomrule
    \end{tabular}
    } %
    \label{tab:tab1}
\end{table*}

\newcolumntype{G}{>{\columncolor{lightgray}}c}

\begin{table*}[t]

\caption{\textbf{Robustness of \etthree on fine-grained classification in the 
defense-unaware setting.} Comparison against test-time adaptation techniques 
across eight fine-grained datasets. All defenses are applied to a TeCoA 
pre-trained CLIP-ViT-B/32 model and evaluated against PGD-100 ($\epsilon = 4/255$). 
\etthree consistently improves robustness, both as a standalone method and when 
combined with other defenses.}

  \centering
  \setlength{\tabcolsep}{3.8pt}
  \resizebox{\linewidth}{!}{%
  \begin{tabular}{l
  G G c c G G c c G G c c G G c c  G G c}
    \toprule
    \multirow{2}{*}{Method} 
    & \multicolumn{2}{c}{\textbf{Caltech101}} 
    & \multicolumn{2}{c}{\textbf{Pets}} 
    & \multicolumn{2}{c}{\textbf{Cars}} 
    & \multicolumn{2}{c}{\textbf{Flower102}} 
    & \multicolumn{2}{c}{\textbf{Aircraft}} 
    & \multicolumn{2}{c}{\textbf{DTD}} 
    & \multicolumn{2}{c}{\textbf{EuroSAT}} 
    & \multicolumn{2}{c}{\textbf{UCF101}} 
    & \multicolumn{2}{c}{\textbf{Avg.}} 
    & \multicolumn{1}{c}{\textbf{Improv.}} \\
    
    & Acc. & Rob. & Acc. & Rob. & Acc. & Rob. & Acc. & Rob. 
    & Acc. & Rob. & Acc. & Rob. & Acc. & Rob. & Acc. & Rob. & Acc. & Rob. & Rob.\\
    \midrule
CLIP (Robust)
& 78.82 & 43.45 & 66.88 & 15.94 & 10.20 & 0.99 & 30.82 & 9.09  
& 6.60 & 0.45 & 24.53 & 10.70 & 14.53 & 10.78 & 34.58 & 6.71  & 33.37 & 12.26 &\\
\arrayrulecolor{LightCyan4}\midrule
\multicolumn{20}{c}{\textit{Lightweight Defense using Image Transformation (Vision Input Only)}}\\[-1pt]
\arrayrulecolor{LightCyan4}\midrule
TTC \cite{Xing_2025_CVPR}
& 71.26 & 44.90 & 65.00 & 21.10 & 10.62  & 1.39 & 26.02 & 8.93 
& 6.93 & 0.69 &  23.99 & 11.38 & 20.84 & \underline{\textbf{12.34}} & 32.91 & 10.45 & 32.20 & 13.90 &  \smallblue{1.64}\\

\textbf{\etthree} (ours)
& 79.07 & \underline{50.59} & 66.86 & \underline{27.15} & 10.32  & \underline{2.04} & 28.91 & \underline{12.10} 
& 5.28 & \underline{0.87} & 24.00 & \underline{13.18} & 13.37 & 11.17 & 37.35 & \underline{16.02} & 33.15 & \underline{16.51} & \smallblue{4.25}  \\
\arrayrulecolor{LightCyan4}\midrule
\multicolumn{20}{c}{\textit{Defenses using Multiple Augmentations 
(Vision Input Only)}}\\[-1pt]
\arrayrulecolor{LightCyan4}\midrule
Ensemble 
& 73.02 & 55.66 & 59.96 & 38.35 & 5.55  & 2.80 & 26.35 & 16.12
& 4.29 & 1.89 & 23.82 & 15.96 & 12.51 & 11.01 & 26.35 & 14.12 & 28.98 & 19.4 & \smallblue{7.14}\\

MTA \cite{zanella2024test} *
& 79.70 & 55.70 & 66.20 & 31.20 & 9.00  & 2.50 & 29.10 & 14.00 
& 6.50 & 1.60 & 24.40 & 13.50 & 13.30 & \underline{11.20} & 34.60 & 12.50 & 32.90 & 17.80 & \smallblue{5.54}\\

Ensemble + \textbf{\etthree} & 76.23 & \underline{61.38} & 62.03 & \underline{43.17} & 5.77  & \underline{3.03} & 26.07 & \underline{18.11}
& 3.93 & \underline{2.22} & 22.64 & \underline{17.55} & 11.88  & 11.14 & 32.62 & \underline{21.44} & 30.14 & \underline{22.26} & \smallblue{10}\\

\arrayrulecolor{LightCyan4}\midrule
\multicolumn{20}{c}{\textit{Defenses using Multiple Augmentations + Prompt Tuning (Vision + Text Input)}}\\[-1pt]
\arrayrulecolor{LightCyan4}\midrule

TPT \cite{shu2022test} *
& 79.30 & 52.70 & 65.20 & 27.40 & 9.60  & 2.00 & 27.90 & 12.30 
& {6.70} & 1.70 & 25.50 & 14.60 & 12.20 & 11.20 & 34.90 & 10.20 & 32.70 & 16.50 & \smallblue{4.24}\\

C-TPT \cite{yoon2024ctpt} *
& {79.80} & 47.30 & 66.10 & 19.50 & {10.60} & 1.30 & 29.40 & 10.70 
& 6.40 & 0.70 & {26.20} & 12.40 & 13.00 & 11.10 & {36.40} & 8.10  & {33.50} & 13.90 & \smallblue{1.64} \\

R-TPT \cite{sheng2025r} 
& 75.58 & 61.01 & 63.75 & 41.43 & 5.57  & 2.79 & 26.29 & 16.57 
& 5.73 & 2.46 & 25.59 & 18.09 & 11.46 & 11.30 & 31.48 & 17.63 & 30.68 & 21.41 & \smallblue{9.15}\\

R-TPT\textbf{+ \etthree}
& 79.27 & \textbf{65.07} & 63.86 & \textbf{45.71} & 6.21  & \textbf{3.27} & 26.51 & \textbf{18.72} 
& 5.85 & \textbf{3.51} & 26.42 & \textbf{19.92} & 11.37 & 11.16 & 36.93 & \textbf{24.72} & 32.05 & \textbf{23.88} &\smallblue{11.62} \\
\arrayrulecolor{black}\midrule
\end{tabular}
}
\label{tab:tab2}
\end{table*}

\label{sec:experiments}
We evaluate \etthree in two threat models and in three settings: zero-shot 
classification with CLIP, vision-language tasks with LVLMs, and classification with image classifiers. The threat models differ in the attacker's knowledge of the test-time defense: a defense-unaware setting, where we evaluate across all three settings, and a stronger defense-aware setting, 
where the attacker knows the defense exists and adapts the attack accordingly, for which we restrict evaluation to LVLMs and classifiers. For CLIP, improvements are observed both when \etthree is applied alone and when combined with complementary techniques such as test-time augmentation and test-time prompt tuning. Furthermore, the features obtained from the CLIP vision encoder obtained after applying \etthree consistently enhance the adversarial robustness of downstream LVLMs, demonstrating that \etthree yields consistent robustness gains across all settings.

\subsection{Evaluating Zero-Shot Classification} 
We perform zero‑shot classification using CLIP models with the robust vision encoders TeCoA \cite{mao2023understanding} and FARE \cite{schlarmann2024robust}, and evaluate across 15 benchmark datasets, full implementation details are in App.~\ref{app:implement_clip} provided in supp. material.

\minisection{Attack setup} Unless stated otherwise, following standard practice~\cite{schlarmann2024robust}, we use the two attacks from AutoAttack~\cite{croce2020reliable}: APGD-CE and APGD-DLR with 100 iterations each.

\minisection{Results} 
In \cref{tab:tab1} we show our test-time defense \etthree consistently enhances robust accuracy on adversarially robust models (TeCoA and FARE) across 14 datasets under attacks with a perturbation magnitude of $\epsilon_a = 4/255$. These models trained against attacks with either $\epsilon_t=2/255 \text{ or } 4/255$, where \etthree show substantial improvements even with the ``weaker'' models, which were trained against the weaker attacks. This trend is further illustrated in \cref{fig:et3-performance}, where \etthree improves robustness even for larger magnitude attacks---see App.~\ref{app:increased_attack_strength} for per-dataset results and additional details. 
Next, in \cref{tab:tab2} we show that \etthree surpasses similar defense methods in diverse defense scopes. Following the comparison standardization of the previous methods, we report results on a subset of 8 datasets.
In the \textbf{lightweight defense scope}, \etthree consistently improves robust accuracy over baseline CLIP model and TTC~\cite{Xing_2025_CVPR}, and surpasses even the slower TPT and C-TPT methods~\cite{shu2022test,yoon2024ctpt}, despite requiring no training.
Within the \textbf{multiple augmentation scope}, we show that an easy incorporation of the lightweight \etthree yields further gains to the ensemble that averages predictions across all augmented views. It surpassing MTA~\cite{zanella2024test} and matching the state-of-the-art robustness of R-TPT~\cite{sheng2025r}. In the last \textbf{augmentation-based methods with prompt-tuning scope}, we outperform existing methods by similarly combining \etthree with the former state-of-the-art R-TPT. Unless otherwise stated, the $\epsilon$ for \etthree is set to 5 for TeCoA and 4 for FARE, with the number of steps fixed at $T=2$.

\newcommand{\clip}{\textsc{CLIP}}
\newcommand{\openf}{\textsc{OpenFlamingo}}
\newcommand{\llava}{\textsc{LLaVA}}

\newcommand{\tecoatwo}{\textsc{TeCoA}$^{2}$}
\newcommand{\tecoafour}{\textsc{TeCoA}$^{4}$}
\newcommand{\faretwo}{\textsc{FARE}$^{2}$}
\newcommand{\farefour}{\textsc{FARE}$^{4}$}

\newcommand{\Tstrut}{\rule{0pt}{2.6ex}}
\newcommand{\Bstrut}{\rule[-0.9ex]{0pt}{0pt}}

\renewcommand{\arraystretch}{1.0}
 \definecolor{darkgray}{gray}{0.57}
\begin{table*}[t]
\centering
\caption{\textbf{Evaluation of ET3 on LLaVA 1.5-7B with different vision encoders in defense-unaware setting.} Clean and $\ell_\infty$-robust performance ($\epsilon_{a}=4/255$) using standard CLIP and the TeCoA/FARE backbones adversarially trained with $\epsilon_t=2/255$ and $\epsilon_t=4/255$. Clean results use full test sets, while adversarial scores are computed on 500 APGD perturbations following the ensemble protocol of \cite{schlarmann2024robust}. Across all tasks, ImageNet-21k labels serve as the reference text embeddings for computing the energy $E(\cdot,\theta)$. ET3 performs two gradient descent iterations. COCO and Flickr30k are evaluated with CIDEr for captioning, while TextVQA and VQAv2 report VQA accuracy.}
\scriptsize
\setlength{\tabcolsep}{2pt}

\resizebox{\textwidth}{!}{
\begin{tabular}{
>{\raggedright\arraybackslash}m{8.5mm}
Gc Gc
Gc Gc
Gc Gc
Gc Gc
Gc Gc}
\toprule
& \multicolumn{4}{c}{\cellcolor{cianoChiaro}\textbf{COCO} \cite{cocodataset}} 
& \multicolumn{4}{c}{\cellcolor{cianoChiaro}\textbf{Flickr30k} \cite{flickr30k}}  
& \multicolumn{4}{c}{\cellcolor{cianoChiaro}\textbf{TextVQA} \cite{singh2019towards}} 
& \multicolumn{4}{c}{\cellcolor{cianoChiaro}\textbf{VQAv2} \cite{goyal2017making}} 
& \multicolumn{4}{c}{\cellcolor{cianoChiaro}\textbf{Average}} \\

\arrayrulecolor{LightCyan4}
\cmidrule(lr){2-5}
\cmidrule(lr){6-9}
\cmidrule(lr){10-13}
\cmidrule(lr){14-17}
\cmidrule(lr){18-21}

& Clean & \textbf{+ET3} & 4/255 & \textbf{+ET3}
& Clean & \textbf{+ET3} & 4/255 & \textbf{+ET3}
& Clean & \textbf{+ET3} & 4/255 & \textbf{+ET3}
& Clean & \textbf{+ET3} & 4/255 & \textbf{+ET3}
& Clean & \textbf{+ET3} & 4/255 & \textbf{+ET3} \\

\arrayrulecolor{LightCyan4}\midrule

\clip
&  {115.5} & 111.3 & 2.7 &  {61.1} 
&  {77.5} & 76.1 & 1.1 &  {37.1}
&  {37.1} & 34.9 & 0.2 &  {17.1}
&  {74.5} & 73.1 & 0.0 &  {40.0}
&  {76.2} & {73.9} \smallred{2.3} & 1.0 & { {38.8}} \smallblue{37.8} \\

\tecoatwo
& 98.4 &  {99.0} & 30.0 &  {56.0}
& 57.1 &  {58.1} & 14.8 &  {32.1}
& 24.1 &  {24.2} & 7.8 &  {13.2}
& 66.9 &  {67.7} & 25.1 &  {40.7}
& 61.6 &  {62.3} \smallblue{0.7} & 19.4 &  {35.5} \smallblue{16.1}\\

\faretwo
& 109.9 &  {110.2} & 32.5 &  {52.7}
& 71.1 &  {71.5} & 17.5 &  {31.9}
& 31.9 & 31.9 & 7.2 &  {15.3}
& 71.7 & 71.7 & 24.5 &  {32.6}
& 71.2 &  {71.3} \smallblue{0.1} & 20.4 &  {33.1} \smallblue{12.7} \\

\arrayrulecolor{cianoVeryChiaro}\midrule

\tecoafour
&  {88.3} & 88.2 & 34.4 &  {53.7}
&  {48.6} & 48.2 & 19.5 &  {30.2}
& 20.7 &  {21.0} & 9.5 &  {12.6}
& 63.2 & 63.2 & 31.1 &  {42.6}
& 55.2 & 55.2 \smallblue{0.0} & 23.6 &  {34.8} \smallblue{11.2} \\

\farefour
& 102.4 &  {103.0} & 42.2 &  {56.4}
& 61.6 &  {61.8} & 23.1 &  {33.5}
&  {27.6} & 27.4 & 10.2 &  {17.2}
& 68.3 & 68.3 & 29.5 &  {42.2}
& 65.0 &  {65.1} \smallblue{0.1} & 26.3 &  {37.3} \smallblue{11.0}\\

\arrayrulecolor{black}\bottomrule
\end{tabular}
}
\label{tab:robust-llava}
\end{table*}

\begin{table}[tbh]
\caption{\textbf{Defense-aware worst-case robustness on ImageNet.} Clean and $\ell_\infty$-robust accuracy ($\epsilon_a = 4/255$) of base model. All TTT methods build 
upon the same base robust model from \cite{salman2020adversarially}.}
\centering
\setlength{\tabcolsep}{3pt}
\renewcommand{\arraystretch}{1.0}
\footnotesize
\begin{tabular}{l l c c}
\toprule
\cellcolor{cianoChiaro}\textbf{Method (Architecture)} & \cellcolor{cianoChiaro}\textbf{Adaptive Attack} & \cellcolor{cianoChiaro}\textbf{Clean} & \cellcolor{cianoChiaro}\textbf{Robust} \\
\midrule
\multicolumn{4}{l}{\textit{\textbf{Adversarial Purification}}} \\
DiffPure~\cite{nie2022diffusion} (WRN-50-2) & DiffBreak~\cite{KassisHY25} & 74.22 & 12.11 \\
DiffPure~\cite{nie2022diffusion} (DeiT-S) & DiffBreak~\cite{KassisHY25} & 73.63 & 25.00 \\
GDMP~\cite{wang2022guided}  (DeiT-S) & DiffBreak~\cite{KassisHY25} & 69.14 & 20.70 \\
\arrayrulecolor{LightCyan4}\midrule
\multicolumn{4}{l}{\textit{\textbf{TTT} --- Base: Salman et al. (RN-50)~\cite{salman2020adversarially}}} \\
Base Model Only & AutoAttack~\cite{croce2020reliable} & 64.02 & 34.96 \\
\arrayrulecolor{cianoVeryChiaro}\cmidrule(lr){1-4}
+ Singh et al.~\cite{singh2024robust} & APGD-DLR~\cite{croce2020reliable} & 63.91 & 34.68 \smallred{0.28}\\
+ Kulkarni et al. ~\cite{kulkarni2024igdefense} & IW-WC~\cite{kulkarni2024igdefense} & 64.10 & 35.48 \smallblue{0.52}\\
+ \textbf{\etthree (Ours)} & \hspace{-20pt}\footnotesize{Tr. APGD-T~+~BPDA}~\cite{croce2022evaluating}& 63.12 & 37.70 \smallblue{2.74} \\
\arrayrulecolor{black}\bottomrule
\end{tabular}
\vspace{-10pt}
\label{tab:classifier-acc-comparison}
\end{table}

\subsection{Evaluating Large Vision-Language Models}\label{sec:exp_lava}
We evaluate \etthree on LLaVA 1.5-7B using as vision encoder ($i$) standard CLIP, ($ii$) TeCoA-robust CLIP~\cite{mao2023understanding}, and ($iii$) FARE-robust CLIP~\cite{schlarmann2024robust}. For both robust methods, we employ the two variants trained with $\epsilon = 2/255$ and $\epsilon = 4/255$, marked as $^2$ or $^4$ respectively.
We consider image captioning (COCO, Flickr30k) and visual question answering tasks (TextVQA, VQAv2); evaluations on clean data use the full datasets, while adversarial results are computed over 500 perturbed inputs generated with an attack budget of $\epsilon_{a}=4/255$, following the evaluation of~\cite{schlarmann2024robust}. Across all tasks, we employ ImageNet-21k text labels \cite{ridnik2021imagenet21k}, to compute the energy. Further details in App.~\ref{app:eval_lvlm}.

\minisection{Results} Under the defense-unaware setting shown in, \cref{tab:robust-llava}, \etthree consistently improves the robustness across all base models while preserving performance on clean data. We further evaluate \etthree under defense-aware setting (avg. results in \cref{tab:vlm_adaptive_avg}; detailed results in App.~\ref{app:adap_LVLM}). \etthree introduces minimal overhead with increase in inference time by as little as 2.3\% in single-step defense (details in App.~\ref{app:comp_time}).
\subsection{Evaluation with Robust Classifiers}
We evaluate \etthree under defense-aware setting on robust classifiers from \texttt{RobustBench}~\cite{croce2021robustbench} using the ImageNet dataset. We compare our method against other test-time transformation defenses as well as popular strong defenses, including adversarial training and adversarial purification.
We assess \etthree under \emph{adaptive attacks}, where the attacker has full knowledge of the defense and actively attempts to circumvent it as suggested by~\cite{croce2022evaluating}.
Following \cite{croce2022evaluating}, we adopt a strong attack configuration, targeted APGD-DLR with gradients approximated through the defense via Backward Pass Differentiable Approximation (BPDA)~\cite{athalye2018obfuscated}, effectively accounting for the entire test-time optimization process. We also perform a transfer attack from the static base model and report the worst-case accuracy (more details in App.~\ref{app:adaptive_class}). Due to the computational cost of adaptive attacks, we restrict evaluation to ResNet-50 classifiers. For other competing defenses, we show robustness using the strongest attacks appropriate for each method—for example, DiffBreaker~\cite{KassisHY25} for diffusion-based purification defenses, and AutoAttack for robust classifiers. 

\minisection{Results} \cref{tab:classifier-acc-comparison} reports the  clean and worst-case robust accuracies i.e. the lowest accuracies achieved under the strongest attacks reported in the original papers, including both defense and attack evaluations where applicable. Methods are organized by defense family—adversarial purification, and test-time transformation (TTT)—and worst-case robustness is reported using the strongest adaptive attack available for each method. Consistent with prior work, DiffBreak significantly reduces the robustness of diffusion-based purification methods, whereas adversarially trained models maintain high robustness. In the TTT setting, where all methods use the same robust model as their base classifier~\cite{salman2020adversarially}, \etthree performs the best and improves worst-case robust accuracy. To ensure meaningful comparisons, we focus on worst-case robustness under the relevant adaptive attack for each method and omit defenses previously shown to fail under such evaluations~\cite{croce2022evaluating}. We also present more results under the defense-unaware setting with larger transformer-based architectures in App.~\ref{app:additionalclassifier}.

For this section, we further provide discussion on budget fairness of defense w.r.t to attack, label set ablation, and a single-step 
variant of \etthree for VLMs in App.~\ref{app:ablation}.

\begin{figure}[tbh]
  \caption{\textbf{Robust accuracy across increasing 
attack strengths in the defense-unaware setting.} Average zero-shot accuracy of CLIP over 14 benchmark datasets, showing that \etthree consistently improves 
the robustness of the TeCoA models trained with different defense strengths ($\epsilon_t$) as the attack strength ($\epsilon_a$) increases.}

    \centering
    \begin{overpic}[width=\linewidth]{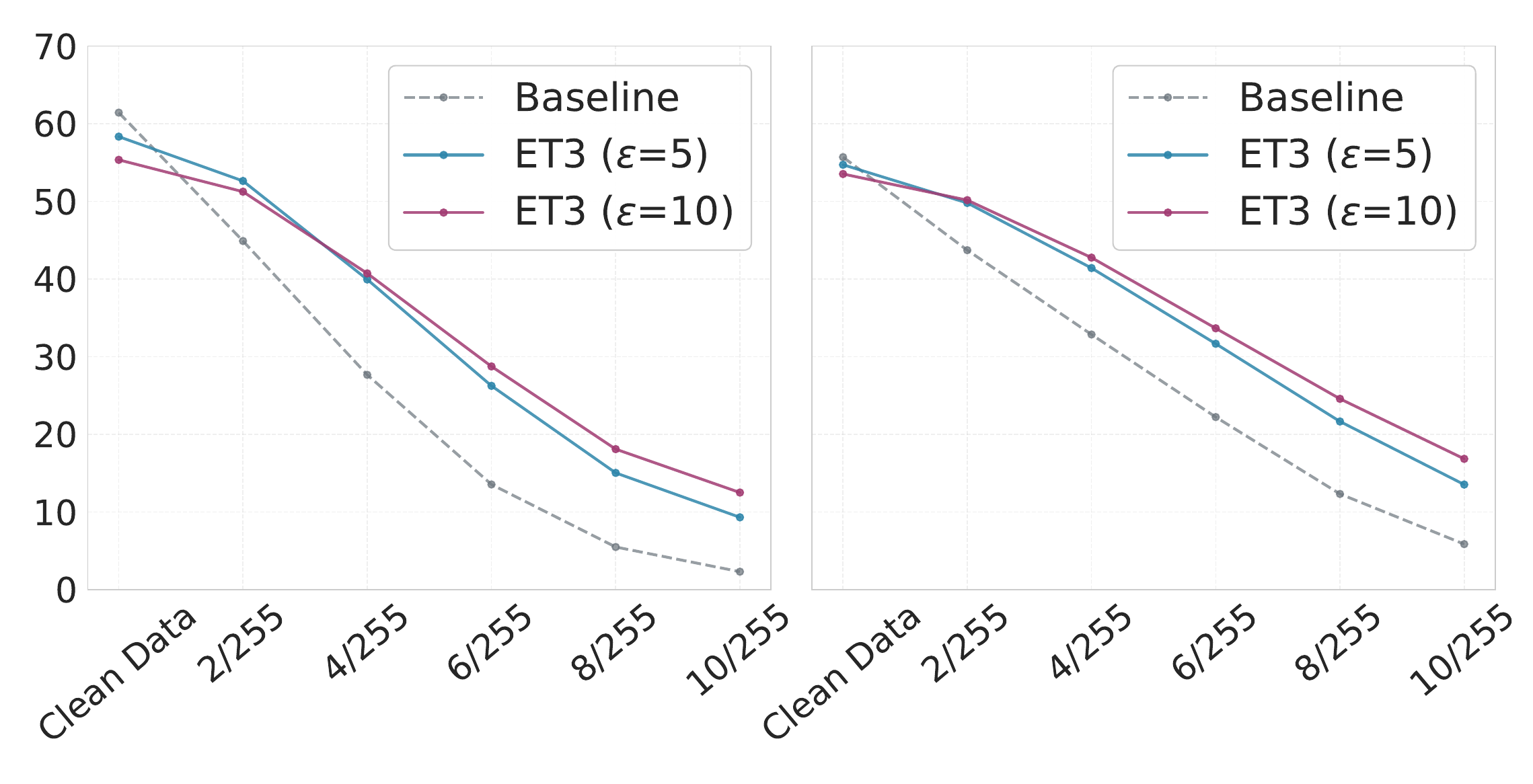}
        \put(5,50){\footnotesize ViT-L/14 (TeCoA)~$\epsilon_t=2/255$}
        \put(55,50){\footnotesize ViT-L/14 (TeCoA)~$\epsilon_t=4/255$}
        \put(15,3){\footnotesize Attack Strength}
        \put(60,3){\footnotesize Attack Strength}
        \put(-2,15){\rotatebox{90}{\small Avg. Zero-shot Acc.}}
    \end{overpic}
  
    \label{fig:et3-performance}
\end{figure}

\captionof{table}{\textbf{Defense-aware adaptive attack evaluation on LLaVA.} 
Average robust accuracy over four datasets (200 samples). \emph{Legend}: B = baseline (no defense); 
N-Ad. = non-adaptive; Ad. = adaptive; 
{\setlength{\fboxsep}{0.5pt}\colorbox{blue!20}{blue}} = \etthree with $\ell_2$ proj.; 
{\setlength{\fboxsep}{0.5pt}\colorbox{orange!20}{orange}} = \etthree with $\ell_\infty$ proj.}
\label{tab:vlm_adaptive_avg}
\noindent
\renewcommand{\arraystretch}{1.0}
\setlength{\tabcolsep}{2pt}
\resizebox{\linewidth}{!}{%
\begin{tabular}{
    Gc Gc Gc
    Gc Gc Gc
    Gc Gc Gc
    Gc Gc Gc
    Gc Gc Gc}
\toprule
\multicolumn{3}{c}{\cellcolor{cianoChiaro}\textbf{CLIP}} & 
\multicolumn{3}{c}{\cellcolor{cianoChiaro}\textbf{FARE$^2$}} & 
\multicolumn{3}{c}{\cellcolor{cianoChiaro}\textbf{TeCoA$^2$}} & 
\multicolumn{3}{c}{\cellcolor{cianoChiaro}\textbf{FARE$^4$}} & 
\multicolumn{3}{c}{\cellcolor{cianoChiaro}\textbf{TeCoA$^4$}} \\
\arrayrulecolor{LightCyan4}
\cmidrule(lr){1-3}\cmidrule(lr){4-6}\cmidrule(lr){7-9}\cmidrule(lr){10-12}\cmidrule(lr){13-15}
B & N-Ad. & Ad. & 
B & N-Ad. & Ad. & 
B & N-Ad. & Ad. & 
B & N-Ad. & Ad. & 
B & N-Ad. & Ad. \\
\arrayrulecolor{LightCyan4}\midrule
\rowcolor{blue!20}
0.9 & \textbf{16.7} & 4.5 & 
17.6 & \textbf{28.9} & 25.0 & 
18.3 & \textbf{30.9} & 25.5 & 
24.1 & \textbf{32.8} & 31.0 & 
19.7 & \textbf{30.9} & 28.6 \\
\rowcolor{orange!20}
0.9 & \textbf{12.4} & 3.3 & 
17.6 & \textbf{27.0} & 23.7 & 
18.3 & \textbf{31.3} & 26.9 & 
24.1 & \textbf{32.5} & 31.1 & 
19.7 & \textbf{31.1} & 28.1 \\
\arrayrulecolor{black}\bottomrule
\end{tabular}}

\section{Conclusion}
In this paper, we present the \etthree defense, a novel test-time transformation method based on energy minimization. We showed extensive experiments of boosting robustness for image classification, zero-shot classification using CLIP, and with Large Vision Language Models, demonstrating superiority over many datasets and downstream tasks.  We also present theoretical justification for which our \etthree will provably transform a clean or adversarial image into correctly classified images.
A promising future direction is to optimize the network to support conditions that enable the \etthree defense, such as increasing its local linearity radius or the energy gradient ratio.

\minisection{Acknowledgment} Supported by projects PNRR MUR PE0000013-FAIR under the MUR National Recovery and Resilience Plan funded by the European Union - NextGenerationEU, PRIN 2022 project 20227YET9B ``AdVVent'' CUP code B53D23012830006. Also partially supported by Sapienza research project BEAT (Better dEep leArning securiTy), bando per la ricerca di Ateneo 2024. We acknowledge CINECA HPC resources via ISCRA (HP10CIWXKH) and EuroHPC (EHPC-DEV-2025D07-037) for computing.
{
    \small
    \bibliographystyle{ieeenat_fullname}
    \bibliography{refs}
}

\appendix
\clearpage
\maketitlesupplementary

\section{Implementation details}
This section provides the implementation details for all experiments presented in the main paper.
\subsection{Details on Zero-shot evaluation}\label{app:implement_clip}

Unless stated otherwise, all zero-shot experiments, consistent with the setup adopted in~\cite{schlarmann2024robust}, follow the standard CLIP evaluation protocol used in the CLIP Benchmark~\cite{cherti_2025_15403103} and OpenCLIP~\cite{cherti2023reproducible}.

For each dataset, every class name is paired with a set of prompt templates, producing multiple natural-language descriptions per class. These prompts are encoded with the CLIP text encoder to obtain their corresponding textual embeddings. For each class, we average all template-derived embeddings to form a single class-level representation. Zero-shot predictions are then computed by taking the cosine similarity between the CLIP image embedding and all class embeddings, assigning the label with the highest similarity score.

Throughout the experiments, we report results from the following datasets: Caltech101~\cite{fei2004learning}, Stanford Cars~\cite{krause20133d}, CIFAR-10 and CIFAR-100~\cite{krizhevsky2009learning}, DTD~\cite{cimpoi2014describing}, EuroSAT~\cite{helber2019eurosat}, FGVC Aircraft~\cite{maji2013fine}, Flowers~\cite{nilsback2008automated}, ImageNet-R~\cite{hendrycks2021many}, ImageNet-Sketch~\cite{wang2019learning}, PCAM~\cite{veeling2018rotation}, Oxford Pets~\cite{parkhi2012cats}, and STL-10~\cite{coates2011analysis} and UCF-101\cite{soomro2012ucf101}. We also report results on the validation set of ImageNet-1k \cite{deng2009imagenet} consistent with prior work\cite{mao2023understanding,schlarmann2024robust}.

\minisection{Attack setup} Consistent with the established evaluation setup from ~\cite{schlarmann2024robust}, we measure adversarial robustness on a subset of $1000$ randomly selected samples from each dataset, while clean accuracy is computed over all clean samples. Adversarial examples are generated using the first two attacks from the AutoAttack suite~\cite{croce2020reliable}: APGD with cross-entropy loss (APGD-CE) and APGD with the DLR loss (APGD-DLR), each executed for $100$ iterations. For binary datasets such as PCAM, where the DLR loss is not applicable, only APGD-CE is used. All evaluations assume an $\ell_\infty$ threat model with perturbation magnitudes of $\varepsilon_a = 4/255$.  Unless noted otherwise, all robustness experiments are conducted at $224\times224$ resolution, while CIFAR-10, CIFAR-100, and STL-10 are evaluated at their native image sizes.

\begin{table*}[t!]
\caption{\textbf{Zero-shot robustness of \etthree across smaller model architectures in defense-unaware setting.} Comparison of clean and robust accuracy for baseline models versus the same models augmented with \etthree.  Robustness is evaluated against Auto-Attack (AA) at $\epsilon_a = 4/255$.}
    \centering
    \scriptsize
    \setlength{\tabcolsep}{1.8pt}
    \resizebox{\textwidth}{!}{
    \begin{tabular}{cc|>{\columncolor{lightgray}}Cc
    >{\columncolor{lightgray}}Cc
    >{\columncolor{lightgray}}Cc
    >{\columncolor{lightgray}}Cc
    >{\columncolor{lightgray}}Cc
    >{\columncolor{lightgray}}Cc
    >{\columncolor{lightgray}}Cc|c|c}
    \toprule
    Model & Method & 
    \rotatebox[origin=c]{60}{ImageNet} & 
    \rotatebox[origin=c]{60}{CalTech} & 
    \rotatebox[origin=c]{60}{Cars} & 
    \rotatebox[origin=c]{60}{CIFAR10} & 
    \rotatebox[origin=c]{60}{CIFAR100} & 
    \rotatebox[origin=c]{60}{DTD} & 
    \rotatebox[origin=c]{60}{EuroSAT} & 
    \rotatebox[origin=c]{60}{FGVC} & 
    \rotatebox[origin=c]{60}{Flowers} & 
    \rotatebox[origin=c]{60}{ImageNet-R} & 
    \rotatebox[origin=c]{60}{ImageNet-S} & 
    \rotatebox[origin=c]{60}{PCAM} & 
    \rotatebox[origin=c]{60}{OxfordPets} &  
    \rotatebox[origin=c]{60}{STL-10} & 
    \rotatebox[origin=c]{60}{Avg.} & 
    \rotatebox[origin=c]{60}{Improv.}\\
    \midrule
    \multirow{4}{*}{\shortstack{ViT-B/32 \\(TeCoA)\\$\epsilon_t=4/255$}}
    & Base (Clean) & 56.16 & 73.39 & 13.77 & 74.89 & 40.93 & 24.57 & 22.67 & 5.79 & 29.31 & 49.11 & 29.58 & 50.01 & 70.89 & 87.30 & 44.88 & \multirow{2}{*}{\smallred{0.6}} \\
    & +~\etthree (Clean) & 55.15 & 75.20 & 11.25 & 74.74 & 38.19 & 23.83 & 19.26 & 4.95 & 28.85 & 51.65 & 30.73 & 50.00 & 69.66 & 85.41 & 44.21 \\*[3pt]
    & Base (Robust) & 24.05 & 52.18 & 2.79 & 32.38 & 17.06 & 11.54 & 7.35 & 0.30 & 7.42 & 22.04 & 13.87 & 49.90 & 33.22 & 58.55 & 23.76 & \multirow{2}{*}{\smallblue{8.11}} \\
    & +~\etthree (Robust) & 34.47 & 63.02 & 5.29 & 51.60 & 26.85 & 16.44 & 13.59 & 2.76 & 14.91 & 33.81 & 21.59 & 49.98 & 44.73 & 67.09 & 31.87   \\
    \midrule
    \multirow{4}{*}{\shortstack{ViT-B/32 \\(FARE)\\$\epsilon_t=4/255$}}
    & Base (Clean) & 51.38 & 78.98 & 38.52 & 68.18 & 45.69 & 31.17 & 17.54 & 10.74 & 37.68 & 53.60 & 32.27 & 50.02 & 78.09 & 89.41 & 48.80 & \multirow{2}{*}{\smallred{0.75}} \\
    & +~\etthree (Clean) & 49.98 & 78.44 & 36.08 & 70.87 & 37.77 & 29.84 & 18.02 & 9.36 & 38.17 & 54.94 & 31.73 & 50.02 & 78.52 & 88.90 & 48.05  \\*[3pt]
     & Base (Robust) & 14.62 & 50.30 & 2.33 & 28.10 & 14.33 & 13.46 & 9.63 & 0.39 & 5.40 & 19.05 & 11.79 & 49.20 & 23.55 & 55.20 & 21.24 & \multirow{2}{*}{\smallblue{7.26}} \\
    & +~\etthree (Robust) & 21.31 & 57.22 & 7.67 & 46.51 & 23.63 & 18.56 & 13.00 & 3.78 & 13.74 & 28.27 & 17.63 & 49.21 & 36.96 & 61.50 & 28.50  \\
    \midrule
    \multirow{4}{*}{\shortstack{ConvNeXt-B \\(TeCoA)\\$\epsilon_t=4/255$}}
    & Base (Clean) & 67.68 & 79.95 & 61.32 & 74.18 & 49.02 & 43.14 & 25.13 & 12.84 & 47.88 & 67.37 & 50.38 & 49.24 & 80.54 & 90.81 & 57.11  & \multirow{2}{*}{\smallred{0.72}} \\
    & +~\etthree (Clean) & 67.05 & 79.54 & 60.91 & 74.24 & 46.89 & 45.00 & 22.26 & 11.31 & 45.80 & 68.64 & 50.03 & 48.89 & 79.42 & 89.51 & 56.39 \\*[3pt]
    & Base (Robust) & 37.10 & 62.20 & 22.20 & 35.90 & 20.20 & 22.50 & 13.50 & 1.60 & 17.80 & 35.30 & 31.20 & 34.50 & 48.90 & 69.30 & 32.30  & \multirow{2}{*}{\smallblue{8.56}} \\
    & +~\etthree (Robust) & 48.40 & 67.30 & 33.20 & 52.90 & 32.40 & 32.60 & 15.70 & 5.40 & 26.30 & 47.30 & 37.10 & 39.10 & 58.70 & 75.60 & 40.86 \\
    \midrule
    \multirow{4}{*}{\shortstack{ConvNeXt-B \\(FARE)\\$\epsilon_t=4/255$}}
    & Base (Clean) & 63.45 & 82.53 & 84.75 & 74.26 & 53.33 & 48.14 & 23.04 & 14.52 & 52.07 & 74.42 & 54.55 & 48.17 & 81.98 & 92.17 & 60.53   & \multirow{2}{*}{\smallred{1.35}} \\  %
    & +~\etthree (Clean) & 62.67 & 81.91 & 84.19 & 61.84 & 45.41 & 47.50 & 24.13 & 14.37 & 53.03 & 74.81 & 53.71 & 49.93 & 83.76 & 91.24 & 59.18 \\*[3pt]
    & Base (Robust) & 23.80 & 63.20 & 27.20 & 29.10 & 17.70 & 21.50 & 13.00 & 1.10 & 13.10 & 34.20 & 27.60 & 15.00 & 35.50 & 67.00 & 27.79  & \multirow{2}{*}{\smallblue{5.36}} \\
    & +~\etthree (Robust) &30.40 & 65.70 & 32.80 & 33.60 & 24.50 & 27.50 & 16.40 & 3.70 & 20.60 & 39.40 & 31.90 & 23.60 & 44.00 & 70.00 & 33.15 \\
    \midrule
    \multirow{4}{*}{\shortstack{ViT-B/32 \\(TeCoA)\\$\epsilon_t=1/255$}}
    & Base (Clean) & 70.53 & 77.14 & 28.88 & 85.89 & 54.96 & 32.82 & 28.80 & 12.30 & 48.41 & 61.65 & 41.16 & 44.19 & 81.22 & 93.45 & 54.39  & \multirow{2}{*}{\smallred{0.87}} \\
    & +~\etthree (Clean) & 69.83 & 76.24 & 26.28 & 82.33 & 50.65 & 31.33 & 33.20 & 11.13 & 47.91 & 63.81 & 41.68 & 42.25 & 81.25 & 91.41 & 53.52 \\*[3pt]
     & Base (Robust) & 2.83 & 15.00 & 0.57 & 9.99 & 2.12 & 5.59 & 5.24 & 0.54 & 1.63 & 3.03 & 3.19 & 34.77 & 7.39 & 23.60 & 8.25  & \multirow{2}{*}{\smallblue{2.88}} \\
    & +~\etthree (Robust) & 4.57 & 17.44 & 1.22 & 13.47 & 4.99 & 10.48 & 10.83 & 1.11 & 4.15 & 5.07 & 4.71 & 36.86 & 12.37 & 28.52 & 11.13 \\
    \midrule
    \multirow{4}{*}{\shortstack{ViT-B/32 \\(FARE)\\$\epsilon_t=1/255$}}
    & Base (Clean) & 62.60 & 82.45 & 56.29 & 88.52 & 64.22 & 40.85 & 30.81 & 16.98 & 61.83 & 67.40 & 41.45 & 52.06 & 86.94 & 96.16 & 60.61    & \multirow{2}{*}{\smallred{1.37}} \\
    & +~\etthree (Clean) & 60.33 & 80.12 & 53.03 & 85.29 & 56.51 & 37.50 & 43.50 & 14.13 & 59.26 & 67.81 & 40.21 & 52.21 & 85.20 & 94.24 & 59.24  \\*[3pt]
    & Base (Robust) & 0.14 & 4.54 & 0.41 & 8.18 & 1.25 & 3.14 & 5.41 & 0.63 & 0.73 & 1.27 & 1.28 & 35.10 & 0.90 & 9.80 & 5.20 & \multirow{2}{*}{\smallblue{3.05}} \\
    & +~\etthree (Robust) & 1.30 & 7.23 & 1.49 & 12.35 & 4.41 & 9.20 & 13.85 & 1.68 & 1.72 & 2.76 & 2.61 & 38.31 & 2.97 & 15.61 & 8.25 \\
    \bottomrule
    \end{tabular}
    } %

    \label{tab:more_robust_models}
\end{table*}

\begin{table}[t]
\caption{\textbf{Performance of robust models from RobustBench with and without \etthree in the defense-unaware setting.} We compare several robust models obtained from RobustBench, reporting performance both with and without \etthree. All evaluations use the APGD-T attack. For reference, the AutoAttack robust accuracy of each base model is included (in parentheses and shown in gray). All models are trained and evaluated under a standard $\ell_\infty$ threat model with a perturbation budget of $\epsilon = 4/255$.}
\centering
\small
\setlength{\tabcolsep}{2pt} %
\renewcommand{\arraystretch}{1.3} %
\begin{tabular}{c c c c} %
\toprule
\rowcolor{lightgray}
\textbf{Model} & \textbf{Defense} & \textbf{Clean Acc.} & \textbf{Robust Acc.} \\
\midrule
\multirow{2}{*}{\shortstack{ResNet-50 \\ \scriptsize Salman et al.~\cite{salman2020adversarially}}} 
    & Base & 64.02 & 35.40 \color{gray}\footnotesize{(35.20)} \\ 
    & \cellcolor{lightgray!20}+ \etthree & 63.12 & \hspace{-27pt}46.20 \\ 
\midrule
\multirow{2}{*}{\shortstack{ConvNeXt-B \\ \scriptsize Liu et al.~\cite{liu2025comprehensive}}} 
    & Base & 76.38 & 55.60 \color{gray}\color{gray}\footnotesize{(55.00)} \\ 
    & \cellcolor{lightgray!20}+ \etthree & 75.95 & \hspace{-27pt}61.40 \\ 
\midrule
\multirow{2}{*}{\shortstack{ConvNeXt-L \\ \scriptsize Liu et al.~\cite{liu2025comprehensive}}} 
    & Base & 77.47 & 57.70 \color{gray}\color{gray}\footnotesize{(57.40)} \\ 
    & \cellcolor{lightgray!20}+ \etthree & 76.36 & \hspace{-27pt}64.50 \\ 
\midrule
\multirow{2}{*}{\shortstack{Swin-B \\ \scriptsize Liu et al.~\cite{liu2025comprehensive}}} 
    & Base & 76.21 & 55.00 \color{gray}\color{gray}\footnotesize{(54.80)} \\ 
    & \cellcolor{lightgray!20}+ \etthree & 75.75 & \hspace{-27pt}61.70 \\ 
\midrule
\multirow{2}{*}{\shortstack{Swin-L \\ \scriptsize Liu et al.~\cite{liu2025comprehensive}}} 
    & Base & 78.18 & 58.10 \color{gray}\color{gray}\footnotesize{(57.80)} \\ 
    & \cellcolor{lightgray!20}+ \etthree & 77.21 & \hspace{-27pt}64.20 \\ 
\bottomrule
\end{tabular}
\label{tab:classifier_et3}
\end{table}

\renewcommand{\arraystretch}{1.0}
 \definecolor{darkgray}{gray}{0.57}
\begin{table*}[t]
\centering
\caption{\textbf{Evaluating LLaVA 1.5-7B with different vision encoders using \emph{one-step} \etthree in the defense-unaware setting.} Clean and $\ell_\infty$-robust performance ($\epsilon_{a}=4/255$) using standard CLIP and the TeCoA/FARE backbones adversarially trained with $\epsilon_t=2/255$ and $\epsilon_t=4/255$. Clean results use full test sets, while adversarial scores are computed on 500 APGD perturbations following the ensemble protocol of \cite{schlarmann2024robust}. Across all tasks, ImageNet-21k labels serve as the reference text embeddings for computing the energy $E(\cdot,\theta)$. ET3 performs one gradient descent iteration. COCO and Flickr30k are evaluated with CIDEr for captioning, while TextVQA and VQAv2 report VQA accuracy.}
\scriptsize
\setlength{\tabcolsep}{2pt}

\resizebox{\textwidth}{!}{
\begin{tabular}{
>{\raggedright\arraybackslash}m{8.5mm}
Gc Gc
Gc Gc
Gc Gc
Gc Gc
Gc Gc}
\toprule
& \multicolumn{4}{c}{\cellcolor{cianoChiaro}\textbf{COCO} \cite{cocodataset}} 
& \multicolumn{4}{c}{\cellcolor{cianoChiaro}\textbf{Flickr30k} \cite{flickr30k}}  
& \multicolumn{4}{c}{\cellcolor{cianoChiaro}\textbf{TextVQA} \cite{singh2019towards}} 
& \multicolumn{4}{c}{\cellcolor{cianoChiaro}\textbf{VQAv2} \cite{goyal2017making}} 
& \multicolumn{4}{c}{\cellcolor{cianoChiaro}\textbf{Average}} \\

\arrayrulecolor{LightCyan4}
\cmidrule(lr){2-5}
\cmidrule(lr){6-9}
\cmidrule(lr){10-13}
\cmidrule(lr){14-17}
\cmidrule(lr){18-21}

& Clean & \textbf{+ET3} & 4/255 & \textbf{+ET3}
& Clean & \textbf{+ET3} & 4/255 & \textbf{+ET3}
& Clean & \textbf{+ET3} & 4/255 & \textbf{+ET3}
& Clean & \textbf{+ET3} & 4/255 & \textbf{+ET3}
& Clean & \textbf{+ET3} & 4/255 & \textbf{+ET3} \\

\arrayrulecolor{LightCyan4}\midrule

\clip
& \textbf{115.5} & 112.2 & 2.7 & \textbf{68.2} 
& \textbf{77.5} & 75.3 & 1.1 & \textbf{38.9}
& \textbf{37.1} & 34.7 & 0.2 & \textbf{18.0}
& \textbf{74.5} & 73.3 & 0.0 & \textbf{43.9}
& \textbf{76.2} & 73.9 & 1.0 & \underline{\textbf{42.3}} \smallblue{41.3}\\

\tecoatwo
& 98.4 & \textbf{98.9} & 30.0 & \textbf{57.3}
& 57.1 & \textbf{57.3} & 14.8 & \textbf{33.1}
& 24.1 & 24.1 & 7.8 & \textbf{13.4}
& \textbf{66.9} & 66.8 & 25.1 & \textbf{41.9}
& 61.6 & \textbf{61.8} & 19.4 & \textbf{36.4} \smallblue{17.0}\\

\faretwo
& 109.9 & \textbf{110.3} & 32.5 & \textbf{57.0}
& 71.1 & \textbf{71.3} & 17.5 & \textbf{32.4}
& \textbf{31.9} & 31.8 & 7.2 & \textbf{15.0}
& 71.7 & 71.7 & 24.5 & \textbf{38.1}
& 71.2 & \textbf{71.3} & 20.4 & \textbf{35.6} \smallblue{15.2}\\

\arrayrulecolor{cianoVeryChiaro}\midrule

\tecoafour
& \textbf{88.3} & 88.1 & 34.4 & \textbf{55.5}
& \textbf{48.6} & 47.8 & 19.5 & \textbf{29.8}
& 20.7 & \textbf{20.9} & 9.5 & \textbf{12.46}
& 63.2 & 63.2 & 31.1 & \textbf{42.9}
& \textbf{55.2} & 55.0 & 23.6 & \textbf{35.2} \smallblue{11.6} \\

\farefour
& 102.4 & \textbf{102.6} & 42.2 & \textbf{57.7}
& \textbf{61.6} & 60.9 & 23.1 & \textbf{33.6}
& \textbf{27.6} & 27.4 & 10.2 & \textbf{16.8}
& 68.3 & 68.3 & 29.5 & \textbf{43.3}
& \textbf{65.0} & 64.8 & 26.3 & \textbf{37.9} \smallblue{11.6}\\

\arrayrulecolor{black}\bottomrule
\end{tabular}
}
\label{tab:robust-llava-one_step}
\end{table*}

\definecolor{neutralColor}{HTML}{FFF9E6}   %
\definecolor{correctColor}{HTML}{DFF0D8} %
\definecolor{incorrectColor}{HTML}{F2DEDE} %
\definecolor{headerColor}{HTML}{F5F5F5}   %

\begin{figure*}[t]
\centering

\begin{tabular}{c @{\hspace{1em}} p{0.80\textwidth}}

\raisebox{-0.5\height}{\includegraphics[width=0.2\textwidth, keepaspectratio]{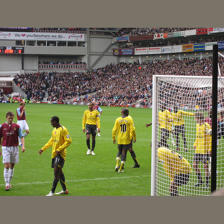}} & 

\begin{tabularx}{\linewidth}{l X}
    \rowcolor{headerColor}
    \multicolumn{2}{l}{\textbf{GT:} Sports team on a field wearing yellow jerseys with a goal net to the right.} \\
    
    \rowcolor{incorrectColor}
    \textbf{CLIP:} & A group of people are playing a game of dunking a hot dog in a bun. \\
    
    \rowcolor{correctColor}
    \hspace{5pt}\textbf{+ \etthree:} & A group of people playing soccer on a field. \\
    \hline

    \rowcolor{incorrectColor}
    \textbf{TeCoA$^2$:} & A group of people are playing with a net full of tennis balls. \\ 
    
    \rowcolor{correctColor}
    \hspace{5pt}\textbf{+ \etthree:} & A group of people are standing on a field with a soccer goal in the background. \\
        \hline

    \rowcolor{correctColor}
    \textbf{TeCoA$^4$:} & A group of people are standing on a field, with some of them wearing yellow shirts. \\
    
    \rowcolor{correctColor}
    \hspace{5pt}\textbf{+ \etthree:} & A group of people are standing on a field, with some of them wearing yellow shirts. \\
        \hline

    \rowcolor{correctColor}
    \textbf{FARE$^2$:} & A group of young boys playing soccer on a field. \\
    
    \rowcolor{correctColor}
    \hspace{5pt}\textbf{+ \etthree:} & A group of soccer players on a field. \\
        \hline

    \rowcolor{correctColor}
    \textbf{FARE$^4$:} & A group of men are standing on a field, some of them wearing yellow shirts. \\

    \rowcolor{correctColor}
    \hspace{5pt}\textbf{+ \etthree:} & A group of soccer players standing on a field. \\

\end{tabularx}
\end{tabular}

\vspace{25pt}

\begin{tabular}{c @{\hspace{1em}} p{0.80\textwidth}}

\raisebox{-0.5\height}{\includegraphics[width=0.2\textwidth, keepaspectratio]{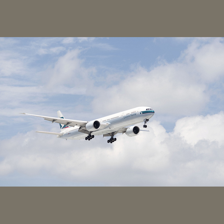}} & 

\begin{tabularx}{\linewidth}{l X}
    \rowcolor{headerColor}
    \multicolumn{2}{l}{\textbf{GT:} Jet flying in the sky among the clouds.} \\
    
    \rowcolor{incorrectColor}
    \textbf{CLIP:} & Angry Angry Birds are angry at the airport. \\
    
    \rowcolor{neutralColor}
    \hspace{5pt}\textbf{+ \etthree:} & A plane with a bunch of angry looking Sesame Street characters on it. \\
    \hline

    \rowcolor{neutralColor}
    \textbf{TeCoA$^2$:} & A large airplane is on the runway. \\ 
    
    \rowcolor{neutralColor}
    \hspace{5pt}\textbf{+ \etthree:} & A large airplane is taking off from a runway. \\
        \hline

    \rowcolor{correctColor}
    \textbf{TeCoA$^4$:} & A large jetliner is flying through the sky. \\
    
    \rowcolor{correctColor}
    \hspace{5pt}\textbf{+ \etthree:} &A large airplane flying in the sky. \\
        \hline

    \rowcolor{neutralColor}
    \textbf{FARE$^2$:} & A large jet airplane is taking off from a runway. \\
    
    \rowcolor{correctColor}
    \hspace{5pt}\textbf{+ \etthree:} & A large airplane is taking off into the sky. \\
        \hline

    \rowcolor{neutralColor}
    \textbf{FARE$^4$:} & A large airplane is on the runway. \\

    \rowcolor{correctColor}
    \hspace{5pt}\textbf{+ \etthree:} & A large airplane is flying through the sky. \\

\end{tabularx}
\end{tabular}

\vspace{25pt}

\begin{tabular}{c @{\hspace{1em}} p{0.80\textwidth}}

\raisebox{-0.5\height}{\includegraphics[width=0.2\textwidth, keepaspectratio]{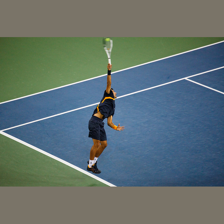}} & 

\begin{tabularx}{\linewidth}{l X}
    \rowcolor{headerColor}
    \multicolumn{2}{l}{\textbf{GT:} A man hitting a tennis ball with a racquet.} \\
    
    \rowcolor{incorrectColor}
    \textbf{CLIP:} & A cartoon cat with a football in its mouth. \\
    
    \rowcolor{neutralColor}
    \hspace{5pt}\textbf{+ \etthree:} & A woman in a purple shirt and black shorts is playing tennis. \\
    \hline

    \rowcolor{neutralColor}
    \textbf{TeCoA$^2$:} & A woman is playing tennis on a court. \\ 
    
    \rowcolor{neutralColor}
    \hspace{5pt}\textbf{+ \etthree:} & A woman is playing tennis on a court. \\
        \hline

    \rowcolor{correctColor}
    \textbf{TeCoA$^4$:} & A tennis player is in the middle of a serve, holding a tennis racket and jumping up. \\
    
    \rowcolor{correctColor}
    \hspace{5pt}\textbf{+ \etthree:} & A tennis player is swinging a racket on a tennis court. \\
        \hline

    \rowcolor{neutralColor}
    \textbf{FARE$^2$:} & A woman is playing tennis and is in the middle of a serve. \\
    
    \rowcolor{neutralColor}
    \hspace{5pt}\textbf{+ \etthree:} & A woman is playing tennis and is about to hit the ball. \\
        \hline

    \rowcolor{neutralColor}
    \textbf{FARE$^4$:} & A woman is playing tennis on a court. \\

    \rowcolor{correctColor}
    \hspace{5pt}\textbf{+ \etthree:} & A tennis player is in the middle of a serve. \\

\end{tabularx}
\end{tabular}

\vspace{25pt}

\begin{tabular}{c @{\hspace{1em}} p{0.80\textwidth}}

\raisebox{-0.5\height}{\includegraphics[width=0.2\textwidth, keepaspectratio]{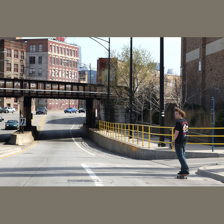}} & 

\begin{tabularx}{\linewidth}{l X}
    \rowcolor{headerColor}
    \multicolumn{2}{l}{\textbf{GT:} A man on a skateboard waits at the corner of a city street, with an overpass in the background.} \\
    
    \rowcolor{incorrectColor}
    \textbf{CLIP:} & Two girls are hugging each other in a parking lot. \\
    
    \rowcolor{neutralColor}
    \hspace{5pt}\textbf{+ \etthree:} & A woman with long hair is skateboarding in a parking lot. \\
    \hline

    \rowcolor{neutralColor}
    \textbf{TeCoA$^2$:} & A person is snowboarding on a ramp. \\ 
    
    \rowcolor{correctColor}
    \hspace{5pt}\textbf{+ \etthree:} & A person is skateboarding on a sidewalk. \\
        \hline

    \rowcolor{neutralColor}
    \textbf{TeCoA$^4$:} & A person is walking on a sidewalk near a bridge. \\
    
    \rowcolor{neutralColor}
    \hspace{5pt}\textbf{+ \etthree:} & A person is walking on a sidewalk near a bridge. \\
        \hline

    \rowcolor{neutralColor}
    \textbf{FARE$^2$:} & A person is standing on a sidewalk near a train track. \\
    
    \rowcolor{correctColor}
    \hspace{5pt}\textbf{+ \etthree:} & A person is skateboarding on a street. \\
        \hline

    \rowcolor{neutralColor}
    \textbf{FARE$^4$:} & A man is standing on a sidewalk next to a bus. \\

    \rowcolor{neutralColor}
    \hspace{5pt}\textbf{+ \etthree:} & A man is standing on a sidewalk next to a street. \\

\end{tabularx}
\end{tabular}

\caption{Qualitative comparison of generated captions for a sample image. \etthree corrects captions affected by adversarial attacks on standard CLIP and further refines captions produced by robust TeCoA and FARE. Green rows indicate semantically correct captions, red rows denote incorrect captions, and yellow rows highlight outputs with partial errors that still broadly reflect the image content. All attacks are generated with $\epsilon_a =4/255$.}
\label{fig:qualitative_example_cap}
\end{figure*}

\definecolor{neutralColor}{HTML}{FFF9E6}   %
\definecolor{correctColor}{HTML}{DFF0D8} %
\definecolor{incorrectColor}{HTML}{F2DEDE} %
\definecolor{headerColor}{HTML}{F5F5F5}   %

\begin{figure*}[t]
\centering

\begin{tabular}{@{}p{0.48\textwidth}@{\hspace{0.04\textwidth}}p{0.48\textwidth}@{}}

\begin{minipage}[t]{\linewidth}

\begin{tabular}{c @{\hspace{0.5em}} p{0.65\linewidth}}
\parbox{0.35\linewidth}{\small\textbf{Q:} Is this photo taken indoors or outdoors?} \\ [-20pt]
\raisebox{-0.7\height}{\includegraphics[width=0.35\linewidth, keepaspectratio]{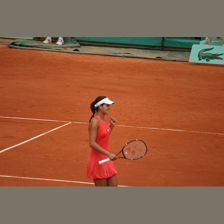}} & 
\begin{tabularx}{\linewidth}{l X}
    \rowcolor{headerColor}
    \multicolumn{2}{l}{\textbf{Answer:} Outdoors.} \\
    \rowcolor{incorrectColor}
    \textbf{CLIP:} & maybe. \\
    \rowcolor{incorrectColor}
    \hspace{5pt}\textbf{+ \etthree:} & Indoors. \\
    \hline
    \rowcolor{incorrectColor}
    \textbf{TeCoA$^2$:} & Indoors. \\ 
    \rowcolor{correctColor}
    \hspace{5pt}\textbf{+ \etthree:} & Outdoors. \\
    \hline
    \rowcolor{incorrectColor}
    \textbf{TeCoA$^4$:} & Indoors. \\
    \rowcolor{correctColor}
    \hspace{5pt}\textbf{+ \etthree:} & Outdoors. \\
    \hline
    \rowcolor{incorrectColor}
    \textbf{FARE$^2$:} & Indoors. \\
    \rowcolor{correctColor}
    \hspace{5pt}\textbf{+ \etthree:} & Outdoors. \\
    \hline
    \rowcolor{incorrectColor}
    \textbf{FARE$^4$:} & Indoors. \\
    \rowcolor{correctColor}
    \hspace{5pt}\textbf{+ \etthree:} & Outdoors. \\
\end{tabularx}
\end{tabular}

\vspace{10pt}

\begin{tabular}{c @{\hspace{0.5em}} p{0.65\linewidth}}
\parbox{0.35\linewidth}{\small\textbf{Q:} what does this sign say to do?} \\ [-20pt]
\raisebox{-0.7\height}{\includegraphics[width=0.35\linewidth, keepaspectratio]{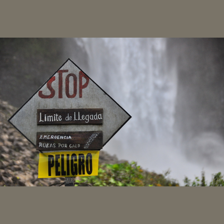}} & 
\begin{tabularx}{\linewidth}{l X}
    \rowcolor{headerColor}
    \multicolumn{2}{l}{\textbf{Answer:} Stop.} \\
    \rowcolor{incorrectColor}
    \textbf{CLIP:} & Stop limiting pelicans. \\
    \rowcolor{correctColor}
    \hspace{5pt}\textbf{+ \etthree:} & Stop. \\
    \hline
    \rowcolor{incorrectColor}
    \textbf{TeCoA$^2$:} & No liquor. \\ 
    \rowcolor{correctColor}
    \hspace{5pt}\textbf{+ \etthree:} & Stop. \\
    \hline
    \rowcolor{correctColor}
    \textbf{TeCoA$^4$:} & Stop. \\
    \rowcolor{correctColor}
    \hspace{5pt}\textbf{+ \etthree:} & Stop. \\
    \hline
    \rowcolor{incorrectColor}
    \textbf{FARE$^2$:} & Stop at geyser. \\
    \rowcolor{correctColor}
    \hspace{5pt}\textbf{+ \etthree:} & Stop. \\
    \hline
    \rowcolor{correctColor}
    \textbf{FARE$^4$:} & Stop. \\
    \rowcolor{correctColor}
    \hspace{5pt}\textbf{+ \etthree:} & Stop. \\
\end{tabularx}
\end{tabular}

\vspace{10pt}

\begin{tabular}{c @{\hspace{0.5em}} p{0.65\linewidth}}
\parbox{0.35\linewidth}{\small\textbf{Q:} which program is seen on the screen?} \\ [-20pt]
\raisebox{-0.7\height}
{\includegraphics[width=0.35\linewidth, keepaspectratio]{}} & 
\begin{tabularx}{\linewidth}{l X}
    \rowcolor{headerColor}
    \multicolumn{2}{l}{\textbf{Answer:} Office.} \\
    \rowcolor{incorrectColor}
    \textbf{CLIP:} & Windows xp. \\
    \rowcolor{correctColor}
    \hspace{5pt}\textbf{+ \etthree:} & Office. \\
    \hline
    \rowcolor{incorrectColor}
    \textbf{TeCoA$^2$:} & Windows. \\ 
    \rowcolor{incorrectColor}
    \hspace{5pt}\textbf{+ \etthree:} & Windows. \\
    \hline
    \rowcolor{incorrectColor}
    \textbf{TeCoA$^4$:} & Windows. \\
    \rowcolor{incorrectColor}
    \hspace{5pt}\textbf{+ \etthree:} & Windows. \\
    \hline
    \rowcolor{incorrectColor}
    \textbf{FARE$^2$:} & Flickr. \\
    \rowcolor{correctColor}
    \hspace{5pt}\textbf{+ \etthree:} & Office. \\
    \hline
    \rowcolor{incorrectColor}
    \textbf{FARE$^4$:} & Windows. \\
    \rowcolor{incorrectColor}
    \hspace{5pt}\textbf{+ \etthree:} & Windows. \\
\end{tabularx}
\end{tabular}

\vspace{10pt}

\begin{tabular}{c @{\hspace{0.5em}} p{0.65\linewidth}}
\parbox{0.35\linewidth}{\small\textbf{Q:} which food is being advertised?} \\ [-20pt]
\raisebox{-0.7\height}{\includegraphics[width=0.35\linewidth, keepaspectratio]{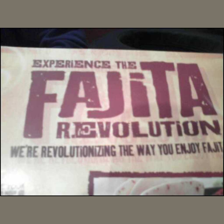}} & 
\begin{tabularx}{\linewidth}{l X}
    \rowcolor{headerColor}
    \multicolumn{2}{l}{\textbf{Answer:} Fajita.} \\
    \rowcolor{incorrectColor}
    \textbf{CLIP:} & Tortilla. \\
    \rowcolor{incorrectColor}
    \hspace{5pt}\textbf{+ \etthree:} & Tortilla. \\
    \hline
    \rowcolor{incorrectColor}
    \textbf{TeCoA$^2$:} & Taco. \\ 
    \rowcolor{correctColor}
    \hspace{5pt}\textbf{+ \etthree:} & Fajita. \\
    \hline
    \rowcolor{incorrectColor}
    \textbf{TeCoA$^4$:} & Taco. \\
    \rowcolor{incorrectColor}
    \hspace{5pt}\textbf{+ \etthree:} & Taco. \\
    \hline
    \rowcolor{incorrectColor}
    \textbf{FARE$^2$:} & Pizza. \\
    \rowcolor{correctColor}
    \hspace{5pt}\textbf{+ \etthree:} & Fajita. \\
    \hline
    \rowcolor{correctColor}
    \textbf{FARE$^4$:} & Fajita. \\
    \rowcolor{correctColor}
    \hspace{5pt}\textbf{+ \etthree:} & Fajita. \\
\end{tabularx}
\end{tabular}

\end{minipage}

&

\begin{minipage}[t]{\linewidth}

\begin{tabular}{c @{\hspace{0.5em}} p{0.65\linewidth}}
\parbox{0.35\linewidth}{\small\textbf{Q:} what brewery \\
makes this beer?} \\ [-20pt]
\raisebox{-0.7\height}{\includegraphics[width=0.35\linewidth, keepaspectratio]{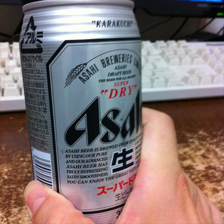}} & 
\begin{tabularx}{\linewidth}{l X}
    \rowcolor{headerColor}
    \multicolumn{2}{l}{\textbf{Answer:} Asahi.} \\
    \rowcolor{incorrectColor}
    \textbf{CLIP:} & Asain. \\
    \rowcolor{correctColor}
    \hspace{5pt}\textbf{+ \etthree:} & Asahi. \\
    \hline
    \rowcolor{correctColor}
    \textbf{TeCoA$^2$:} & Asahi. \\ 
    \rowcolor{correctColor}
    \hspace{5pt}\textbf{+ \etthree:} & Asahi. \\
    \hline
    \rowcolor{correctColor}
    \textbf{TeCoA$^4$:} & Asahi. \\
    \rowcolor{correctColor}
    \hspace{5pt}\textbf{+ \etthree:} & Asahi. \\
    \hline
    \rowcolor{incorrectColor}
    \textbf{FARE$^2$:} & Pabst blue ribbon. \\
    \rowcolor{correctColor}
    \hspace{5pt}\textbf{+ \etthree:} & Asahi. \\
    \hline
    \rowcolor{correctColor}
    \textbf{FARE$^4$:} & Asahi. \\
    \rowcolor{correctColor}
    \hspace{5pt}\textbf{+ \etthree:} & Asahi. \\
\end{tabularx}
\end{tabular}

\vspace{10pt}

\begin{tabular}{c @{\hspace{0.5em}} p{0.65\linewidth}}
\parbox{0.35\linewidth}{\small\textbf{Q:} What color is the vehicle?} \\ [-20pt]
\raisebox{-0.7\height}{\includegraphics[width=0.35\linewidth, keepaspectratio]{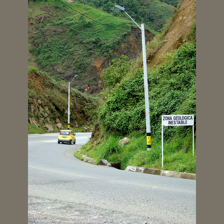}} & 
\begin{tabularx}{\linewidth}{l X}
    \rowcolor{headerColor}
    \multicolumn{2}{l}{\textbf{Answer:} Yellow.} \\
    \rowcolor{incorrectColor}
    \textbf{CLIP:} & Black. \\
    \rowcolor{incorrectColor}
    \hspace{5pt}\textbf{+ \etthree:} & Black. \\
    \hline
    \rowcolor{incorrectColor}
    \textbf{TeCoA$^2$:} & White. \\ 
    \rowcolor{correctColor}
    \hspace{5pt}\textbf{+ \etthree:} & Yellow. \\
    \hline
    \rowcolor{correctColor}
    \textbf{TeCoA$^4$:} & Yellow. \\
    \rowcolor{correctColor}
    \hspace{5pt}\textbf{+ \etthree:} & Yellow. \\
    \hline
    \rowcolor{incorrectColor}
    \textbf{FARE$^2$:} & White. \\
    \rowcolor{correctColor}
    \hspace{5pt}\textbf{+ \etthree:} & Yellow. \\
    \hline
    \rowcolor{incorrectColor}
    \textbf{FARE$^4$:} & White. \\
    \rowcolor{correctColor}
    \hspace{5pt}\textbf{+ \etthree:} & Yellow. \\
\end{tabularx}
\end{tabular}

\vspace{10pt}

\begin{tabular}{c @{\hspace{0.5em}} p{0.65\linewidth}}
\parbox{0.35\linewidth}{\small\textbf{Q:} How many people are there?} \\ [-20pt]
\raisebox{-0.7\height}{\includegraphics[width=0.35\linewidth, keepaspectratio]{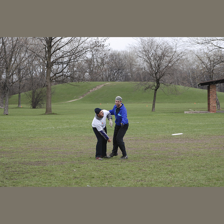}} & 
\begin{tabularx}{\linewidth}{l X}
    \rowcolor{headerColor}
    \multicolumn{2}{l}{\textbf{Answer} 2.} \\
    \rowcolor{incorrectColor}
    \textbf{CLIP:} & 5. \\
    \rowcolor{incorrectColor}
    \hspace{5pt}\textbf{+ \etthree:} & 3. \\
    \hline
    \rowcolor{incorrectColor}
    \textbf{TeCoA$^2$:} & 3. \\ 
    \rowcolor{incorrectColor}
    \hspace{5pt}\textbf{+ \etthree:} & 3. \\
    \hline
    \rowcolor{incorrectColor}
    \textbf{TeCoA$^4$:} & 3. \\
    \rowcolor{correctColor}
    \hspace{5pt}\textbf{+ \etthree:} & 2. \\
    \hline
    \rowcolor{incorrectColor}
    \textbf{FARE$^2$:} & 3. \\
    \rowcolor{incorrectColor}
    \hspace{5pt}\textbf{+ \etthree:} & 3. \\
    \hline
    \rowcolor{incorrectColor}
    \textbf{FARE$^4$:} & 3. \\
    \rowcolor{correctColor}
    \hspace{5pt}\textbf{+ \etthree:} & 2. \\
\end{tabularx}
\end{tabular}

\vspace{10pt}

\begin{tabular}{c @{\hspace{0.5em}} p{0.65\linewidth}}
\parbox{0.35\linewidth}{\small\textbf{Q:} What kind of animal is this?} \\ [-20pt]
\raisebox{-0.7\height}{\includegraphics[width=0.35\linewidth, keepaspectratio]{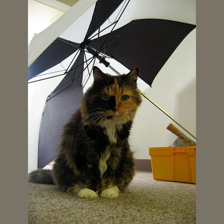}} & 
\begin{tabularx}{\linewidth}{l X}
    \rowcolor{headerColor}
    \multicolumn{2}{l}{\textbf{Answer} Cat.} \\
    \rowcolor{incorrectColor}
    \textbf{CLIP:} & Snake. \\
    \rowcolor{correctColor}
    \hspace{5pt}\textbf{+ \etthree:} & Cat. \\
    \hline
    \rowcolor{incorrectColor}
    \textbf{TeCoA$^2$:} & Dog. \\ 
    \rowcolor{incorrectColor}
    \hspace{5pt}\textbf{+ \etthree:} & Dog. \\
    \hline
    \rowcolor{incorrectColor}
    \textbf{TeCoA$^4$:} & Dog. \\
    \rowcolor{correctColor}
    \hspace{5pt}\textbf{+ \etthree:} & Cat. \\
    \hline
    \rowcolor{incorrectColor}
    \textbf{FARE$^2$:} & Dog. \\
    \rowcolor{correctColor}
    \hspace{5pt}\textbf{+ \etthree:} & Cat. \\
    \hline
    \rowcolor{incorrectColor}
    \textbf{FARE$^4$:} & Dog. \\
    \rowcolor{incorrectColor}
    \hspace{5pt}\textbf{+ \etthree:} & Dog. \\
\end{tabularx}
\end{tabular}

\end{minipage}

\end{tabular}

\vspace{4pt}
\caption{Qualitative comparison across 8 examples with short Q\&A format. \etthree corrects answers affected by adversarial attacks on standard CLIP and further refines the ones produced by robust TeCoA and FARE. Green rows indicate correct captions, while red indicates incorrect ones. All attacks are generated with $\epsilon_a =4/255$.}
\label{fig:qa_comparison}
\end{figure*}

\begin{table*}[t!]

    \caption{\textbf{Zero-shot robustness when using \emph{one-step} \etthree across 14 benchmark datasets 
in the defense-unaware setting.} Comparison of clean and robust accuracy for 
baseline models versus same models augmented with \etthree. Robustness is evaluated 
against Auto-Attack (AA) at $\epsilon_a = 4/255$. Across all datasets, ImageNet-21k labels serve as the reference text embeddings for computing the energy $E(\cdot,\theta)$}
    
    \centering
    \scriptsize
    \setlength{\tabcolsep}{1.8pt}
    \resizebox{\textwidth}{!}{
    \begin{tabular}{cc|>{\columncolor{lightgray}}Cc
    >{\columncolor{lightgray}}Cc
    >{\columncolor{lightgray}}Cc
    >{\columncolor{lightgray}}Cc
    >{\columncolor{lightgray}}Cc
    >{\columncolor{lightgray}}Cc
    >{\columncolor{lightgray}}Cc|c|c}
    \toprule
    Model & Defense & 
    \rotatebox[origin=c]{60}{ImageNet} & 
    \rotatebox[origin=c]{60}{CalTech} & 
    \rotatebox[origin=c]{60}{Cars} & 
    \rotatebox[origin=c]{60}{CIFAR10} & 
    \rotatebox[origin=c]{60}{CIFAR100} & 
    \rotatebox[origin=c]{60}{DTD} & 
    \rotatebox[origin=c]{60}{EuroSAT} & 
    \rotatebox[origin=c]{60}{FGVC} & 
    \rotatebox[origin=c]{60}{Flowers} & 
    \rotatebox[origin=c]{60}{ImageNet-R} & 
    \rotatebox[origin=c]{60}{ImageNet-S} & 
    \rotatebox[origin=c]{60}{PCAM} & 
    \rotatebox[origin=c]{60}{OxfordPets} &  
    \rotatebox[origin=c]{60}{STL-10} & 
    \rotatebox[origin=c]{60}{Avg.} & 
    \rotatebox[origin=c]{60}{Improv.}\\
    \midrule
    \multirow{4}{*}{\shortstack{ViT-L/14 \\(TeCoA)\\$\epsilon_t=4/255$}}
    & None (Clean)& 74.91 & 78.36 & 37.83 & 79.61 & 50.26 & 38.03 & 22.48 & 11.76 & 38.41 & 74.35 & 54.22 & 49.95 & 76.07 & 93.44 & 55.69  & \multirow{2}{*}{\smallred{0.73}} \\
    & +~\etthree (Clean) & 74.50 & 77.85 & 36.54 & 73.55 & 44.90 & 37.93 & 24.91 & 12.03 & 39.21 & 73.41 & 54.89 & 49.98 & 75.63 & 94.06 & 54.96 \\*[3pt]
    & None (Robust) & 44.50 & 60.90 & 8.50 & 37.10 & 21.50 & 16.50 & 6.40 & 2.20 & 12.60 & 41.90 & 32.80 & 45.70 & 55.00 & 74.30 & 32.85  & \multirow{2}{*}{\smallblue{7.71}} \\
    & +~\etthree (Robust) & 52.70 & 64.40 & 11.20 & 54.10 & 32.70 & 21.70 & 18.30 & 5.30 & 21.00 & 49.80 & 40.10 & 51.00 & 62.10 & 83.50 & 40.56 \\
    \midrule
     \multirow{4}{*}{\shortstack{ViT-L/14 \\(FARE)\\$\epsilon_t=4/255$}}
    & None (Clean) & 70.78 & 84.70 & 63.84 & 77.67 & 56.53 & 43.83 & 18.28 & 21.96 & 58.07 & 80.24 & 56.74 & 50.02 & 87.14 & 96.04 & 61.85    & \multirow{2}{*}{\smallred{2.77}} \\
    & +~\etthree (Clean)& 70.11 & 84.14 & 61.62 & 67.43 & 43.18 & 43.46 & 17.04 & 22.05 & 55.91 & 76.88 & 56.39 & 50.02 & 85.53 & 93.41 & 59.08  \\*[3pt]
    & None (Robust)& 34.80 & 64.20 & 12.70 & 34.80 & 20.20 & 17.50 & 11.10 & 3.00 & 12.20 & 40.50 & 30.60 & 52.30 & 50.60 & 74.30 & 32.77   & \multirow{2}{*}{\smallblue{6.39}} \\
    & +~\etthree (Robust) & 42.80 & 68.70 & 18.10 & 47.20 & 30.50 & 24.90 & 14.20 & 6.50 & 20.50 & 46.10 & 38.60 & 52.30 & 57.90 & 80.00 & 39.16 \\
    \midrule
    \multirow{4}{*}{\shortstack{ViT-L/14 \\(TeCoA)\\$\epsilon_t=2/255$}}
    & None (Clean) & 80.11 & 80.67 & 50.08 & 87.53 & 60.69 & 44.36 & 26.06 & 14.04 & 51.80 & 80.12 & 58.43 & 49.89 & 80.02 & 96.08 & 61.42  & \multirow{2}{*}{\smallred{2.90}} \\
    & +~\etthree (Clean) & 78.48 & 79.34 & 42.88 & 76.78 & 49.71 & 42.39 & 29.67 & 15.36 & 48.46 & 76.30 & 57.80 & 49.87 & 76.53 & 95.73 & 58.52 \\*[3pt]
    & None (Robust) & 37.00 & 57.40 & 6.40 & 31.00 & 17.90 & 14.70 & 7.80 & 1.00 & 9.60 & 36.60 & 30.90 & 17.40 & 50.40 & 69.10 & 27.66  & \multirow{2}{*}{\smallblue{12.28}} \\
    & +~\etthree (Robust)& 47.80 & 63.00 & 13.90 & 52.10 & 31.90 & 21.90 & 24.30 & 8.30 & 22.80 & 45.50 & 40.20 & 46.80 & 59.70 & 81.00 & 39.94 \\
    \midrule
     \multirow{4}{*}{\shortstack{ViT-L/14 \\(FARE)\\$\epsilon_t=2/255$}}
    & None (Clean) & 74.48 & 84.77 & 70.53 & 89.52 & 69.13 & 50.05 & 25.39 & 26.70 & 70.60 & 85.52 & 59.72 & 50.01 & 91.06 & 98.47 & 67.57    & \multirow{2}{*}{\smallred{3.48}} \\
    & +~\etthree (Clean) & 73.29 & 83.94 & 65.68 & 80.07 & 53.46 & 47.55 & 28.43 & 25.47 & 64.29 & 81.72 & 58.55 & 50.02 & 88.72 & 96.03 & 64.09 \\*[3pt]
      & None (Robust) & 17.80 & 46.40 & 5.00 & 25.70 & 14.20 & 11.60 & 0.40 & 0.90 & 7.10 & 25.60 & 22.10 & 19.10 & 28.10 & 61.50 & 20.39    & \multirow{2}{*}{\smallblue{11.61}} \\
    & +~\etthree (Robust) & 28.20 & 56.20 & 12.80 & 45.30 & 27.00 & 19.60 & 23.00 & 6.60 & 14.60 & 36.20 & 31.40 & 37.30 & 39.10 & 70.70 & 32.00  \\
    \bottomrule
    \end{tabular}
    } %

    \label{tab:imagnet_1k_one_step_et}
\end{table*}

\subsubsection{Settings for Comparisons with Prior Work}
For the comparisons reported in \textbf{Table~2}  of the main paper, we use the exact same model checkpoints of the robust CLIP model and strictly follow the experimental
settings established by ~\cite{sheng2025r}, ensuring a fair and consistent comparison across all test-time adaptation and test-time augmentation defenses. 

\etthree is applied under the identical settings used by each respective baseline, enabling a direct and principled evaluation. For the base Robust CLIP model and the state-of-the-art image–transformation defense for CLIP, TTC~\cite{Xing_2025_CVPR}, \etthree is used in a zero-shot setting  as exactly described in the method section of our main paper, with no modifications to their original inference pipelines.

The evaluation also includes the standard set of test-time prompt-tuning and augmentation baselines commonly used in CLIP robustness research~\cite{sheng2025r,yoon2024ctpt}. Following~\cite{sheng2025r}, we also include the simple \emph{Ensemble} baseline, which aggregates predictions across multiple augmented views. All methods operate under identical constraints: they use CLIP as the underlying vision–language model and rely exclusively on AugMix~\cite{hendrycks2019augmix} to generate test-time augmentations.

For clarity, these baselines can be grouped into those that rely solely on test-time augmentation (e.g., MTA~\cite{zanella2024test} and Ensemble) and those based on prompt tuning (e.g., R-TPT~\cite{sheng2025r}, TPT~\cite{shu2022test}, and C-TPT~\cite{yoon2024ctpt}). For these baseline methods, the generation of multiple augmented views is an integral and necessary part of their method. To evaluate \etthree under the same input conditions, it must therefore operate on the identical augmented input distribution used by each baseline.

Consequently,  we apply \etthree directly on top of their existing mechanisms. Because \etthree is orthogonal to both test-time augmentation and prompt tuning, it can be integrated without altering the underlying baseline methods. Specifically, \etthree operates exclusively on the visual input space, leaving textual parameters and prompt embeddings unmodified. In all these settings, multiple augmented views are generated per input, and \etthree is applied to these augmented images before it is processed by the baseline method such as Ensemble or R-TPT.

All experiments share a common set of hyperparameters following the implementation of ~\cite{sheng2025r}. The text prompt template is initialized as ``a photo of a''. For prompt-tuning methods, the learnable component consisted of a four-token prompt, updated via a single step with learning rate: $5\times 10^{-3}$ using the Adam optimizer~\cite{kingma2014adam}. The adversarial examples are generated on all the dataset samples using the exact configuration in~\cite{sheng2025r}: \textbf{a 100-step Projected Gradient Descent (PGD) attack~\cite{madry2017towards} with a perturbation budget of $\varepsilon_a = 4.0$}. Note that this specific attack setting is used exclusively for this comparative table; stronger attacks are employed in all other experiments throughout our paper.

The results for MTA~\cite{zanella2024test}, TPT~\cite{shu2022test}, and C-TPT~\cite{yoon2024ctpt} are taken directly from ~\cite{sheng2025r}. For the methods we re-evaluated, we verified that our reproduced results match those reported in~\cite{sheng2025r}; therefore, we rely on their reported numbers for the remaining baselines. For TTC~\cite{Xing_2025_CVPR}, we use the author's official code base with default hyperparameter and evaluate on the same model checkpoint used for the other baselines to ensure comparability.

\begin{table*}[t!]
    \caption{\textbf{Zero-shot robustness of \etthree across 14 benchmark datasets 
in the defense-unaware setting.} Comparison of clean and robust accuracy for 
baseline models versus same models augmented with \etthree. Robustness is evaluated 
against Auto-Attack (AA) at $\epsilon_a = 4/255$. Across all datasets, ImageNet-21k labels serve as the reference text embeddings for computing the energy $E(\cdot,\theta)$}
    \centering
    \scriptsize
    \setlength{\tabcolsep}{1.8pt}
    \resizebox{\textwidth}{!}{
    \begin{tabular}{cc|>{\columncolor{lightgray}}Cc
    >{\columncolor{lightgray}}Cc
    >{\columncolor{lightgray}}Cc
    >{\columncolor{lightgray}}Cc
    >{\columncolor{lightgray}}Cc
    >{\columncolor{lightgray}}Cc
    >{\columncolor{lightgray}}Cc|c|c}
    \toprule
    Model & Defense & 
    \rotatebox[origin=c]{60}{ImageNet} & 
    \rotatebox[origin=c]{60}{CalTech} & 
    \rotatebox[origin=c]{60}{Cars} & 
    \rotatebox[origin=c]{60}{CIFAR10} & 
    \rotatebox[origin=c]{60}{CIFAR100} & 
    \rotatebox[origin=c]{60}{DTD} & 
    \rotatebox[origin=c]{60}{EuroSAT} & 
    \rotatebox[origin=c]{60}{FGVC} & 
    \rotatebox[origin=c]{60}{Flowers} & 
    \rotatebox[origin=c]{60}{ImageNet-R} & 
    \rotatebox[origin=c]{60}{ImageNet-S} & 
    \rotatebox[origin=c]{60}{PCAM} & 
    \rotatebox[origin=c]{60}{OxfordPets} &  
    \rotatebox[origin=c]{60}{STL-10} & 
    \rotatebox[origin=c]{60}{Avg.} & 
    \rotatebox[origin=c]{60}{Improv.}\\
    \midrule
    \multirow{4}{*}{\shortstack{ViT-L/14 \\(TeCoA)\\$\epsilon_t=4/255$}}
    & None (Clean)& 74.91 & 78.36 & 37.83 & 79.61 & 50.26 & 38.03 & 22.48 & 11.76 & 38.41 & 74.35 & 54.22 & 49.95 & 76.07 & 93.44 & 55.69  & \multirow{2}{*}{\smallred{0.98}} \\
    & +~\etthree (Clean) & 74.21 & 77.95 & 35.79 & 73.41 & 45.09 & 37.61 & 23.15 & 12.54 & 39.29 & 72.81 & 54.85 & 50.00 & 75.06 & 94.15 & 54.71 \\*[3pt]
    & None (Robust) & 44.50 & 60.90 & 8.50 & 37.10 & 21.50 & 16.50 & 6.40 & 2.20 & 12.60 & 41.90 & 32.80 & 45.70 & 55.00 & 74.30 & 32.85  & \multirow{2}{*}{\smallblue{8.56}} \\
    & +~\etthree (Robust) & 54.70 & 66.00 & 11.50 & 57.70 & 32.60 & 22.40 & 16.00 & 6.00 & 21.40 & 50.70 & 40.80 & 51.40 & 63.10 & 85.40 & 41.41 \\
    \midrule
     \multirow{4}{*}{\shortstack{ViT-L/14 \\(FARE)\\$\epsilon_t=4/255$}}
    & None (Clean) & 70.78 & 84.70 & 63.84 & 77.67 & 56.53 & 43.83 & 18.28 & 21.96 & 58.07 & 80.24 & 56.74 & 50.02 & 87.14 & 96.04 & 61.85    & \multirow{2}{*}{\smallred{3.26}} \\
    & +~\etthree (Clean)& 69.86 & 83.68 & 60.34 & 66.23 & 42.89 & 42.98 & 18.20 & 21.48 & 54.82 & 76.00 & 56.25 & 50.02 & 84.93 & 92.56 & 58.59  \\*[3pt]
    & None (Robust)& 34.80 & 64.20 & 12.70 & 34.80 & 20.20 & 17.50 & 11.10 & 3.00 & 12.20 & 40.50 & 30.60 & 52.30 & 50.60 & 74.30 & 32.77   & \multirow{2}{*}{\smallblue{6.29}} \\
    & +~\etthree (Robust)& 42.10 & 69.00 & 17.80 & 47.10 & 30.30 & 24.50 & 13.70 & 6.50 & 20.90 & 46.30 & 38.30 & 52.30 & 57.50 & 80.60 & 39.06 \\
    \midrule
    \multirow{4}{*}{\shortstack{ViT-L/14 \\(TeCoA)\\$\epsilon_t=2/255$}}
    & None (Clean) & 80.11 & 80.67 & 50.08 & 87.53 & 60.69 & 44.36 & 26.06 & 14.04 & 51.80 & 80.12 & 58.43 & 49.89 & 80.02 & 96.08 & 61.42  & \multirow{2}{*}{\smallred{3.09}} \\
    & +~\etthree (Clean) & 77.79 & 78.98 & 40.74 & 78.50 & 50.55 & 42.50 & 29.94 & 15.21 & 47.99 & 75.40 & 57.29 & 49.97 & 75.91 & 95.79 & 58.33 \\*[3pt]
    & None (Robust) & 37.00 & 57.40 & 6.40 & 31.00 & 17.90 & 14.70 & 7.80 & 1.00 & 9.60 & 36.60 & 30.90 & 17.40 & 50.40 & 69.10 & 27.66  & \multirow{2}{*}{\smallblue{12.30}} \\
    & +~\etthree (Robust)&  47.40 & 63.50 & 13.10 & 51.20 & 31.40 & 21.60 & 22.60 & 8.80 & 24.00 & 47.00 & 40.50 & 46.30 & 59.80 & 82.20 & 39.96 \\
    \midrule
     \multirow{4}{*}{\shortstack{ViT-L/14 \\(FARE)\\$\epsilon_t=2/255$}}
    & None (Clean) & 74.48 & 84.77 & 70.53 & 89.52 & 69.13 & 50.05 & 25.39 & 26.70 & 70.60 & 85.52 & 59.72 & 50.01 & 91.06 & 98.47 & 67.57    & \multirow{2}{*}{\smallred{4.00}} \\
    & +~\etthree (Clean) & 72.90 & 83.75 & 63.93 & 78.19 & 53.22 & 46.70 & 30.61 & 24.81 & 63.05 & 80.68 & 58.18 & 50.02 & 88.31 & 95.59 & 63.57 \\*[3pt]
      & None (Robust) & 17.80 & 46.40 & 5.00 & 25.70 & 14.20 & 11.60 & 0.40 & 0.90 & 7.10 & 25.60 & 22.10 & 19.10 & 28.10 & 61.50 & 20.39    & \multirow{2}{*}{\smallblue{11.74}} \\
    & +~\etthree (Robust) & 27.10 & 56.40 & 12.70 & 49.10 & 27.80 & 20.00 & 19.20 & 5.80 & 15.40 & 37.00 & 32.10 & 36.30 & 38.80 & 72.10 & 32.13  \\
    \bottomrule
    \end{tabular}
    } %

    \label{tab:imagnet_2_step_et}
\end{table*}

\subsection{Details on LVLM evaluation}\label{app:eval_lvlm}
In addition to evaluating robustness on zero-shot classification with CLIP models, we extend our analysis to Large Vision-Language Models (LVLMs) that employ these CLIP models as visual encoders, following the approach of prior work~\cite{schlarmann2024robust}. We specifically examine the susceptibility of the visual modality to adversarial perturbations and seek to enhance robustness against such attacks. Consistent with the procedure described in the Method section of the main paper, \etthree is applied exclusively to the visual encoder, offering a fast and computationally efficient transformation. As shown in Figure 2 of the main paper, embeddings are extracted from the CLIP visual encoder after the \etthree transformation. Following the original LLAVA implementation, we use the feature obtained from the layer before the last layer of the visual encoder.\\
\minisection{Attack setup}
We adopt the \textit{ensemble} adversarial evaluation procedure introduced in~\cite{schlarmann2024robust}. 
For each test instance, we run a sequence of APGD attacks ($\ell_\infty$, $\epsilon = 4/255$, 100 steps) with different initialization conditions. 
In captioning tasks, we retain the perturbation that yields the lowest CIDEr score; 
in VQA tasks, we retain the perturbation that yields the lowest answer accuracy. 
The procedure begins with clean inference, followed by five APGD runs initialized from different ground-truth references, and concludes with a refinement step initialized from the current best perturbation. 
After each round, if the newly generated output worsens the evaluation metric, the perturbation is kept. 
If the metric crosses a stopping threshold (low CIDEr or zero VQA accuracy), the attack is terminated early for that sample.
For each evaluation setting, we report both the original model performance and the performance obtained when applying \etthree at test time on the same adversarial inputs. Consistent with prior work~\cite{schlarmann2024robust},  we use randomly sampled 500 images from each dataset
for the adversarial evaluations, and all clean samples for
clean evaluations.

\subsection{Computational overhead on LLaVA with \etthree}\label{app:comp_time}
We measure the inference latency of \etthree on an NVIDIA A100 GPU with the LLaVA-1.5 7B model. The baseline inference time is 593.9 ms per sample. Incorporating the \etthree step increases the latency to 607.3 ms per sample ($+2.3\%$) for a single step and 640.0 ms per sample ($+7.7\%$) for two steps. These results are averaged over 500 samples.

\subsection{Details on Robust Classifiers}

We evaluate the robust ImageNet classifiers obtained from RobustBench~\cite{croce2021robustbench} under a standard $\ell_\infty$ threat model with perturbation budget $\varepsilon = 4/255$, in accordance with the RobustBench evaluation protocol~\cite{croce2021robustbench}. Clean accuracy is reported on the full validation set, while robust accuracy is computed on a subset of 1,000 randomly selected images, following the standard practice in prior works. We also provide additional experiments in \cref{app:additionalclassifier} besides the one presented in the main paper.

\subsection{Details on Adaptive attacks}\label{app:adaptive_class}

To rigorously assess the robustness of our defense \etthree, we evaluate it on robust image classifiers under adaptive attacks tailored to test‑time defenses, ensuring a fair and meaningful comparison. We present the results in Table~4 of the main paper and provide additional details here. Specifically, we follow exactly the protocol proposed for test-time defenses by \cite{croce2022evaluating}, adopting their ``Transfer APGD-T + BPDA'' attack. We exactly follow their implementation and use APGD-T with the DLR loss, 5 restarts, and 100 iterations per restart. We approximate gradients through our defense using Backward Pass Differentiable Approximation (BPDA), following the standard practice in \cite{croce2022evaluating}. This is necessary because directly differentiating through the full unrolled iterative procedure of our defense leads to gradient shattering and memory instabilities. Such effects can artificially impair the attacker and result in gradient obfuscation. By replacing the backward pass with a stable surrogate, BPDA provides a reliable gradient signal. In addition, we also perform \textbf{transfer attacks} by generating adversarial perturbations using APGD-T (with DLR loss, 5 restarts, and 100 iterations per restart) on the underlying static base model (without \etthree) and applying them to the defended model to evaluate whether the defense provides genuine robustness beyond gradient masking. We report \textbf{worst-case} robust accuracy, where a sample is considered misclassified if it is successfully misclassified by either the adaptive attack (APGD-T + BPDA) or the transfer attack.
 Note that we do not use EOT~\cite{athalye2018obfuscated} in evaluations as used in \cite{croce2022evaluating}, as \etthree does not introduce stochasticity. 
We specifically adopt the Transfer APGD-T + BPDA attack specifically because it has been demonstrated in \cite{croce2022evaluating} to reliably circumvent most iterative test-time defenses similar to ours.

\section{Qualitative examples}\label{app:blank}

We provide a series of qualitative examples to illustrate the effect of \etthree across captioning, question answering, and image classification. These examples demonstrate how \etthree mitigates the effect of adversarial perturbations across various tasks discussed in the paper.

We present \cref{fig:qualitative_example_cap} which shows qualitative comparisons of generated captions for several sample images. \etthree consistently mitigates the impact of adversarial perturbations on standard CLIP and improves the robustness of both TeCoA and FARE. Green rows indicate semantically correct captions, red rows denote incorrect captions, and yellow rows correspond to partially correct descriptions that still broadly reflect the scene. All adversarial examples are generated with $\epsilon_a = 4/255$.

\cref{fig:qa_comparison} shows short question–answer evaluations across various representative images. As with captioning, \etthree corrects adversarially induced errors in standard CLIP and further refines outputs from TeCoA and FARE. Green rows correspond to correct predictions, while red rows indicate incorrect ones. All adversarial samples use $\epsilon_a = 4/255$.

We also analyze the effect of \etthree on classification. To better understand the effect of perturbations, \cref{fig:logits} plots logit as perturbations are smoothly scaled. For each example, we interpolate linearly from $0\%$ to $100\%$ of the perturbation in 100 steps. The top row shows the model’s logit evolution under the adversarial perturbation, while the bottom row shows the corresponding evolution under the \etthree transformation.

Finally, \cref{fig:logits_bunny} specifically illustrates how \etthree enhances salient, class-relevant features. The left panel shows an Angora bunny image originally misclassified as a Blue Tick. Applying \etthree highlights key attributes—most notably the pinkish eye region—allowing the model to recover the correct prediction. The right panel provides a clean reference image for comparison. Similar behavior is observed throughout the paper, including the teaser example where \etthree makes a snake’s eye features more prominent—features absent from its adversarial misclassifications as a zucchini. These examples collectively illustrate that \etthree transformation amplifies discriminative, class-relevant features, enabling recovery from adversarial perturbation.

\section{Additional Experimental Results}\label{app:additionalresults}

\subsection{Zero-shot Robustness with additional models}
As shown in Table~1 of the main paper, \etthree improves zero-shot robustness. Here, we report analogous results on additional CLIP models under same settings, demonstrating that the observed improvements hold consistently across a broader set of models.
The \cref{tab:more_robust_models} reports zero-shot performance of \etthree across a diverse set architectures, including transformer-based ViT and ConvNeXt models, as well as models, as well as models trained with a smaller $\epsilon_t =1/255$ perturbation. Across all configurations, \etthree consistently improves robust accuracy, with minimal or modest impact on clean accuracy. These results demonstrate that the benefits of \etthree are consistent across model architectures and training configurations, further illustrating its effectiveness in enhancing zero-shot robustness. 

\subsection{Robustness with larger classifier architectures} \label{app:additionalclassifier} 

We further evaluate the effectiveness of \etthree on an extended set of robust ImageNet classifiers obtained from RobustBench, shown in \cref{tab:classifier_et3}. Specifically, we include larger and architecturally distinct models, such as Swin Transformers and ConvNeXt variants, to assess the generality of \etthree. For a fair comparison, we evaluate the base models with the same attack used to assess the \etthree, as this protocol is best suited for test-time defenses in classifiers, following \cite{croce2022evaluating}. We use APGD-T with DLR loss, 5 restarts, and 100 iterations per restart, following the attack protocol described previously. Clean accuracy is measured on the full ImageNet validation set, while robust accuracy is computed on a 1,000-image subset, consistent with established evaluation practices. We also report robust accuracy obtained with AutoAttack on the same samples.

Across all evaluated classifiers, \etthree consistently enhances robust accuracy, with only minor reductions in clean accuracy.

\subsection{Defense-aware attacks for LVLM}\label{app:adap_LVLM} 

To rigorously assess the reliability of \etthree in the context of Large Vision-Language Models (LVLMs), we evaluate its performance against defense-aware adaptive attacks. Specifically, we employ Backward Pass Differentiable Approximation (BPDA) combined with AutoAttack, maintaining the exact same LLaVA evaluation settings as those established in Table~3. 

In \cref{tab:vlm_adaptive_l2} we report the average robust accuracy across the four evaluation datasets, utilizing a total of 200 samples. When using a standard, non-robust CLIP model as the underlying image encoder, we observe a sharp drop in robustness against these adaptive attacks. Conversely, when integrated with robustly trained CLIP encoders, \etthree consistently boosts the model's overall robustness. Crucially, the results of this adaptive evaluation explicitly validate the theoretical analysis presented in~\cref{sec:provable_defense_robust}. This empirical evidence confirms that \etthree effectively leverages and amplifies the inherent robustness of the foundation encoder, successfully mitigating stronger defense-aware attacks. Under the same settings, we further evaluate \etthree when it uses $\ell_\infty$ projection instead the default $\ell_2$ projection, more details are provided in \cref{app:budget}.

\begin{table*}[t]
\caption{\textbf{\etthree improves robustness across increasing attack strengths in defense-unaware setting.} We report clean and robust accuracy on 14 datasets as the attack strength increases, comparing the baseline model to its ET3-augmented variant. $\epsilon_a$ indicates the strength of the attack. Across all datasets, ImageNet-21k labels serve as the reference text embeddings for computing the energy $E(\cdot,\theta)$.}
    \centering
    \begin{subtable}{\textwidth}
        \centering
        \caption{ViT-L/14 TeCoA ($\epsilon_t=2/255$)}
        \scriptsize
        \setlength{\tabcolsep}{1.8pt}
        \resizebox{\textwidth}{!}{
        \begin{tabular}{cc|>{\columncolor{lightgray}}Cc
        >{\columncolor{lightgray}}Cc
        >{\columncolor{lightgray}}Cc
        >{\columncolor{lightgray}}Cc
        >{\columncolor{lightgray}}Cc
        >{\columncolor{lightgray}}Cc
        >{\columncolor{lightgray}}Cc|c|c}
        \toprule
        $\epsilon_a$ & Defense & 
        \rotatebox[origin=c]{60}{ImageNet} & 
        \rotatebox[origin=c]{60}{CalTech} & 
        \rotatebox[origin=c]{60}{Cars} & 
        \rotatebox[origin=c]{60}{CIFAR10} & 
        \rotatebox[origin=c]{60}{CIFAR100} & 
        \rotatebox[origin=c]{60}{DTD} & 
        \rotatebox[origin=c]{60}{EuroSAT} & 
        \rotatebox[origin=c]{60}{FGVC} & 
        \rotatebox[origin=c]{60}{Flowers} & 
        \rotatebox[origin=c]{60}{ImageNet-R} & 
        \rotatebox[origin=c]{60}{ImageNet-S} & 
        \rotatebox[origin=c]{60}{PCAM} & 
        \rotatebox[origin=c]{60}{OxfordPets} &  
        \rotatebox[origin=c]{60}{STL-10} & 
        \rotatebox[origin=c]{60}{Avg.} & 
        \rotatebox[origin=c]{60}{Improv.}\\
        \midrule
        \multirow{3}{*}{Clean Data}
        & None & 80.11 & 80.67 & 50.08 & 87.53 & 60.69 & 44.36 & 26.06 & 14.04 & 51.80 & 80.12 & 58.43 & 49.89 & 80.02 & 96.08 & 61.42 & \multirow{3}{*}{\smallred{3.09}} \\
        & +\etthree & 77.79 & 78.98 & 40.74 & 78.50 & 50.55 & 42.50 & 29.94 & 15.21 & 47.99 & 75.40 & 57.29 & 49.97 & 75.91 & 95.79 & 58.33 & \\
        \midrule
        \multirow{3}{*}{$2/255$}
        & None & 61.90 & 70.20 & 21.90 & 63.50 & 34.90 & 27.10 & 12.60 & 6.40 & 27.50 & 58.70 & 43.00 & 42.60 & 69.60 & 88.60 & 44.89 & \multirow{3}{*}{\smallblue{7.73}} \\
        & +\etthree & 69.70 & 73.70 & 30.60 & 72.40 & 44.70 & 34.30 & 27.30 & 12.10 & 38.00 & 64.90 & 50.50 & 51.70 & 73.90 & 92.90 & 52.62 & \\
        \midrule
        \multirow{3}{*}{$4/255$}

         & None & 37.00 & 57.40 & 6.40 & 31.00 & 17.90 & 14.70 & 7.80 & 1.00 & 9.60 & 36.60 & 30.90 & 17.40 & 50.40 & 69.10 & 27.66  & \multirow{2}{*}{\smallblue{12.28}} \\
        & +~\etthree& 47.80 & 63.00 & 13.90 & 52.10 & 31.90 & 21.90 & 24.30 & 8.30 & 22.80 & 45.50 & 40.20 & 46.80 & 59.70 & 81.00 & 39.94 \\
        \midrule
        \multirow{3}{*}{$6/255$}
        & None & 16.30 & 36.00 & 1.40 & 11.90 & 6.80 & 7.90 & 0.00 & 0.20 & 2.80 & 20.60 & 21.30 & 1.70 & 21.60 & 41.10 & 13.54 & \multirow{3}{*}{\smallblue{12.70}} \\
        & \etthree & 27.40 & 45.00 & 7.70 & 30.20 & 20.10 & 14.10 & 18.90 & 6.10 & 13.20 & 29.40 & 29.30 & 32.70 & 35.80 & 57.50 & 26.24 \\
        \midrule
        \multirow{3}{*}{$8/255$}
        & Base & 4.70 & 18.40 & 0.30 & 2.70 & 2.20 & 2.90 & 0.00 & 0.00 & 1.00 & 10.80 & 14.20 & 0.10 & 4.70 & 14.90 & 5.49 & \multirow{3}{*}{\smallblue{9.54}} \\
        & \etthree & 13.90 & 28.30 & 4.80 & 16.40 & 13.20 & 8.60 & 10.00 & 4.10 & 8.40 & 18.00 & 22.20 & 16.80 & 15.40 & 30.30 & 15.03 \\
        \midrule
        \multirow{3}{*}{$10/255$}
        & Base& 1.00 & 8.80 & 0.00 & 0.30 & 0.70 & 1.10 & 0.00 & 0.00 & 0.00 & 6.40 & 9.60 & 0.00 & 0.30 & 4.10 & 2.31 & \multirow{3}{*}{\smallblue{7.00}} \\
        & \etthree & 7.30 & 15.60 & 4.10 & 9.50 & 9.50 & 5.00 & 9.70 & 3.90 & 6.00 & 11.80 & 16.70 & 8.50 & 7.70 & 15.00 & 9.31 \\
        \bottomrule
        \end{tabular}
        } %
    \end{subtable}
    
    \vspace{1.5em}
    
    \begin{subtable}{\textwidth}
        \centering
        \caption{ViT-L/14 TeCoA ($\epsilon_t=4/255$)}
        \scriptsize
        \setlength{\tabcolsep}{1.8pt}
        \resizebox{\textwidth}{!}{
        \begin{tabular}{cc|>{\columncolor{lightgray}}Cc
        >{\columncolor{lightgray}}Cc
        >{\columncolor{lightgray}}Cc
        >{\columncolor{lightgray}}Cc
        >{\columncolor{lightgray}}Cc
        >{\columncolor{lightgray}}Cc
        >{\columncolor{lightgray}}Cc|c|c}
        \toprule
        $\epsilon_a$ & Defense & 
        \rotatebox[origin=c]{60}{ImageNet} & 
        \rotatebox[origin=c]{60}{CalTech} & 
        \rotatebox[origin=c]{60}{Cars} & 
        \rotatebox[origin=c]{60}{CIFAR10} & 
        \rotatebox[origin=c]{60}{CIFAR100} & 
        \rotatebox[origin=c]{60}{DTD} & 
        \rotatebox[origin=c]{60}{EuroSAT} & 
        \rotatebox[origin=c]{60}{FGVC} & 
        \rotatebox[origin=c]{60}{Flowers} & 
        \rotatebox[origin=c]{60}{ImageNet-R} & 
        \rotatebox[origin=c]{60}{ImageNet-S} & 
        \rotatebox[origin=c]{60}{PCAM} & 
        \rotatebox[origin=c]{60}{OxfordPets} &  
        \rotatebox[origin=c]{60}{STL-10} & 
        \rotatebox[origin=c]{60}{Avg.} & 
        \rotatebox[origin=c]{60}{Improv.}\\
        \midrule
        \multirow{3}{*}{Clean Data}
        & None & 74.91 & 78.36 & 37.83 & 79.61 & 50.26 & 38.03 & 22.48 & 11.76 & 38.41 & 74.35 & 54.22 & 49.95 & 76.07 & 93.44 & 55.69 & \multirow{3}{*}{\smallred{0.98}} \\
        & +\etthree & 74.21 & 77.95 & 35.79 & 73.41 & 45.09 & 37.61 & 23.15 & 12.54 & 39.29 & 72.81 & 54.85 & 50.00 & 75.06 & 94.15 & 54.71 & \\
        \midrule
        \multirow{3}{*}{$2/255$}
        & None& 59.20 & 69.70 & 18.10 & 59.60 & 33.60 & 26.50 & 7.90 & 5.60 & 23.90 & 59.10 & 42.90 & 51.10 & 68.00 & 86.80 & 43.71 & \multirow{3}{*}{\smallblue{6.08}} \\
        & + \etthree & 68.30 & 73.40 & 23.30 & 69.10 & 41.20 & 30.40 & 19.60 & 9.50 & 30.70 & 65.40 & 49.50 & 52.10 & 71.70 & 92.80 & 49.79 & \\
        \midrule
        \multirow{3}{*}{$4/255$}
           & None & 44.50 & 60.90 & 8.50 & 37.10 & 21.50 & 16.50 & 6.40 & 2.20 & 12.60 & 41.90 & 32.80 & 45.70 & 55.00 & 74.30 & 32.85  & \multirow{2}{*}{\smallblue{8.56}} \\
         & +~\etthree & 54.70 & 66.00 & 11.50 & 57.70 & 32.60 & 22.40 & 16.00 & 6.00 & 21.40 & 50.70 & 40.80 & 51.40 & 63.10 & 85.40 & 41.41 \\
        \midrule
        \multirow{3}{*}{$6/255$}
        & None & 27.50 & 49.40 & 3.40 & 19.80 & 11.50 & 11.30 & 0.20 & 0.50 & 5.80 & 29.40 & 25.30 & 34.00 & 37.30 & 55.70 & 22.22 & \multirow{3}{*}{\smallblue{9.44}} \\
        & \etthree & 37.80 & 56.00 & 6.20 & 39.70 & 24.40 & 15.40 & 12.30 & 3.50 & 13.20 & 36.10 & 32.40 & 48.30 & 47.90 & 70.00 & 31.66 \\
        \midrule
        \multirow{3}{*}{$8/255$}
        & Base & 15.40 & 33.70 & 0.60 & 9.20 & 5.80 & 6.70 & 0.00 & 0.00 & 2.60 & 17.30 & 17.90 & 11.00 & 16.90 & 35.40 & 12.32 & \multirow{3}{*}{\smallblue{9.33}} \\
        & \etthree  & 24.50 & 41.00 & 3.50 & 24.60 & 17.80 & 10.80 & 4.20 & 2.20 & 9.40 & 24.40 & 25.80 & 36.60 & 28.30 & 50.00 & 21.65 \\
        \midrule
        \multirow{3}{*}{$10/255$}
        & Base & 6.10 & 20.90 & 0.20 & 2.80 & 2.40 & 3.80 & 0.00 & 0.00 & 1.10 & 10.90 & 13.00 & 0.70 & 5.30 & 14.90 & 5.86 & \multirow{3}{*}{\smallblue{7.67}} \\
        & \etthree & 14.60 & 26.30 & 1.90 & 15.30 & 13.60 & 7.20 & 3.80 & 2.00 & 6.40 & 17.00 & 19.70 & 19.40 & 13.60 & 28.60 & 13.53 \\
        \bottomrule
        \end{tabular}
        } %
    \end{subtable}

    \label{tab:attack_strength_ablation_2_step}
\end{table*}

\begin{table*}[t]
\caption{\textbf{\emph{One-step} \etthree improves robustness across increasing attack strengths in defense-unaware setting.} We report clean and robust accuracy on 14 datasets as the attack strength increases, comparing the baseline model to its ET3-augmented variant. $\epsilon_a$ indicates the strength of attack. Across all datasets, ImageNet-21k labels serve as the reference text embeddings for computing the energy $E(\cdot,\theta)$.}
    \centering
    \begin{subtable}{\textwidth}
        \centering
        \caption{ViT-L/14 TeCoA ($\epsilon_t=2/255$)}
        \scriptsize
        \setlength{\tabcolsep}{1.8pt}
        \resizebox{\textwidth}{!}{
        \begin{tabular}{cc|>{\columncolor{lightgray}}Cc
        >{\columncolor{lightgray}}Cc
        >{\columncolor{lightgray}}Cc
        >{\columncolor{lightgray}}Cc
        >{\columncolor{lightgray}}Cc
        >{\columncolor{lightgray}}Cc
        >{\columncolor{lightgray}}Cc|c|c}
        \toprule
        $\epsilon_a$ & Defense & 
        \rotatebox[origin=c]{60}{ImageNet} & 
        \rotatebox[origin=c]{60}{CalTech} & 
        \rotatebox[origin=c]{60}{Cars} & 
        \rotatebox[origin=c]{60}{CIFAR10} & 
        \rotatebox[origin=c]{60}{CIFAR100} & 
        \rotatebox[origin=c]{60}{DTD} & 
        \rotatebox[origin=c]{60}{EuroSAT} & 
        \rotatebox[origin=c]{60}{FGVC} & 
        \rotatebox[origin=c]{60}{Flowers} & 
        \rotatebox[origin=c]{60}{ImageNet-R} & 
        \rotatebox[origin=c]{60}{ImageNet-S} & 
        \rotatebox[origin=c]{60}{PCAM} & 
        \rotatebox[origin=c]{60}{OxfordPets} &  
        \rotatebox[origin=c]{60}{STL-10} & 
        \rotatebox[origin=c]{60}{Avg.} & 
        \rotatebox[origin=c]{60}{Improv.}\\
        \midrule
        \multirow{3}{*}{Clean Data}
        & None & 80.11 & 80.67 & 50.08 & 87.53 & 60.69 & 44.36 & 26.06 & 14.04 & 51.80 & 80.12 & 58.43 & 49.89 & 80.02 & 96.08 & 61.42 & \multirow{3}{*}{\smallred{6.08}} \\
        & +\etthree & 75.79 & 77.65 & 31.75 & 72.03 & 44.36 & 39.36 & 32.78 & 12.93 & 42.25 & 72.44 & 56.04 & 50.11 & 72.39 & 94.83 & 55.34 & \\
        \midrule
        \multirow{3}{*}{$2/255$}
        & None & 61.90 & 70.20 & 21.90 & 63.50 & 34.90 & 27.10 & 12.60 & 6.40 & 27.50 & 58.70 & 43.00 & 42.60 & 69.60 & 88.60 & 44.89 & \multirow{3}{*}{\smallblue{6.35}} \\
        & +\etthree & 67.90 & 73.20 & 25.70 & 66.10 & 39.40 & 34.20 & 31.80 & 11.70 & 36.60 & 64.20 & 50.80 & 52.40 & 71.20 & 92.10 & 51.24 & \\
        \midrule
        \multirow{3}{*}{$4/255$}

         & None & 37.00 & 57.40 & 6.40 & 31.00 & 17.90 & 14.70 & 7.80 & 1.00 & 9.60 & 36.60 & 30.90 & 17.40 & 50.40 & 69.10 & 27.66  & \multirow{3}{*}{\smallblue{13.06}} \\
        & +~\etthree & 48.50 & 62.90 & 15.30 & 48.60 & 29.50 & 24.00 & 28.60 & 8.90 & 24.20 & 46.50 & 42.10 & 50.70 & 59.30 & 81.00 & 40.72 \\
        \midrule
        \multirow{3}{*}{$6/255$}
        & None & 16.30 & 36.00 & 1.40 & 11.90 & 6.80 & 7.90 & 0.00 & 0.20 & 2.80 & 20.60 & 21.30 & 1.70 & 21.60 & 41.10 & 13.54 & \multirow{3}{*}{\smallblue{15.20}} \\
        & \etthree & 29.40 & 46.60 & 8.60 & 30.10 & 19.60 & 16.40 & 23.90 & 6.90 & 15.50 & 30.60 & 31.70 & 45.30 & 39.10 & 58.60 & 28.74 \\
        \midrule
        \multirow{3}{*}{$8/255$}
        & Base & 4.70 & 18.40 & 0.30 & 2.70 & 2.20 & 2.90 & 0.00 & 0.00 & 1.00 & 10.80 & 14.20 & 0.10 & 4.70 & 14.90 & 5.49 & \multirow{3}{*}{\smallblue{12.60}} \\
        & \etthree & 15.50 & 29.70 & 5.60 & 16.70 & 13.50 & 9.80 & 14.80 & 5.80 & 10.90 & 18.70 & 24.90 & 36.60 & 18.10 & 32.60 & 18.09 \\
        \midrule
        \multirow{3}{*}{$10/255$}
        & Base& 1.00 & 8.80 & 0.00 & 0.30 & 0.70 & 1.10 & 0.00 & 0.00 & 0.00 & 6.40 & 9.60 & 0.00 & 0.30 & 4.10 & 2.31 & \multirow{3}{*}{\smallblue{10.19}} \\
        & \etthree & 9.50 & 17.60 & 4.60 & 10.10 & 9.60 & 6.50 & 13.80 & 4.70 & 8.00 & 13.40 & 19.30 & 29.90 & 10.30 & 17.70 & 12.50 \\
        \bottomrule
        \end{tabular}
        } %
    \end{subtable}
    
    \vspace{1.5em}
    
    \begin{subtable}{\textwidth}
        \centering
        \caption{ViT-L/14 TeCoA ($\epsilon_t=4/255$)}
        \scriptsize
        \setlength{\tabcolsep}{1.8pt}
        \resizebox{\textwidth}{!}{
        \begin{tabular}{cc|>{\columncolor{lightgray}}Cc
        >{\columncolor{lightgray}}Cc
        >{\columncolor{lightgray}}Cc
        >{\columncolor{lightgray}}Cc
        >{\columncolor{lightgray}}Cc
        >{\columncolor{lightgray}}Cc
        >{\columncolor{lightgray}}Cc|c|c}
        \toprule
        $\epsilon_a$ & Defense & 
        \rotatebox[origin=c]{60}{ImageNet} & 
        \rotatebox[origin=c]{60}{CalTech} & 
        \rotatebox[origin=c]{60}{Cars} & 
        \rotatebox[origin=c]{60}{CIFAR10} & 
        \rotatebox[origin=c]{60}{CIFAR100} & 
        \rotatebox[origin=c]{60}{DTD} & 
        \rotatebox[origin=c]{60}{EuroSAT} & 
        \rotatebox[origin=c]{60}{FGVC} & 
        \rotatebox[origin=c]{60}{Flowers} & 
        \rotatebox[origin=c]{60}{ImageNet-R} & 
        \rotatebox[origin=c]{60}{ImageNet-S} & 
        \rotatebox[origin=c]{60}{PCAM} & 
        \rotatebox[origin=c]{60}{OxfordPets} &  
        \rotatebox[origin=c]{60}{STL-10} & 
        \rotatebox[origin=c]{60}{Avg.} & 
        \rotatebox[origin=c]{60}{Improv.}\\
        \midrule
        \multirow{3}{*}{Clean Data}
        & None & 74.91 & 78.36 & 37.83 & 79.61 & 50.26 & 38.03 & 22.48 & 11.76 & 38.41 & 74.35 & 54.22 & 49.95 & 76.07 & 93.44 & 55.69 & \multirow{3}{*}{\smallred{2.17}} \\
        & +\etthree & 72.75 & 77.12 & 32.82 & 69.83 & 41.19 & 36.01 & 26.07 & 12.42 & 38.05 & 71.13 & 54.31 & 50.01 & 73.81 & 93.71 & 53.52 & \\
        \midrule
        \multirow{3}{*}{$2/255$}
        & None& 59.20 & 69.70 & 18.10 & 59.60 & 33.60 & 26.50 & 7.90 & 5.60 & 23.90 & 59.10 & 42.90 & 51.10 & 68.00 & 86.80 & 43.71 & \multirow{3}{*}{\smallblue{6.44}} \\
        & + \etthree & 68.00 & 73.30 & 24.60 & 66.30 & 37.30 & 31.40 & 25.20 & 10.50 & 33.60 & 66.30 & 49.40 & 52.20 & 71.70 & 92.30 & 50.15 & \\
        \midrule
        \multirow{3}{*}{$4/255$}
           & None & 44.50 & 60.90 & 8.50 & 37.10 & 21.50 & 16.50 & 6.40 & 2.20 & 12.60 & 41.90 & 32.80 & 45.70 & 55.00 & 74.30 & 32.85  & \multirow{3}{*}{\smallblue{9.90}} \\
         & +~\etthree & 55.30 & 66.80 & 13.30 & 56.10 & 31.50 & 24.90 & 22.40 & 7.30 & 23.40 & 52.20 & 42.40 & 52.20 & 64.70 & 86.00 & 42.75 \\
        \midrule
        \multirow{3}{*}{$6/255$}
        & None & 27.50 & 49.40 & 3.40 & 19.80 & 11.50 & 11.30 & 0.20 & 0.50 & 5.80 & 29.40 & 25.30 & 34.00 & 37.30 & 55.70 & 22.22 & \multirow{3}{*}{\smallblue{11.43}} \\
        & \etthree & 39.90 & 56.40 & 7.30 & 41.00 & 23.80 & 16.90 & 18.20 & 5.30 & 17.40 & 38.50 & 33.80 & 51.10 & 50.10 & 71.40 & 33.65 \\
        \midrule
        \multirow{3}{*}{$8/255$}
        & Base & 15.40 & 33.70 & 0.60 & 9.20 & 5.80 & 6.70 & 0.00 & 0.00 & 2.60 & 17.30 & 17.90 & 11.00 & 16.90 & 35.40 & 12.32 & \multirow{3}{*}{\smallblue{12.25}} \\
        & \etthree  & 25.70 & 41.90 & 4.50 & 26.90 & 18.00 & 13.20 & 12.90 & 4.10 & 12.40 & 26.70 & 27.50 & 46.10 & 32.10 & 52.00 & 24.57 \\
        \midrule
        \multirow{3}{*}{$10/255$}
        & Base & 6.10 & 20.90 & 0.20 & 2.80 & 2.40 & 3.80 & 0.00 & 0.00 & 1.10 & 10.90 & 13.00 & 0.70 & 5.30 & 14.90 & 5.86 & \multirow{3}{*}{\smallblue{10.98}} \\
        & \etthree & 15.70 & 30.20 & 3.50 & 16.70 & 13.70 & 9.30 & 8.60 & 3.50 & 9.90 & 18.80 & 21.00 & 37.50 & 16.70 & 30.70 & 16.84 \\
        \bottomrule
        \end{tabular}
        } %
    \end{subtable}

    \label{tab:attack_strength_ablation_1_step}
\end{table*}

\section{Ablation Study}\label{app:ablation}
In this section, we provide additional ablation studies to further analyze the behavior and key design choices of our proposed defense, \etthree.

\subsection{Single-Step \etthree Defense}

Our proposed \etthree method uses a small, fixed number of iterative steps to perform the energy minimization. To demonstrate that \etthree\ can be made faster at inference if needed,  we conduct an ablation in which the defense is restricted to a \emph{single} transformation step. We evaluate this ``single-step \etthree'' on both zero-shot classification and downstream LVLM tasks.

The results—shown in \cref{tab:robust-llava-one_step} for the LVLM experiments and in \cref{tab:imagnet_1k_one_step_et,tab:attack_strength_ablation_1_step} for the zero-shot evaluations—demonstrate that even a single step yields a substantial robustness improvement over the baseline. Although the full multi-step version of \etthree achieves the slightly stronger overall performance. For this ablation, we keep the overall perturbation budget $\epsilon$ identical to the multi-step setup, increasing the step size $\alpha$ to 5 for TeCoA and 4 for FARE.

\renewcommand{\arraystretch}{1.0}
\definecolor{darkgray}{gray}{0.57}
\begin{table*}[t]
\centering

\caption{\textbf{Evaluating LLaVA 1.5-7B with \etthree across different vision encoders in the \emph{defense-aware setting}.} Robust scores is reported under $\epsilon_{a}=4/255$ using standard CLIP and the TeCoA/FARE backbones adversarially trained with $\epsilon_t=2/255$ and $\epsilon_t=4/255$. Across all tasks, ImageNet-21k labels serve as the reference text embeddings for computing the energy $E(\cdot,\theta)$. The column labeled $(4/255)$ reports results without any test-time defense. \textbf{+ET3} denotes evaluation under a non-adaptive attack, while \textbf{+ET3$^*$} denotes evaluation under an defense-aware adaptive attack. COCO and Flickr30k are evaluated using CIDEr for captioning, while TextVQA and VQAv2 report VQA accuracy.}

\scriptsize
\setlength{\tabcolsep}{2pt}
\resizebox{\textwidth}{!}{
\begin{tabular}{
  >{\raggedright\arraybackslash}m{8.5mm}
  Gc Gc Gc
  Gc Gc Gc
  Gc Gc Gc
  Gc Gc Gc
  Gc Gc Gc}
\toprule
& \multicolumn{3}{c}{\cellcolor{cianoChiaro}\textbf{COCO} \cite{cocodataset}} 
& \multicolumn{3}{c}{\cellcolor{cianoChiaro}\textbf{Flickr30k} \cite{flickr30k}}  
& \multicolumn{3}{c}{\cellcolor{cianoChiaro}\textbf{TextVQA} \cite{singh2019towards}} 
& \multicolumn{3}{c}{\cellcolor{cianoChiaro}\textbf{VQAv2} \cite{goyal2017making}} 
& \multicolumn{3}{c}{\cellcolor{cianoChiaro}\textbf{Average}} \\
\arrayrulecolor{LightCyan4}
\cmidrule(lr){2-4}
\cmidrule(lr){5-7}
\cmidrule(lr){8-10}
\cmidrule(lr){11-13}
\cmidrule(lr){14-16}
& 4/255 &   {+ET3} &   {+ET3}$^*$
& 4/255 &   {+ET3} &   {+ET3}$^*$
& 4/255 &   {+ET3} &   {+ET3}$^*$
& 4/255 &   {+ET3} &   {+ET3}$^*$
& 4/255 &   {+ET3} &   {+ET3}$^*$ \\
\arrayrulecolor{LightCyan4}\midrule

\clip
& 2.8  &   {19.2} & 3.5
& 0.9  &   {16.0} & 5.0
& 0.0  &   {12.0} & 2.0
& 0.0  &   {19.6} & 7.4
& 0.9  &   {16.7} \smallblue{15.8} & 4.5 \smallblue{3.6} \\

\tecoatwo
& 24.9 &   {43.8} & 34.9
& 20.9 &   {30.3} & 29.4
& 5.8  &   {13.8} & 11.8
& 21.8 &   {35.8} & 25.8
& 18.3 &   {30.9} \smallblue{12.6} & 25.5 \smallblue{7.2} \\

\faretwo
& 21.8 &   {37.0} & 33.4
& 22.5 &   {29.5} & 28.4
& 5.8  &   {15.8} & 11.8
& 20.4 &   {33.2} & 26.4
& 17.6 &   {28.9} \smallblue{11.3} & 25.0 \smallblue{7.4} \\

\arrayrulecolor{cianoVeryChiaro}\midrule

\tecoafour
& 25.4 &   {45.0} & 41.4
& 22.4 &   {31.6} & 29.6
& 7.8  &   {9.8}  & 9.8
& 23.0 &   {37.4} & 33.4
& 19.7 &   {30.9} \smallblue{11.2} & 28.6 \smallblue{8.9} \\

\farefour
& 28.1 &   {36.2} & 34.3
& 28.0 &   {39.5} & 39.9
& 13.8 &   {19.8} & 17.8
& 26.6 &   {35.8} & 31.8
& 24.1 &   {32.8} \smallblue{8.7} & 31.0 \smallblue{6.9} \\

\arrayrulecolor{black}\bottomrule
\end{tabular}
}
\label{tab:vlm_adaptive_l2}
\end{table*}

\renewcommand{\arraystretch}{1.0}
\definecolor{darkgray}{gray}{0.57}

\begin{table*}[t]
\centering
\caption{\textbf{Evaluating LLaVA 1.5-7B with \etthree with \emph{$\ell_\infty$ projection} across different vision encoders in the \emph{defense-aware setting}.} Robust scores is reported under $\epsilon_{a}=4/255$ using standard CLIP and the TeCoA/FARE backbones adversarially trained with $\epsilon_t=2/255$ and $\epsilon_t=4/255$. The \etthree update is projected into the same $\ell_\infty$ ball as the attack with radius $4/255$. Across all tasks, ImageNet-21k labels serve as the reference text embeddings for computing the energy $E(\cdot,\theta)$. The column labeled $(4/255)$ reports results without any test-time defense. \textbf{+ET3} denotes evaluation under a non-adaptive attack, while \textbf{+ET3$^*$} denotes evaluation under an defense-aware adaptive attack. COCO and Flickr30k are evaluated using CIDEr for captioning, while TextVQA and VQAv2 report VQA accuracy.}

\scriptsize
\setlength{\tabcolsep}{2pt}
\resizebox{\textwidth}{!}{
\begin{tabular}{
  >{\raggedright\arraybackslash}m{8.5mm}
  Gc Gc Gc
  Gc Gc Gc
  Gc Gc Gc
  Gc Gc Gc
  Gc Gc Gc}
\toprule
& \multicolumn{3}{c}{\cellcolor{cianoChiaro}\textbf{COCO} \cite{cocodataset}} 
& \multicolumn{3}{c}{\cellcolor{cianoChiaro}\textbf{Flickr30k} \cite{flickr30k}}  
& \multicolumn{3}{c}{\cellcolor{cianoChiaro}\textbf{TextVQA} \cite{singh2019towards}} 
& \multicolumn{3}{c}{\cellcolor{cianoChiaro}\textbf{VQAv2} \cite{goyal2017making}} 
& \multicolumn{3}{c}{\cellcolor{cianoChiaro}\textbf{Average}} \\
\arrayrulecolor{LightCyan4}
\cmidrule(lr){2-4}
\cmidrule(lr){5-7}
\cmidrule(lr){8-10}
\cmidrule(lr){11-13}
\cmidrule(lr){14-16}
& 4/255 &   {+ET3} &   {+ET3}$^*$
& 4/255 &   {+ET3} &   {+ET3}$^*$
& 4/255 &   {+ET3} &   {+ET3}$^*$
& 4/255 &   {+ET3} &   {+ET3}$^*$
& 4/255 &   {+ET3} &   {+ET3}$^*$ \\
\arrayrulecolor{LightCyan4}\midrule

\clip
& 2.8  &   {16.1} & 4.0
& 0.9  &   {12.8} & 2.9
& 0.0  &   {6.0}  & 0.0
& 0.0  &   {14.6} & 6.2
& 0.9  &   {12.4} \smallblue{11.5} & 3.3 \smallblue{2.4} \\

\tecoatwo
& 24.9 &   {42.1} & 36.1
& 20.9 &   {31.6} & 29.8
& 5.8  &   {13.8} & 11.8
& 21.8 &   {37.8} & 29.8
& 18.3 &   {31.3} \smallblue{13.0} & 26.9 \smallblue{8.6} \\

\faretwo
& 21.8 &   {31.4} & 30.0
& 22.5 &   {33.2} & 30.2
& 5.8  &   {13.8} & 10.6
& 20.4 &   {29.8} & 23.8
& 17.6 &   {27.0} \smallblue{9.4} & 23.7 \smallblue{6.1} \\

\arrayrulecolor{cianoVeryChiaro}\midrule

\tecoafour
& 25.4 &   {45.3} & 39.9
& 22.4 &   {32.2} & 31.1
& 7.8  &   {9.8}  & 7.8
& 23.0 &   {37.4} & 33.4
& 19.7 &   {31.1} \smallblue{11.4} & 28.1 \smallblue{8.4} \\

\farefour
& 28.1 &   {34.2} & 35.1
& 28.0 &   {40.2} & 37.9
& 13.8 &   {19.8} & 19.8
& 26.6 &   {35.8} & 31.8
& 24.1 &   {32.5} \smallblue{8.4} & 31.1 \smallblue{7.0} \\

\arrayrulecolor{black}\bottomrule
\end{tabular}
}
\label{tab:vlm_adaptive_linf}
\end{table*}

\colorlet{cianoChiaro}{LightCyan3!20!white}
\colorlet{cianoVeryChiaro}{LightCyan4!80!white}

\newcolumntype{J}{>{\columncolor{cianoChiaro}}c}
\begin{table*}[t!]
\caption{\textbf{Zero-shot robustness of \etthree with \emph{$\ell_\infty$ projection} across 14 benchmark datasets in the defense-unaware setting.} Comparison of clean and robust accuracy for baseline models versus the same models augmented with \etthree. Robustness is evaluated against Auto-Attack (AA) at $\epsilon_a = 4/255$. The \etthree update is projected into the same $\ell_\infty$ ball as the attack, with radius $4/255$.}
    \centering
    \scriptsize
    \setlength{\tabcolsep}{1.8pt}
    \resizebox{\textwidth}{!}{
    \begin{tabular}{cc|>{\columncolor{lightgray}}Cc
    >{\columncolor{lightgray}}Cc
    >{\columncolor{lightgray}}Cc
    >{\columncolor{lightgray}}Cc
    >{\columncolor{lightgray}}Cc
    >{\columncolor{lightgray}}Cc
    >{\columncolor{lightgray}}Cc|cc}
    \toprule
    Model & Method & 
    \rotatebox[origin=c]{60}{ImageNet} & 
    \rotatebox[origin=c]{60}{CalTech} & 
    \rotatebox[origin=c]{60}{Cars} & 
    \rotatebox[origin=c]{60}{CIFAR10} & 
    \rotatebox[origin=c]{60}{CIFAR100} & 
    \rotatebox[origin=c]{60}{DTD} & 
    \rotatebox[origin=c]{60}{EuroSAT} & 
    \rotatebox[origin=c]{60}{FGVC} & 
    \rotatebox[origin=c]{60}{Flowers} & 
    \rotatebox[origin=c]{60}{ImageNet-R} & 
    \rotatebox[origin=c]{60}{ImageNet-S} & 
    \rotatebox[origin=c]{60}{PCAM} & 
    \rotatebox[origin=c]{60}{OxfordPets} &  
    \rotatebox[origin=c]{60}{STL-10} & 
    \rotatebox[origin=c]{60}{Avg.} & 
    \rotatebox[origin=c]{60}{\textbf{Improv.}}\\
    \midrule
    \multirow{4}{*}{\shortstack{ViT-L/14 \\(TeCoA)\\$\epsilon_t=4/255$}}
    & Base (Clean)& 74.91 & 78.36 & 37.83 & 79.61 & 50.26 & 38.03 & 22.48 & 11.76 & 38.41 & 74.35 & 54.22 & 49.95 & 76.07 & 93.44 & 55.69  & 
    \\
    & \textbf{+ ET3} (Clean) & 74.78 & 78.09 & 36.81 & 80.77 & 50.26 & 38.09 & 21.72 & 12.27 & 39.05 & 73.66 & 54.95 & 49.98 & 75.93 & 94.49 & 55.78 & \smallblue{0.09}\\
    \arrayrulecolor{cianoVeryChiaro}\cmidrule(lr){2-18}
     & Base (Robust) & 44.50 & 60.90 & 8.50 & 37.10 & 21.50 & 16.50 & 6.40 & 2.20 & 12.60 & 41.90 & 32.80 & 45.70 & 55.00 & 74.30 & 32.85  
    \\
    & \textbf{+ ET3} (Robust) & 52.00 & 64.40 & 10.70 & 51.30 & 31.60 & 21.70 & 13.40 & 4.60 & 20.10 & 48.30 & 39.80 & 51.00 & 60.60 & 82.20 & 39.41 & {\smallblue{6.56}}\\
    \arrayrulecolor{LightCyan4}\midrule
    
    \end{tabular}
    } %
    \label{tab:tab_clip_linf}
\end{table*}

\subsection{Budget fairness}\label{app:budget}
Evaluating the robustness of Test-Time Training (TTT) methods introduces unique considerations compared to adversarial training defenses. In Adversarial Training (AT), fair evaluation strictly requires matching the defense budget to the threat model's attack budget. Our base robust models adhere to this standard: they are trained with $\ell_\infty$ constraints and evaluated against $\ell_\infty$ attacks that equal or exceed the training budget. However, because \etthree is a TTT method operating actively within the \textit{inference pipeline}, its parameters serve a fundamentally different purpose. 

The primary goal of \etthree is to amplify the features of the ground-truth concept rather than to simply mask adversarial noise. Consequently, \etthree utilizes an $\ell_2$ projection, even when evaluating against $\ell_\infty$ attacks. The $\ell_2$ constraint regulates the extent of the TTT transformation to ensure clean accuracy is preserved; it is not designed to match the adversarial threat constraint. Thus, the relatively ``large'' $\ell_2$ radius utilized by \etthree should be understood as a hyperparameter of the defense pipeline rather than an unfair budget advantage. 

This inference-time transformation also changes how we evaluate perceptual quality. In adversarial purification, where external generative models reconstruct images independently of the classifier, pixel-level fidelity is a primary constraint to avoid introducing new perceptual artifacts. In contrast, \etthree performs its transformation directly within the predictive model itself. Therefore, the relevant criterion is the preservation of \emph{clean accuracy} rather than the raw visual fidelity of the image (visualizations of \etthree-transformed images are provided in the Fig. \ref{fig:logits}).
 
While our $\ell_2$ formulation is by design, we also demonstrate that \etthree's effectiveness is not merely an artifact of utilizing a larger or different norm. To confirm this, we evaluate \etthree using an $\ell_\infty$ projection strictly equal to the attack budget ($\epsilon=4/255$). Under the same settings as Table 1, the average performance across 14 datasets for TeCoA (ViT-L/14, $\epsilon=4/255$) demonstrates consistent gains: clean accuracy improves from 55.69\% to 55.78\%, and robust accuracy improves significantly from 32.85\% to 39.41\% as shown in \cref{tab:tab_clip_linf}. Finally, we also evaluate \etthree with this strict $\ell_\infty$ projection on Large Vision-Language Models (LVLMs) against adaptive attacks as shown in \cref{tab:vlm_adaptive_linf}. Across all benchmark datasets, the improvement persists when using a robust vision encoder, further confirming that genuine feature induction drives the observed robustness gains.

\subsection{Performance under Increased Attack Strength}\label{app:increased_attack_strength}

To further evaluate the resilience of \etthree, we conduct an ablation in which we systematically increase the attack strength. For this study, we focus on zero-shot evaluation and use CLIP models whose image encoders are finetuned with TECoA~\cite{mao2023understanding}, trained with perturbation budgets of $\epsilon_t = 2/255$ and $\epsilon_t = 4/255$, respectively.

We then evaluate two configurations of our defense under attacks of varying strength:  
\textbf{Default Defense:} our standard \etthree configuration with a transformation budget of $\epsilon = 5$ and $\alpha = 2.5$.  \textbf{Stronger Defense:} a configuration with increased bound for defense transformation, $\epsilon = 10$ and $\alpha = 5$. In both setting, the number of steps is set to 2.

As shown in Figure~4 of the main paper, \etthree maintains a consistent robustness advantage as the attack strength increases, with results averaged across all benchmarks. Detailed numerical results are provided in \cref{tab:attack_strength_ablation_2_step}. We observe that \etthree improves robustness across all attack strengths in a stable  manner. Furthermore, increasing the defense budget to $\epsilon = 10$ yields additional improvements, particularly under the strongest adversarial settings. These findings indicate that \etthree is not only effective under threat levels the robust model has been trained for but also scales gracefully to stronger adversaries without any additional training. We also provide the same analysis when the using only single-step \etthree defense in \cref{tab:attack_strength_ablation_1_step}.

\subsection{Impact of Label Set Choice for \etthree}\label{app:label-set_guide}
As described in the main paper, our energy-based defense, \etthree, leverages a set of class labels to guide its energy minimization process. A critical design choice is the composition of this label set. We considered two primary options: 

\textbf{A Vast, General-Purpose Label Set:} Using a comprehensive set of labels such as the $\sim$21,000 classes from the full ImageNet-21k dataset. We use these labels without any further preprocessing, treating each row as one class, for example: \{person, individual, someone, somebody, mortal, soul\} as one class. We obtain the full set from \footnote{\url{https://github.com/mosjel/ImageNet_21k_Original_OK}}.

\textbf{A Refined, curated set of labels:} Manually curating and refining a label set of such magnitude for every potential use case is impractical and outside the scope of this work. Therefore, for our experiments, we adopt the more practical approach of using the refined label set included in the evaluation dataset itself: in this case, we use the set of labels associated with the specific downstream benchmark (e.g., using all the 1,000 class labels of ImageNet-1k when evaluating on it).

Throughout this work, we evaluate both label-set choices, though we predominantly rely on the 21-k proxy ImageNet labels. 
Specifically, the LVLM experiments shown in Table 3 of the main paper, as well as the zero-shot robustness results presented in Figure 4, use the full 21k ImageNet label set. Additional results using this label set appear in \cref{tab:robust-llava,tab:imagnet_2_step_et,tab:imagnet_1k_one_step_et,tab:attack_strength_ablation_1_step,tab:attack_strength_ablation_2_step}. In contrast, Tables 1 and 2 of the main paper, along with \cref{tab:more_robust_models} and \cref{tab:tab_clip_linf}, use the label sets associated with their respective evaluation benchmarks.

Using the refined, dataset-specific label set has a negligible impact on clean accuracy while still providing comparable improvements in robustness. Overall, we observe that the 21k label set yields slightly higher robustness than the refined label set, albeit at a modest cost in clean accuracy. The extent of this drop varies across models and methods—TeCoA is minimally affected, whereas FARE is impacted more noticeably.

To better understand FARE’s clean-accuracy drop, we conducted additional analysis and used the \textbf{full 21k labels for evaluation rather than the dataset-specific labels} commonly adopted in standard zero-shot evaluation practices (without applying \etthree or any attacks). We found that classes such as “cat” and “dog” are frequently mapped to semantically related but incorrect labels, including “petfood,” “pet-food,” or “pet food.” When this occurs, the transformed image obtained after \etthree becomes more likely to be misclassified, as the \etthree amplifies features associated with these incorrect labels. Our analysis here is intentionally preliminary and does not constitute a comprehensive study of how zero-shot evaluation should be designed or assessed; a more thorough investigation lies beyond the scope of this work.

Nevertheless, in realistic deployment scenarios, large and diverse label sets are typically more appropriate for zero-shot classification. Under such conditions, we would not expect to observe the same degree of clean-accuracy degradation that appears in these controlled experimental settings.

\begin{figure*}
    \centering
    
    \begin{subfigure}{\textwidth}
        \centering
        \begin{subfigure}{0.48\textwidth}
            \centering
            \begin{overpic}[width=\linewidth]{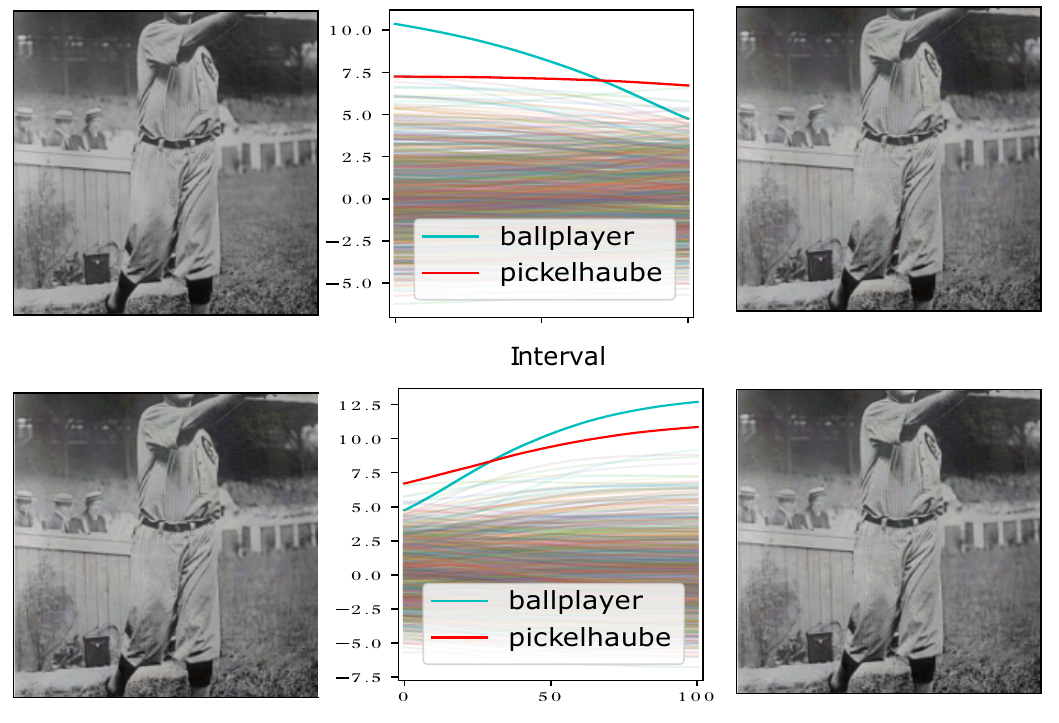}
            \put(14,68){$\bx$}
            \put(14,32){$\bxa$}
            \put(83,68.5){$\bxa$}
            \put(83,32){$\bxp$}
        \end{overpic}
        \end{subfigure}
        \hfill
        \begin{subfigure}{0.48\textwidth}
            \centering
            \begin{overpic}[width=\linewidth]{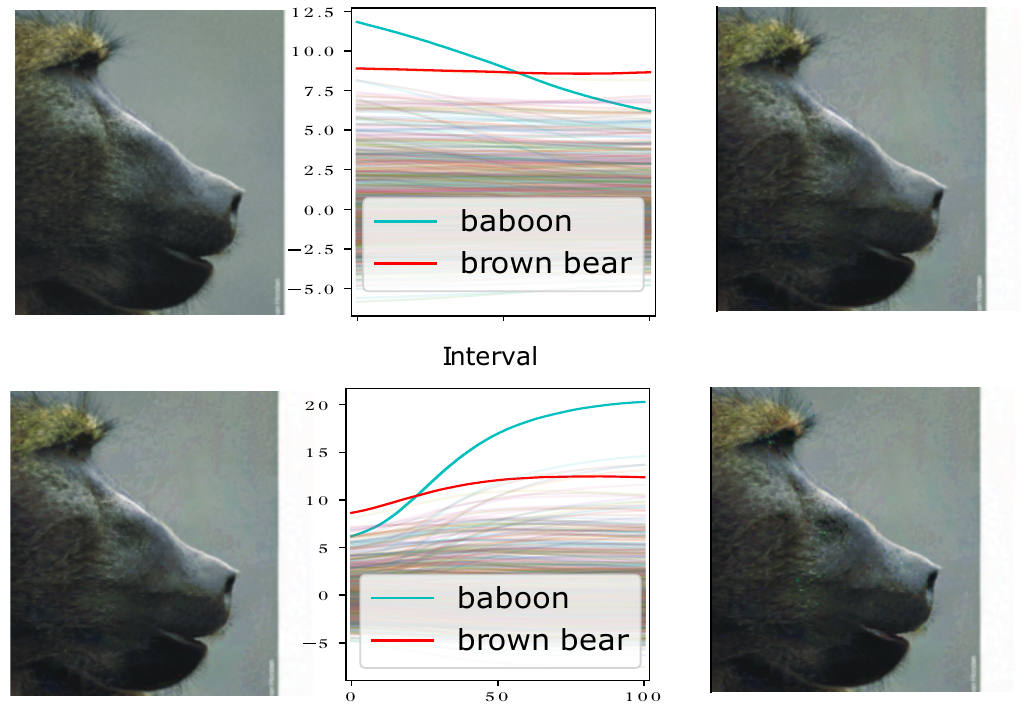}
              \put(14,70){$\bx$}
            \put(14,33){$\bxa$}
            \put(83,70){$\bxa$}
            \put(83,33){$\bxp$}
        \end{overpic}
        \end{subfigure}
        \\[1em]
        \begin{subfigure}{0.48\textwidth}
            \centering
            \begin{overpic}[width=\linewidth]{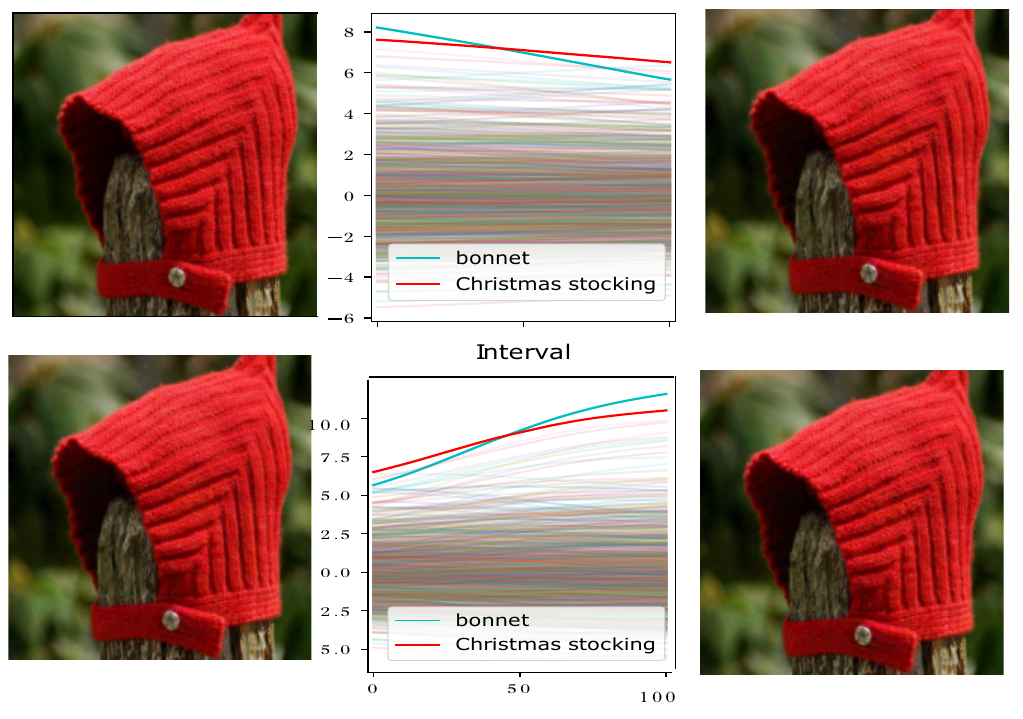}
              \put(14,70){$\bx$}
            \put(14,36){$\bxa$}
            \put(83,70){$\bxa$}
            \put(83,35){$\bxp$}
        \end{overpic}
        \end{subfigure}
        \hfill
        \begin{subfigure}{0.48\textwidth}
            \centering
          \begin{overpic}[width=\linewidth]{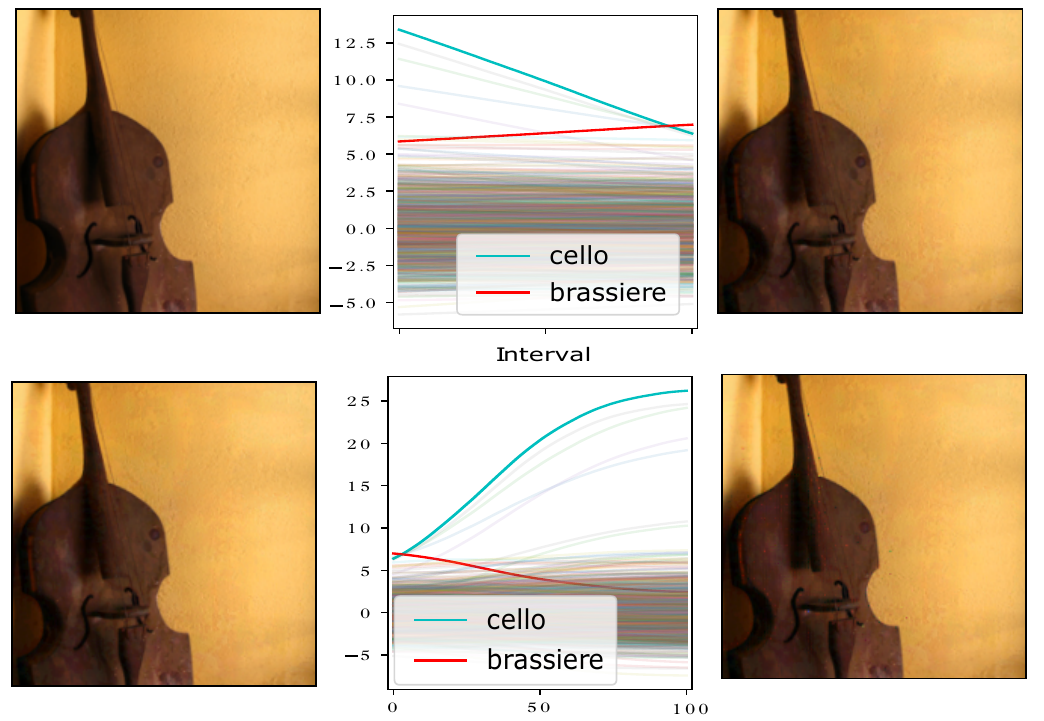}
              \put(14,69){$\bx$}
            \put(14,34){$\bxa$}
            \put(83,68){$\bxa$}
            \put(83,34){$\bxp$}
        \end{overpic}
        \end{subfigure}
    \caption{For each example, we progressively scale the perturbation from 0\% to 100\% in 100 equal steps and plot how the model’s output (logits) changes across this progression. For each individual example, the top row shows this behavior for the adversarial perturbation, while the bottom row shows the same procedure applied to the \etthree (transformation) perturbation.}

        \label{fig:logits_general}
    \end{subfigure}

    \vspace{10pt}
        
    \begin{subfigure}{\textwidth}
        \centering
        \begin{subfigure}{0.58\textwidth}
            \centering
            \begin{overpic}[width=\linewidth]{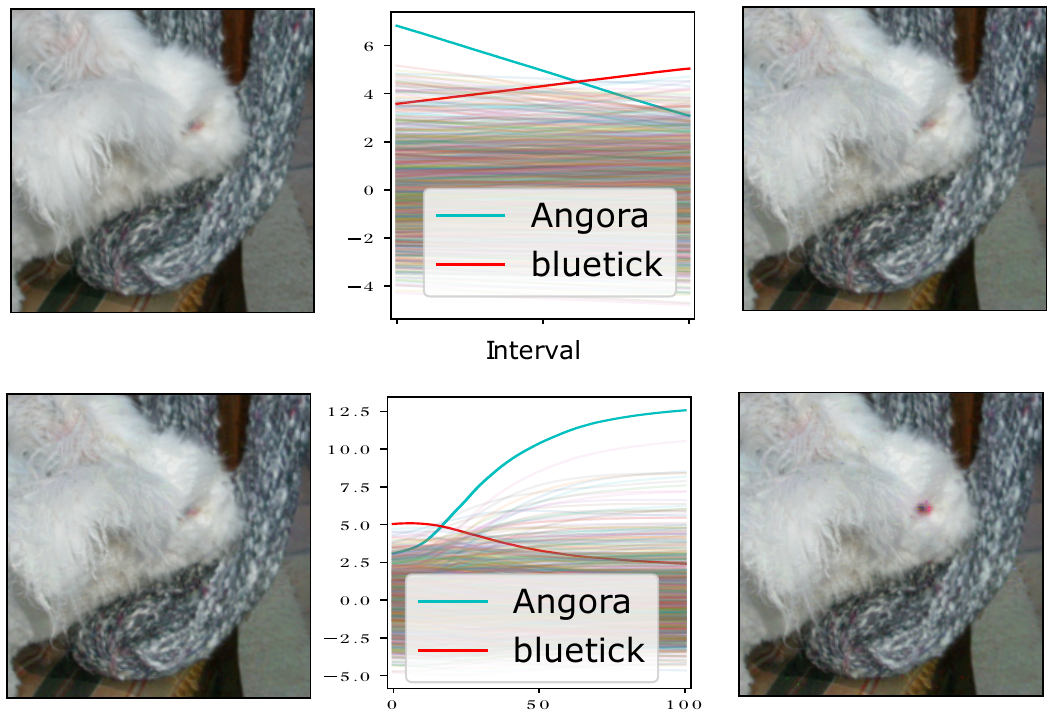}
              \put(14,69){$\bx$}
            \put(14,33){$\bxa$}
            \put(83,69.9){$\bxa$}
            \put(83,34){$\bxp$}
        \end{overpic}
        \end{subfigure}
        \hfill
        \begin{subfigure}{0.38\textwidth}
            \centering
            \raisebox{20pt}{
            \begin{overpic}[width=0.7\linewidth]{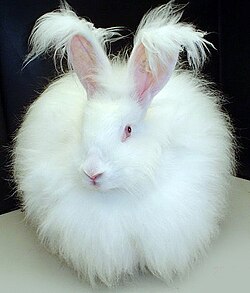}
              \put(5,103){\footnotesize{reference image of an Angora rabbit}}
        \end{overpic}
    }
        \end{subfigure}
        \caption{The left panel shows the \etthree transformation applied to an Angora bunny image that was originally misclassified as a Blue Tick. \etthree enhances salient features, most notably the pinkish eye region, that are essential for recognizing the correct class. The right panel provides a generic reference image of an Angora bunny.}

        \label{fig:logits_bunny}
    \end{subfigure}
    
    \caption{ Presenting a natural image $\bx$, and its adversarial image $\bxa$ wrongly classified by a robust classifier $f_\theta$. Given only $\bxa$ and $f_\theta$, our \etthree produces $\bxp$ which is correctly classified.}
    \label{fig:logits}
\end{figure*}

\newcommand{\bu}{\mathbf{u}}

\onecolumn
\section{Purification in Robust Networks (Proof for Theorem~4.1)} 
\label{app:proof}
\begin{proof}

For $\bv = f(\bx)$, we denote the energy loss function we wish to minimize by

\begin{align*}
    E(\bv) = - \log \sum_{i \in \{-1,1\}} e^{\bv_i}  
\end{align*}

thus its gradient with respect to the logits vector $\bv$ will be a two dimensional vector

\begin{align*}
    \nabla_{\bv} E(\bv) = -\softmax(\bv) = \left(-\frac{e^{\bv_{-1}}}{\sum_{i \in \{-1,1\}} e^{\bv_i}}, -\frac{e^{\bv_{1}}}{\sum_{i \in \{-1,1\}} e^{\bv_i}}\right) ~.
\end{align*}

Therefore, the gradient of the energy w.r.t. the input calculated during the defense \etthree is

\begin{align*}
    \frac{\partial E(f_\theta(\bx))}{\partial \bx} = -\softmax(f_\theta(\bx))^T \frac{\partial f_\theta(\bx)}{\partial \bx}~.
\end{align*}

We denote 
\[\bg_0 = \frac{\partial f_\theta(\bx)_0}{\partial \bx}, ~ \bg_1 = \frac{\partial f_\theta(\bx)_1}{\partial \bx}~,\]
and $e_0 = \softmax\left(f_\theta\left(\bx\right)\right)_0$ and $e_1 = \softmax\left(f_\theta\left(\bx\right)\right)_1$, the defense is calculating the gradient 
\begin{align*}
    \frac{\partial E(f_\theta(\bx))}{\partial \bx} =& -\softmax(f_\theta(\bx))^T \frac{\partial f_\theta(\bx)}{\partial \bx}  \\
    =& -\left( \softmax\left(f_\theta\left(\bx\right)\right)_0 \bg_0 + \softmax\left(f_\theta\left(\bx\right)\right)_1 \bg_1 \right) \\
    =& - \left( e_0 \bg_0 + e_1 \bg_1 \right)~.
\end{align*}

For the defense optimization we take a gradient descent step of a norm upper bounded by $\epsilon$, and get $\bx_p = \bx + \bz$ for 
\begin{align*}
    \bz = \alpha \left( e_0 \bg_0 + e_1 \bg_1\right)~.
\end{align*}

Since $\norm{\bz} \leq \epsilon$, the upper bound for $\alpha$ will be

\begin{align*}
    \alpha \leq \frac{\epsilon}{\norm{e_0 \bg_0 + e_1 \bg_1}}~.
\end{align*}

We remind the reader that $f_\theta(\bx)\in \reals^2$ by definition, leading to $\frac{\partial f_\theta(\bx)}{\partial \bx} \in \reals^2 \times \reals^d$, thus for readability, we denote two functions, $f_0(\bx) = f_\theta(\bx)_{0}$ and $f_1(\bx) = f_\theta(\bx)_1$, concluding that $f_\theta(\bx) = [f_0(\bx), f_1(\bx)]$. Following the local linearity assumption of $f_\theta$ in $\mathcal{B}_\epsilon(\bx)$, $f_0$ and $f_1$ are linear functions in $\mathcal{B}_\epsilon(\bx)$, and we note that for any $\bx' \in \mathcal{B}_\epsilon(\bx)$
\[f_0(\bx') = \inner{\bg_0, \bx'} + a_0~,~ f_1(\bx') = \inner{\bg_1, \bx'} + a_1 \]

for some $a_0, a_1 \in \reals$%

We are now ready to show that for an input $\bx$ with ground truth label $y_t$, the defense permutation $\bz$ leads to
\[
f_\theta(\bx+\bz)_{y_t} > f_\theta(\bx+\bz)_{\hat{y_t}}~.
\]
We look at $f(\bx+\bz)$, having

\begin{align*}
    f_0(\bx + \bz) = \inner{\bg_0, \bx+\bz} + a_0 = f_0(\bx) + \inner{\bg_0, \bz}\\
    f_1(\bx + \bz) = \inner{\bg_1, \bx+\bz} + a_1 = f_1(\bx) + \inner{\bg_1, \bz}\\
\end{align*}
We denote 
\[r_{\bx} = f(\bx)_1 - f(\bx)_{0}~.\]

We assume W.L.O.G that the truth label is $y_t = 1$. The case $y_t = 0$ is proven similarly. We have $C>1$ and
\[ C \norm{e_0 \bg_0} \leq  \norm{e_1 \bg_1}~,\]
leading to 
\begin{align*}
    \norm{\bg_0} \leq \frac{e_1}{C e_0} \norm{\bg_1}~.
\end{align*}
We show that $f_1(\bx + \bz) - f_0(\bx + \bz) > 0$. We have

\begin{align*}
    f_1(\bx + \bz) - f_0(\bx + \bz)  =& f_1(\bx) - f_0(\bx)  + \inner{\bg_1, \bz} -  \inner{\bg_0, \bz} \\
    =& r_x + \inner{\bg_1, \alpha \left( e_0 \bg_0 + e_1 \bg_1\right)} - \inner{\bg_0, \alpha \left( e_0 \bg_0 + e_1 \bg_1\right)} \\
    =& r_x + \alpha \left[e_1 \norm{\bg_1}^2 - e_0 \norm{\bg_0}^2 + \left( e_0 - e_1\right) \inner{\bg_0, \bg_1} \right] \geq \\
    \geq& r_x + \alpha \left[ e_1 \norm{\bg_1}^2 - \frac{e_1^2}{C^2 e_0} \norm{\bg_1}^2 + \left( e_0 - e_1\right) \inner{\bg_0, \bg_1}  \right]\\
    \geq& r_x + \alpha \left[ \left( e_1 - \frac{e_1^2}{C^2 e_0} \right)\norm{\bg_1}^2 + \left( e_0 - e_1\right) \inner{\bg_0, \bg_1}  \right]~,
\end{align*}

for $\alpha \leq \frac{\epsilon}{\norm{e_0 \bg_0 + e_1 \bg_1}}$. We note that 
\begin{align*}
    \norm{e_0 \bg_0 + e_1 \bg_1}^2 =& \norm{e_0 \bg_0}^2 + \norm{e_1 \bg_1}^2 + 2 \inner{e_0 \bg_0 , e_1 \bg_1} \\
    =& e_0^2 \norm{\bg_0}^2 + e_1^2 \norm{\bg_1}^2 + 2e_0e_1\inner{\bg_0, \bg_1} \\
    \leq& \left( \frac{1}{C^2} + 1 \right) \norm{e_1\bg_1}^2 + 2\inner{e_0\bg_0, e_1\bg_1}\\
    \leq& \norm{e_1\bg_1}^2 \left( \left( \frac{1}{C^2} + 1 \right) + \frac{2 \inner{e_0\bg_0, e_1\bg_1}}{\norm{e_1\bg_1}^2} \right) \\
    \leq& \norm{e_1\bg_1}^2 \left( \frac{1}{C^2} + 1 + \frac{2}{C} \right) \\
    \leq& \norm{e_1\bg_1}^2 \left(1+ \frac{1}{C}\right)^2~,
\end{align*}
where the last inequality hold since we assumed that $C \norm{e_0 \bg_0} \leq  \norm{e_1 \bg_1}$, and for any two vectors $\bu_1, \bu_2$ we have that $\frac{\inner{\bu_1, \bu_1}}{\norm{\bu_1}\norm{\bu_2}}\leq 1$. Therefore, if we take

\begin{align*}
    \alpha =  \frac{\epsilon}{e_1  \left(1+ \frac{1}{C}\right) \norm{\bg_1}} \leq \frac{\epsilon}{\norm{e_0 \bg_0 + e_1 \bg_1}}
\end{align*}

we get

\begin{align*}
    f_1(\bx + \bz) - f_0(\bx + \bz)  
    \geq& r_x + \alpha \left[ \left( e_1 - \frac{e_1^2}{C^2 e_0} \right)\norm{\bg_1}^2 + \left( e_0 - e_1\right) \inner{\bg_0, \bg_1}  \right]\\
    \geq& r_x + \frac{\epsilon}{e_1  \left(1+ \frac{1}{C}\right)  \norm{\bg_1}} \left[ \left( e_1 - \frac{e_1^2}{C^2 e_0} \right)\norm{\bg_1}^2 + \left( e_0 - e_1\right) \inner{\bg_0, \bg_1}  \right] \\
    \geq& r_x + \epsilon \norm{\bg_1} \left[ \frac{ e_1 - \frac{e_1^2}{C^2 e_0} }{e_1  \left(1+ \frac{1}{C}\right) } + \frac{\left( e_0 - e_1\right) \inner{\bg_0, \bg_1}}{e_1  \left(1+ \frac{1}{C}\right)  \norm{\bg_1}^2}  \right] \\
    \geq& r_x + \epsilon \norm{\bg_1} \left[ \frac{e_1 - \frac{e_1^2}{C^2 e_0} }{e_1  \left(1+ \frac{1}{C}\right) } - \frac{e_0 - e_1 }{e_1  \left(1+ \frac{1}{C}\right) C}  \right] \\
    \geq& r_x + \epsilon \norm{\bg_1} \left[ \frac{ 1 - \frac{e_1}{C^2 e_0} }{ 1+ \frac{1}{C} } - \frac{e_0 - e_1}{e_1  \left(1+ \frac{1}{C}\right) C}  \right] \\
    \geq& r_x + \epsilon \norm{\bg_1} \left[ \left(\frac{1}{1+ \frac{1}{C}}\right) \left(1 - \frac{e_1}{C^2 e_0}  - \frac{ e_0 - e_1}{e_1  C}  \right)\right] \\
    \geq& r_x + \epsilon \norm{\bg_1} \left[ \left(\frac{1}{1+ \frac{1}{C}}\right) \left(1 - \frac{e_1}{C^2 e_0}  - \frac{ e_0 }{e_1  C} + \frac{1}{C}  \right)\right]~.
\end{align*}

We note that 
\begin{align*}
    \frac{e_0}{e_1} = \frac{\softmax(f(\bx))_0}{\softmax(f(\bx))_1} = \frac{\frac{\exp(f(\bx)_0)}{\exp(f(\bx)_0)+\exp(f(\bx)_1)}}{\frac{\exp(f(\bx)_1)}{\exp(f(\bx)_0)+\exp(f(\bx)_1)}} = \exp(f(\bx)_0 - f(\bx)_1) = \exp(-r_x)~,
\end{align*}

and similarly $\frac{e_1}{e_0} \leq \exp(r_x)$, having 
\begin{align*}
        f_1(\bx + \bz) - f_0(\bx + \bz)  
    \geq& r_x + \epsilon \norm{\bg_1} \left[ \left(\frac{1}{1+ \frac{1}{C}}\right) \left(1 - \frac{e_1}{C^2 e_0}  - \frac{ e_0 }{e_1  C} + \frac{1}{C}  \right)\right] \\
    \geq& r_x + \epsilon \norm{\bg_1} \frac{1}{2}\left(1 - \frac{\exp(r_x)}{C^2}  - \frac{\exp(-r_x)}{C} + \frac{1}{C}  \right)\\
    \geq& r_x + \epsilon \norm{\bg_1} \frac{1}{2}\left(1 - \frac{1}{C^2} - \frac{\exp(|r_x|)}{C}  + \frac{1}{C}  \right)~,
\end{align*}
where the last inequality holds since 
\[
- \frac{\exp(r_x)}{C^2}  - \frac{\exp(-r_x)}{C} = - \frac{\exp(r_x) + C \exp(-r_x)}{C^2} \geq -\frac{\exp(-|r_x|) + C \exp(|r_x|)}{C^2} \geq -\frac{1 + C \exp(|r_x|)}{C^2}~.
\]

Therefore we have

\begin{align*}
        f_1(\bx + \bz) - f_0(\bx + \bz)  
    \geq& r_x + \epsilon \norm{\bg_1} \frac{1}{2}\left(1 - \frac{1}{C^2} - \frac{\exp(|r_x|)}{C}  + \frac{1}{C}  \right)\\
    \geq& r_x + \epsilon \norm{\bg_1} \frac{1}{2}\left(1  - \frac{\exp(|r_x|)}{C} \right)\\
    =& r_x + \frac{\epsilon \norm{\bg_1} }{2} - \frac{\exp(|r_x|)\epsilon \norm{\bg_1} }{2C} \\
    =& \frac{1}{2}  + \frac{r_x}{\epsilon \norm{\bg_1}} - \frac{\exp(|r_x|)}{2C}~.
\end{align*}
We note that $\epsilon$ should satisfy 
\begin{align*}
    2 r_x + \epsilon \norm{\bg_1} > 0\\
    \epsilon > \frac{-2 r_x}{\norm{\bg_1}}
\end{align*}
for the correct classification to be possible in $\mathcal{B}_\epsilon(\bx)$. We note that this condition adds a necessary constraint only for adversarial samples, and applies directly where $\bx$ is already correctly classified.

Finally, for
\[
C > \frac{ \exp(|r_x|) \epsilon \norm{\bg_1}}{\epsilon \norm{\bg_1} + 2 r_x}
\]

the claim follows.

\end{proof}

\end{document}